\documentclass{article}
\usepackage[T1]{fontenc}
\usepackage{iclr2027_conference,times}
\usepackage{microtype,graphicx,subcaption,booktabs}
\usepackage{amsmath,amssymb,mathtools,amsthm}
\usepackage{enumitem,algorithm,algorithmic}
\usepackage{hyperref,url}
\usepackage{xurl}
\usepackage{placeins}
\usepackage{flafter,etoolbox}
\usepackage[capitalize,noabbrev]{cleveref}
\iclrfinalcopy
\apptocmd{\maketitle}{\lhead{Preprint}}{}{\PackageError{subopt}{Cannot set the preprint header}{Check the ICLR style file.}}
\newenvironment{longalgorithm}{  \par\medskip\hrule\smallskip
  \captionsetup{type=algorithm}
}{  \smallskip\hrule\medskip
}

\DeclareMathOperator*{\argmax}{arg\,max}
\providecommand{\Beta}{\mathbf{B}}

\theoremstyle{plain}
\newtheorem{theorem}{Theorem}[section]

\newtheorem{lemma}[theorem]{Lemma}
\newtheorem{corollary}[theorem]{Corollary}
\theoremstyle{definition}
\newtheorem{definition}[theorem]{Definition}
\newtheorem{assumption}[theorem]{Assumption}
\newtheorem{claim}{Claim}[section]
\theoremstyle{remark}

\BeforeBeginEnvironment{theorem}{\begin{samepage}}
\AfterEndEnvironment{theorem}{\end{samepage}}
\allowdisplaybreaks[2]
\hypersetup{colorlinks=true,linkcolor=blue,citecolor=blue,urlcolor=blue,
 pdftitle={On Semi-Markov Suboptimality in Hierarchical Reinforcement Learning},
 pdfauthor={Bingyun Liu; Yuheng Jing}}

\title{On Semi-Markov Suboptimality in\\Hierarchical Reinforcement Learning}
\author{Bingyun Liu\thanks{Equal contribution.}\\
Institute of Automation\\
Chinese Academy of Sciences\\
\texttt{liubingyun2021@ia.ac.cn}
\And
Yuheng Jing\footnotemark[1]\kern0.4em\thanks{Corresponding author.}\\
Institute of Automation\\
Chinese Academy of Sciences\\
\texttt{jingyuheng2022@ia.ac.cn}}
\date{}
\begin{document}
\maketitle
\begin{abstract}
\IfFileExists{sections/abstract.tex}{Hierarchical reinforcement learning uses temporally extended subtasks for exploration, yet committing to their execution can restrict both deployment and policy learning. We identify and separate the resulting execution and policy suboptimality. Task and execution trees distinguish reward objectives from policy choices and decision interruption. A Unified Value Function for HRL and a four-stage Generalized Hierarchical Bellman Equation then support a common analysis of both losses. Under bounded rewards and uniform termination, we establish hierarchical policy and execution improvement results. With the remaining node policies fixed, task-subtree compatibility and node-policy optimality under the original execution mode establish when Markov execution is optimal. The resulting decomposition leads to independent execution choices for behavior, targets, and deployment. We instantiate this principle through execution improvement and one-stage or two-stage policy improvement at arbitrary hierarchy depth. Option-based and goal-conditioned experiments demonstrate complementary gains from changing execution and changing the learning target. Controlled stochastic environments show how these gains depend on stochastic transition strength and spatial structure. This framework makes execution design an explicit component of hierarchical policy optimization.
}{Hierarchical reinforcement learning separates task objectives, policy optimization, and execution.}
\end{abstract}
\section{Introduction}
\label{new:sec:introduction}

Temporal abstraction gives hierarchical reinforcement learning a structured way to explore: committing to a useful subtask produces transitions that would be difficult to discover through independent primitive actions \citep{AI99MDPsSMDPs,NIPS20EfficiencyHRL}. Exploiting the learned hierarchy poses a different question: when should the agent reconsider its choice? After a stochastic transition, another subtask can become more valuable even while the current subtask remains unfinished.

This distinction reveals two sources of loss within a fixed task hierarchy. \emph{Execution suboptimality} concerns when a learned policy reselects subtasks; \emph{policy suboptimality} concerns which execution process its training objective optimizes. Improving deployment execution and optimizing a policy for that execution are separate interventions. Here \emph{Semi-Markov Execution} (SME) runs each selected subtask until its task termination event, while \emph{Markov Execution} (ME) permits a new subtask decision at every environment frame. Their difference motivates a theory that treats task objectives and execution as independent design choices.

The main contributions of this paper are the following three:

\textbf{We first formalize the two sources of suboptimality within a unified value model.} Task and execution trees separate nested reward objectives and termination boundaries from policy choices and decision interruption. The Unified Value Function for HRL (UVFH) distinguishes the evaluated task from the node holding decision authority (Figure~\ref{new:fig:overview}). Its four-stage Generalized Hierarchical Bellman Equation (GHBE) and optimality counterpart, GHBOE, support node-policy and execution improvement with the remaining policies fixed. An optimal execution theorem under task-subtree compatibility and node-policy optimality then yields a decomposition into execution and policy suboptimality.

\begin{figure}[t]
\centering
\includegraphics[width=\textwidth]{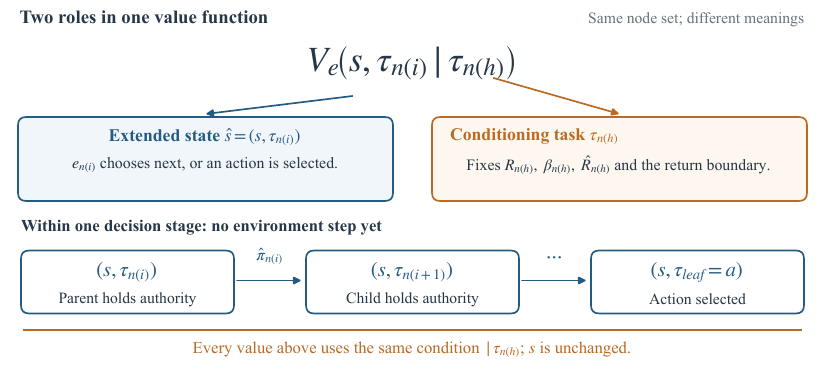}
\caption{\textbf{Decision state is not the task whose return is evaluated.} The extended-state coordinate records the current decision position; the condition specifies the reward objective and exit boundary. Decision authority moves down the tree within a frame while the environment state and conditioning task stay fixed. The two node labels can coincide, but their roles are distinct.}
\label{new:fig:overview}
\end{figure}

\textbf{Building on this decomposition, we decouple behavior, target, and deployment execution.} ESIF addresses execution suboptimality by changing how a learned policy is deployed; PSIF addresses policy suboptimality by changing the execution process optimized during learning. Both PSIF procedures apply at arbitrary depth: PSIF-2S changes behavior and target execution together through bottom-up, node-wise stages, while PSIF-1S preserves SME behavior and forms local ME targets. This design retains temporal structure for exploration while optimizing policies and execution for exploitation.

\textbf{We then test the resulting frameworks from discrete control to continuous-control extensions.} Tabular OC provides finite, discounted Markov control with the two-level compatible hierarchy of the theory, allowing us to evaluate ESIF and PSIF in a directly aligned setting. We extend this evaluation to continuous control: goal-conditioned HAC tests ESIF execution improvement, and option-based DAC tests PSIF-style policy improvement through a second-stage comparison. Finally, controlled local-slip and global-teleportation experiments vary both random-transition probability and type to analyze their distinct effects on execution and policy improvement.

\paragraph{Setting.}
We use the task-based MDP formalization $(\mathcal S,\mathcal A,T,R_\tau,\alpha_\tau,\beta_\tau)$: $T$ is the Markov transition kernel, $R_\tau$ the reward, $\alpha_\tau$ the initial-state distribution, and $\beta_\tau$ the termination function. A frame is one primitive environment interaction. A node policy $\hat\pi_\tau$ selects child tasks; successive child choices induce a primitive-action policy $\pi_\tau$. All comparisons keep the available task hierarchy explicit. Appendices~\ref{sec:2.1}, \ref{sec:3.1}, and~\ref{sec:A} provide extended preliminaries, execution examples, and the exact GridWorld construction.

\section{A unified theory of task objectives and execution}
\label{new:sec:theory}

The source of the difficulty is that a task has two different roles: it defines an optimization objective, and it induces a transition when called. Its \emph{termination function} $\beta$ determines the first role, whereas its \emph{decision interruption function} $\hat\beta$ determines the second. Binding these functions makes the optimization objective and the execution schedule change together. We separate them using task-tree and execution-tree formalization; the complete definitions and proofs appear in Appendix~\ref{sec:B}.

\paragraph{Task, policy, and execution trees.}
Let $N(i)$ denote the node indices at depth $i$, let $n(i)$ denote a node on a given path, and write $x|n(i)$ for the subtree rooted at node $x_{n(i)}$. A \emph{task tree} $\tau$ describes nested task objectives. Each task node is the quadruple
\[
\tau_{n(i)}=\left(\mathcal{T}^{\prime},\beta,R,\hat{R}\right)_{n(i)}.
\]
Here $\mathcal T'$ is its child-task set or space, $\beta$ is its termination function, $R$ is its per-frame reward, and $\hat R$ is its exit reward. If a task or one of its ancestors terminates, the highest ancestor with a termination event determines the exit boundary; the task and its descendants terminate together. Thus $\beta$ and $\hat R$ specify both how long rewards accumulate and which boundary reward applies. For options, an exit reward can be the parent's value; for goal-conditioned tasks, it can be zero. These two common frameworks therefore fit the same task-node definition.

An \emph{execution tree} $e$ specifies how the agent calls these tasks. Each execution node is the pair $(\hat\pi,\hat\beta)_{n(i)}$: $\hat\pi$ selects a child task, and $\hat\beta$ determines whether decision authority returns to an ancestor. The node policies form the \emph{policy tree} $\Pi$; the interruption functions form the \emph{interruption tree} $\Beta$. Only a node holding decision authority may choose a child. Decisions proceed down the tree until an action is selected. Changing $\Beta$ changes this schedule while leaving the task tree and policy tree fixed.

Under SME, interruption events are coupled to task termination events, with $\hat\beta=\beta$. Under ME, $\hat\beta\equiv1$: decisions restart at the root of the executing subtree every frame. This does not redefine the rewards or termination boundary of the task being evaluated. In particular, task termination and decision interruption must remain distinct even when the two mechanisms happen to use the same probabilities.

\paragraph{Unified Value Function for HRL (UVFH).}
The value must identify both \emph{whose reward is being evaluated} and \emph{who currently holds decision authority}. Define the extended state $\hat{s}\dot{=}\left(s,\tau_{n(i)}\right)$, where $e_{n(i)}$ holds decision authority, or an action has just been selected. For the evaluated task $\tau\dot{=}\tau_{n(h)}$, UVFH is defined by
\begin{equation}
\label{new:eq:uvfh}
\begin{aligned}
V_\Pi^\Beta(\hat{s}|\tau)=V_e(\hat{s}|\tau)
={}&\mathbb{E}_{s_0=s,T\sim\tau,e}\Big[\sum_{t=0}^{T-1}R_{n(h)}(s_t,a_t)\,+\,
\hat{R}_{n(h)}\Big(s_T,\tau_{n(j_{\min}-1)}\Big)\Big].
\end{aligned}
\end{equation}
Here $T$ is the frame at which the evaluated task terminates. The exit destination records the hierarchical boundary of that event. This definition unifies state values, action values, and values at intermediate decision nodes without identifying the evaluated task with the current decision node. We write $V_\Pi^{SME}$ and $V_\Pi^{ME}$ for the two special execution modes.

The two appearances of a task label in $V_e(s,\tau_{n(i)}\mid\tau_{n(h)})$ have different meanings. The \emph{decision state} $\tau_{n(i)}$ records where the next choice is made in the execution tree; it does not select the reward being accumulated. The conditioning task $\tau_{n(h)}$ supplies that reward and its exit boundary. As Figure~\ref{new:fig:overview} shows, several decision states can be visited at the same $s$ before one action is executed. Their values all evaluate the same task. Conversely, fixing the extended state and changing the conditioning task asks about a different reward objective. The indices may coincide; they are independent roles, not two different kinds of task.

\paragraph{Generalized Hierarchical Bellman Equation (GHBE).}
For an execution subtree $e|n(h')$ evaluating task $\tau_{n(h)}$, $h\geq h'$, one environment frame contains four stages. Decisions descend through $\Pi$; an action produces a transition and reward; task termination determines the exit reward; and decision interruption reallocates authority for the next frame. Let $\gamma_{n(k)}$ denote the task-transition probabilities and $\hat\gamma_{n(k)}$ the decision-transition probabilities, as defined in Appendix~\ref{sec:B.1}. If depth $l$ directly selected the preceding action, the four equations are
\begingroup
\small
\begin{align}
\label{new:eq:ghbe_decision}
V_{e|n\left(h^\prime\right)}\left(s, \tau_{n\left(i\right)}\middle|\tau_{n\left(h\right)}\right)
&=\mathbb{E}_{\tau_{n\left(i+1\right)}\sim{\hat{\pi}}_{n\left(i\right)}}\left[V_{e|n\left(h^\prime\right)}\left(s, \tau_{n\left(i+1\right)}\middle|\tau_{n(h)}\right)\right],\quad\forall n\left(i\right),\\
\label{new:eq:ghbe_interaction}
V_{e|n\left(h^\prime\right)}\left(s, \tau_{leaf}\middle|\tau_{n\left(h\right)}\right)
&=R_{n\left(h\right)}\left(s,a\right)+\mathbb{E}_{s^\prime\sim \mathrm{T}\left(s,a\right)}\left[{\bar{V}}_{e|n\left(h^\prime\right)} \left(s^\prime, \tau_{leaf}\middle|\tau_{n\left(h\right)}\right)\right],\\
\label{new:eq:ghbe_termination}
{\bar{V}}_{e|n\left(h^\prime\right)} \left(s^\prime, \tau_{leaf}\middle|\tau_{n\left(h\right)}\right)
&={\hat{V}}_{e|n\left(h^\prime\right)} \left(s^\prime, \tau_{leaf}\middle|\tau_{n\left(h\right)}\right)\gamma_{n\left(h\right)}\left(s^\prime\right)\nonumber\\
&\quad+\sum_{k=-1}^{h-1}{\gamma_{n\left(k\right)}\left(s^\prime\right){\hat{R}}_{n\left(h\right)}\left(s^\prime,\tau_{n\left(k\right)}\right)},\\
\label{new:eq:ghbe_preparation}
{\hat{V}}_{e|n\left(h^\prime\right)} \left(s^\prime, \tau_{leaf}\middle|\tau_{n\left(h\right)}\right)
&=\sum_{k=h^\prime}^{l}{{\hat{\gamma}}_{n\left(k\right)}\left(s^\prime\right)}V_{e|n\left(h^\prime\right)}\left(s^\prime, \tau_{n\left(k\right)}\middle|\tau_{n(h)}\right).
\end{align}
\endgroup
The intermediate values $\bar V$ and $\hat V$ separate task termination from execution preparation. To optimize one node policy while holding the other policies and interruption functions fixed, replace the decision-stage expectation \emph{only at that node} by a maximum. The other decision nodes retain their policy expectations. This gives the Generalized Hierarchical Bellman Optimality Equation (GHBOE), with $V^*,\bar V^*,\hat V^*$ denoting its values (Appendix~\ref{sec:B.2}). Figure~\ref{new:fig:ghboe} shows the four stages for the node-local objective $h'=h$, the case used in the suboptimality comparison below.

\begin{figure}[!htbp]
\centering
\includegraphics[width=\textwidth]{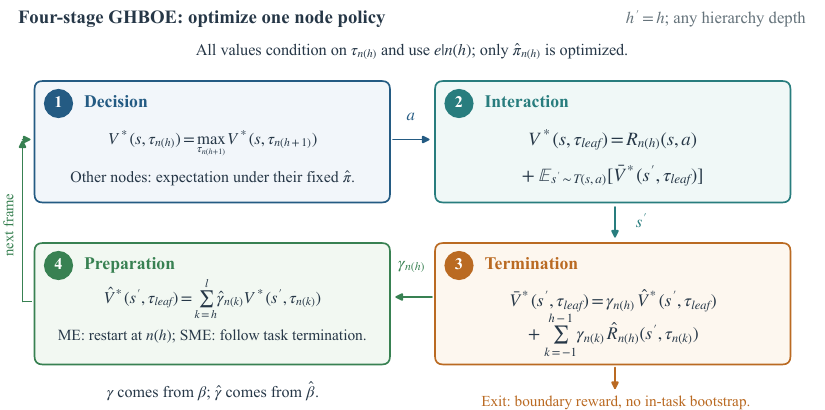}
\caption{\textbf{GHBOE separates optimization, task exit, and decision interruption.} Only $\hat\pi_{n(h)}$ is optimized; the other node policies are fixed. Termination adds the weighted survival and exit contributions, but only survival continues through preparation. The fixed condition $\tau_{n(h)}$ and execution-subtree subscript, shared by all four stages, are suppressed inside the boxes; transition probabilities are evaluated at $s'$.}
\label{new:fig:ghboe}
\end{figure}

The execution mode enters through preparation, not through a redefinition of the evaluated task's termination. Under ME, authority returns to the executing subtree's root; under SME, it follows task termination. The depth $l$ belongs to the execution node that selected the action, not to the action leaf. Only an environment interaction changes $s$ to $s'$. Thus the same task objective admits different Bellman recursions and different optimal node policies.

\paragraph{Evaluation and policy improvement.}
The evaluation lemma assumes uniform termination, $1-\gamma_{n(h)}(s)\geq\epsilon>0$, and bounded per-frame and exit rewards, with absolute values below $M$. The GHBE operator is then a contraction with factor $1-\epsilon$; it has a unique fixed point bounded by $2M/\epsilon$, and repeated evaluation converges to that fixed point (Lemma~\ref{def:B.17} and Corollary~\ref{def:B.18}). The value-comparison results below use these evaluation conditions. The policy-improvement statement is as follows.

\begin{theorem}[Hierarchical Policy Improvement]
\label{new:thm:policy_improvement}
Given execution trees $e$ and $e'$ that differ only in the node policies $\hat\pi_{n(h)}$ and $\hat\pi'_{n(h)}$ at $s$, if
\[
\mathbb{E}_{\tau_{n(h+1)}\sim\hat\pi_{n(h)}(s)}V_{e|n(h')}(s,\tau_{n(h+1)})
\leq
\mathbb{E}_{\tau_{n(h+1)}\sim\hat\pi'_{n(h)}(s)}V_{e|n(h')}(s,\tau_{n(h+1)}),
\]
then, for all $s$, $V_{e|n(h')}(s,\tau_{n(h)})\leq V_{e'|n(h')}(s,\tau_{n(h)})$, where $h\geq h'$.
\end{theorem}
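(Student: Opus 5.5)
This is the hierarchical analogue of the classical policy improvement theorem. I would prove it by showing that the GHBE operator of $e'$ is monotone, that it pushes $V_e$ upward, and then iterating that operator to its fixed point $V_{e'}$. Throughout, the conditioning task $\tau_{n(h)}$ and the subtree subscript $e|n(h')$ are fixed and suppressed. Write $\mathcal B_e$ and $\mathcal B_{e'}$ for the composite one-frame GHBE operators obtained by chaining the four stages: decision (\ref{new:eq:ghbe_decision}), interaction (\ref{new:eq:ghbe_interaction}), termination (\ref{new:eq:ghbe_termination}), and preparation (\ref{new:eq:ghbe_preparation}). These operators act on the table of values $V(s,\tau_{n(k)})$ over all decision nodes of the executing subtree.

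\textbf{Step 1: monotonicity.} Every stage is a nonnegative-weighted combination of values at the next stage plus reward terms that do not depend on $V$: expectations under $\hat\pi$, expectations under $T$, the weights $\gamma_{n(h)}$, and the weights $\hat\gamma_{n(k)}$. Hence $V\le W$ pointwise implies $\mathcal B_{e'}V\le\mathcal B_{e'}W$.

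\textbf{Step 2: one-step improvement.} Since $e$ and $e'$ share the interruption tree and all node policies except $\hat\pi_{n(h)}$, the operators $\mathcal B_e V_e$ and $\mathcal B_{e'}V_e$ agree at every entry except those whose decision stage passes through node $n(h)$.
\begin{itemize}
\item At the entry $(s,\tau_{n(h)})$ itself, the hypothesis gives $V_e(s,\tau_{n(h)})=\mathbb E_{\hat\pi_{n(h)}}V_e(s,\cdot)\le\mathbb E_{\hat\pi'_{n(h)}}V_e(s,\cdot)$, which is the $e'$ decision stage applied to $V_e$.
\item Entries at ancestors $n(i)$ with $h'\le i<h$ reach $n(h)$ only through the fixed ancestor policies. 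Their values are nonnegative mixtures that include the improved entry, so they also weakly increase.
\end{itemize}
Together these give $V_e=\mathcal B_eV_e\le\mathcal B_{e'}V_e$ pointwise. I expect this step to be the main obstacle. One must set up the operator so that a single application touches the decision descent within one frame exactly once: descent within a frame does not change $s$, so decision nodes must be handled by finite recursion down the tree rather than as separate "time steps".

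\textbf{Step 3: iteration and limit.} Applying Step 1 repeatedly gives $V_e\le\mathcal B_{e'}V_e\le\mathcal B_{e'}^2V_e\le\cdots$. Under uniform termination and bounded rewards, Lemma~\ref{def:B.17} and Corollary~\ref{def:B.18} say that $\mathcal B_{e'}$ is a $(1-\epsilon)$-contraction with unique fixed point $V_{e'}$, so $\mathcal B_{e'}^kV_e\to V_{e'}$. The inequality is preserved in the limit, which yields $V_{e|n(h')}(s,\tau_{n(h)})\le V_{e'|n(h')}(s,\tau_{n(h)})$ for all $s$, and indeed for every decision node of the subtree.

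The phrase "at $s$" in the statement should be read as the hypothesis holding at every state where the two policies differ; at states where they coincide, the comparison is an equality. If the lemma's contraction is stated per frame and not for the composite operator, I would first check that composing the four stages yields the same modulus $1-\epsilon$. The only discount comes from the survival factor $\gamma_{n(h)}\le1-\epsilon$ in (\ref{new:eq:ghbe_termination}), while the decision and preparation stages are averaging maps with weights summing to at most one, so the composite modulus should indeed be $1-\epsilon$.
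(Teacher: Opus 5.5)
Your proposal is correct and follows essentially the same route as the paper. The paper shows $\mathcal F^{e,\tau}_{n(h')}V_{e|n(h')}\le\mathcal F^{e',\tau}_{n(h')}V_{e|n(h')}$ entrywise, with equality away from $n(h)$ and its ancestors and a difference of $\mathbb P[n(\hat h)\to n(h)]$ times the hypothesized gap on them, and then iterates $\mathcal F^{e',\tau}_{n(h')}$ to its fixed point using Lemma~\ref{def:B.17}. Your explicit monotonicity step is what the paper uses implicitly in $\mathcal F^{e'}V_e\le(\mathcal F^{e'})^kV_e$; also, the appendix fixes the evaluated task as $\tau_{n(h')}$ rather than $\tau_{n(h)}$, which does not change the argument because it only relies on the survival bound $\gamma\le 1-\epsilon$.
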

The proof compares the two GHBE operators under the old value and propagates their ordering to their fixed points. The SME and ME specializations give policy-improvement results for their respective optimization objectives; the corresponding policy-iteration arguments follow Theorem~\ref{def:B.19} and Corollaries~\ref{def:B.20}--\ref{def:B.21} in the appendix.

\paragraph{Execution improvement.}
Keeping $\Pi$ fixed, the preparation-stage value also supplies an improvement criterion for the interruption tree.
\begin{theorem}[Hierarchical Execution Improvement]
\label{new:thm:execution_improvement}
Given execution trees $e$ and $e'$ differing only in $\hat\beta_{n(h)}$ and $\hat\beta'_{n(h)}$ at $s'$, compute $\hat\gamma,\hat\gamma'$ from their interruption functions. If, for every $n(l)$,
\[
\begin{aligned}
&\sum_{k=h'}^l\hat\gamma_{n(k)}(s')V_{e|n(h')}(s',\tau_{n(k)}|\tau_{n(h')})\\
&\hspace{1cm}\leq\sum_{k=h'}^l\hat\gamma'_{n(k)}(s')V_{e|n(h')}(s',\tau_{n(k)}|\tau_{n(h')}),
\end{aligned}
\]
then, for all $s$, $V_{e|n(h)}(s,\tau_{n(h)})\leq V_{e'|n(h)}(s,\tau_{n(h)})$. Here $l\geq h\geq h'$, and $l$ is the depth of the task that directly called the previous action.
\end{theorem}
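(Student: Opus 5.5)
The plan follows the same route as the proof of Theorem~\ref{new:thm:policy_improvement}: compare the GHBE operators of $e$ and $e'$ at the old value, then carry the resulting one-step inequality to the fixed points by monotonicity and contraction. Let $\mathcal{B}_e$ and $\mathcal{B}_{e'}$ be the GHBE operators (the four stages \eqref{new:eq:ghbe_decision}--\eqref{new:eq:ghbe_preparation} composed over one frame) on the subtree $e|n(h)$ evaluating $\tau_{n(h)}$. The trees $e$ and $e'$ share $\Pi$, the task tree, and every interruption function except $\hat\beta_{n(h)}$ at $s'$. Hence the decision, interaction and termination stages are identical for the two operators, and so are the weights $\gamma_{n(k)}$, which depend only on the task tree. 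The operators can differ only in the preparation stage, through the decision-transition weights $\hat\gamma$ versus $\hat\gamma'$ at $s'$.

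\textbf{Step 1 (one-step comparison).} Take $V=V_{e|n(h)}$, the fixed point of $\mathcal{B}_e$ from Lemma~\ref{def:B.17}. At the state $s'$ where the interruption functions differ, I apply the hypothesis for each possible last-acting depth $l$. This should give $\hat V_{e}(s',\tau_{leaf})\le \hat V'(s',\tau_{leaf})$, where $\hat V'$ is the preparation value formed with $\hat\gamma'$ but the same $V$.

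The one delicate point is that the hypothesis is stated with values of the larger subtree $e|n(h')$ evaluated against $\tau_{n(h')}$. The conclusion, by contrast, concerns $e|n(h)$ evaluating $\tau_{n(h)}$. I would reconcile the two as follows.
\begin{itemize}[leftmargin=*]
\item First take $h'=h$, which is the node-local case emphasised in Figure~\ref{new:fig:ghboe}.
\item For $h'<h$, argue that inside $e|n(h)$ authority never passes above depth $h$. The weights on depths $k<h$ are therefore absorbed into the $k=h$ term, because the preparation stage restarts at the subtree root.
\item Use the fact that the evaluated objective is nested: while $\tau_{n(h)}$ is alive, the rewards of the descendant decision states agree with the ancestor's continuation. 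The hypothesis then restricts to the required inequality on the terms $k\ge h$.
\end{itemize}
I expect this bookkeeping step to be the main obstacle. It may require reading the hypothesis with the condition $\tau_{n(h)}$, which I would adopt if the indices do not match exactly.

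\textbf{Step 2 (propagate through the remaining stages).} The termination stage \eqref{new:eq:ghbe_termination} multiplies $\hat V$ by the nonnegative weight $\gamma_{n(h)}(s')$ and adds exit terms that are the same for both operators. The interaction stage \eqref{new:eq:ghbe_interaction} takes a nonnegative expectation over $s'$ and adds the same reward. The decision stage \eqref{new:eq:ghbe_decision} averages with the common $\hat\pi$. All three are monotone, so the inequality passes through them and gives $V=\mathcal{B}_e V\le \mathcal{B}_{e'}V$ pointwise at every extended state.

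\textbf{Step 3 (fixed-point argument).} Under uniform termination and bounded rewards, Lemma~\ref{def:B.17} makes $\mathcal{B}_{e'}$ a monotone $(1-\epsilon)$-contraction. Induction then gives $V\le\mathcal{B}_{e'}V\le\mathcal{B}_{e'}^2V\le\cdots$. By Corollary~\ref{def:B.18}, $\mathcal{B}_{e'}^nV\to V_{e'|n(h)}$, so $V_{e|n(h)}(s,\tau_{n(h)})\le V_{e'|n(h)}(s,\tau_{n(h)})$ for all $s$.

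Monotonicity of $\mathcal{B}_{e'}$ needs one remark. The preparation weights $\hat\gamma'$ are a probability distribution over the depths $h,\dots,l$, and every other stage combines values with nonnegative weights. Consequently, if $V\le W$ pointwise then $\mathcal{B}_{e'}V\le\mathcal{B}_{e'}W$.
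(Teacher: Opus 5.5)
Your three steps follow the paper's proof. You compare the two GHBE operators at the old fixed point, push the preparation-stage inequality through the termination, interaction and decision stages by monotonicity, then iterate the new operator and pass to the limit by contraction. Your explicit monotonicity remark supplies what the paper uses tacitly when it writes $\mathcal{F}^{e'}V\le(\mathcal{F}^{e'})^kV$.

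\textbf{The gap.} It lies in the step you single out: reconciling a level-$h'$ hypothesis with a level-$h$ operator when $h'<h$.
\begin{itemize}
\item Your third bullet assumes the objectives are nested, so that values conditioned on $\tau_{n(h')}$ control values conditioned on $\tau_{n(h)}$. The task tree provides no such property. Every node carries its own $R$ and $\hat R$; in the goal-based instance, for example, the subgoal rewards are feudal with zero exit reward. So an inequality between preparation values of $\tau_{n(h')}$'s objective says nothing about $\tau_{n(h)}$'s objective. This independence is exactly what the paper's multilevel counterexample exploits.
\item Your second bullet is also inaccurate. For $k\ge h$ one has $\hat\gamma^{n(h')}_{n(k)}=\prod_{j=h'+1}^{h}(1-\hat\beta_{n(j)})\,\hat\gamma^{n(h)}_{n(k)}$. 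The subtree-$h$ weights are therefore a conditional distribution, not the subtree-$h'$ weights with the mass below $h$ moved onto $k=h$.
\item Re-reading the hypothesis with condition $\tau_{n(h)}$ changes the theorem rather than proving it.
\end{itemize}

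\textbf{The repair.} Stay at level $h'$. Note first that on $e|n(h)$ the root generates no interruption events. Hence $\hat\beta_{n(h)}$ does not enter $\hat\gamma^{n(h)}$, and your $\mathcal{B}_e$ and $\mathcal{B}_{e'}$ are identical; the content of the theorem lives on the larger subtree. Run Steps 1--3 with $\mathcal{F}^{e,\tau}_{n(h')}$ and $\mathcal{F}^{e',\tau}_{n(h')}$, i.e.\ on $e|n(h')$ evaluating $\tau_{n(h')}$. There:
\begin{itemize}
\item the hypothesis is verbatim the preparation-stage comparison at $V_{e|n(h')}$;
\item the termination stage multiplies it by $\gamma_{n(h')}(s')\ge 0$ and adds identical exit terms;
\item the limit argument gives $V_{e|n(h')}(\cdot\,|\,\tau_{n(h')})\le V_{e'|n(h')}(\cdot\,|\,\tau_{n(h')})$.
\end{itemize}
This is what the paper's proof actually establishes: it works with $\mathcal{F}^{e,\tau}_{n(h')}$ throughout, and its final display is at level $h'$. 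It coincides with the stated conclusion when $h=h'$, which is the case used in the Optimal Execution Theorem. No transfer of the inequality to a different conditioning task is needed, and in general none is possible.
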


\begin{assumption}[Task Subtree Compatibility]
\label{new:ass:compatibility}
The task subtree $\tau|n(h)$ has only two levels, or every descendant task $\tau_{n(h')}$, $h'>h$, has child-task set $\mathcal T'_{\tau_{n(h')}}\subseteq\mathcal T'_{\tau_{n(h)}}$.
\end{assumption}

\begin{theorem}[Optimal Execution]
\label{new:thm:optimal_execution}
Under Assumption~\ref{new:ass:compatibility}, if node policy $\hat\pi_{n(h)}$ is optimal for the execution mode induced by an arbitrary interruption subtree $\Beta|n(h)$, then
\[
V_{\Pi|n(h)}^{\Beta|n(h)}\leq V_{\Pi|n(h)}^{ME}.
\]
That is, ME is an optimal execution mode.
\end{theorem}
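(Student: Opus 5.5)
The plan is to obtain the comparison $V_{\Pi|n(h)}^{\Beta|n(h)}\le V_{\Pi|n(h)}^{ME}$ in two moves. First, Theorem~\ref{new:thm:execution_improvement} switches the interruption subtree from $\Beta|n(h)$ to the ME subtree $\hat\beta\equiv1$ while $\Pi$ stays fixed. Second, the GHBOE fixed-point characterization (Appendix~\ref{sec:B.2}) handles the fact that $\hat\pi_{n(h)}$ is optimal under $\Beta|n(h)$ but not necessarily under ME. We write $V^{\Beta}$ for the value of $e|n(h)$ evaluating $\tau_{n(h)}$, so $h'=h$ throughout.

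\textbf{Step 1: the root decision value dominates every intermediate decision value.} Because $\hat\pi_{n(h)}$ is optimal for the mode induced by $\Beta|n(h)$, $V^{\Beta}$ satisfies the GHBOE with the maximum at node $n(h)$: $V^{\Beta}(s',\tau_{n(h)}|\tau_{n(h)})=\max_{\tau'\in\mathcal T'_{\tau_{n(h)}}}V^{\Beta}(s',\tau'|\tau_{n(h)})$. We claim that for every $k$ with $h\le k\le l$,
\[
V^{\Beta}(s',\tau_{n(k)}|\tau_{n(h)})\le V^{\Beta}(s',\tau_{n(h)}|\tau_{n(h)}).
\]
This is where Assumption~\ref{new:ass:compatibility} enters. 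In the two-level case, $k\in\{h,h+1\}$ and $\tau_{n(h+1)}$ is itself a child of $\tau_{n(h)}$, so the bound follows immediately from the max. In the nested case, the decision-stage equation~\eqref{new:eq:ghbe_decision} expresses $V^{\Beta}(s',\tau_{n(k)}|\tau_{n(h)})$ as an expectation over children of $\tau_{n(k)}$. Since $\mathcal T'_{\tau_{n(k)}}\subseteq\mathcal T'_{\tau_{n(h)}}$, every such child is also an option for the root, so it is bounded by the root's max.

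\textbf{Step 2: apply execution improvement.} Under ME, $\hat\gamma'_{n(h)}(s')=1$ and all other $\hat\gamma'_{n(k)}=0$. The preparation-stage values $\hat\gamma$ form a probability distribution over $k\in\{h,\dots,l\}$. Hence Step~1 yields, for every $n(l)$,
\[
\sum_{k=h}^l\hat\gamma_{n(k)}(s')V^{\Beta}(s',\tau_{n(k)})\le V^{\Beta}(s',\tau_{n(h)}),
\]
which is the hypothesis of Theorem~\ref{new:thm:execution_improvement}. That theorem changes one node's $\hat\beta$ at a time, so we apply it node by node through the subtree, or verify that its operator-comparison proof covers all nodes at once. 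In either case we obtain $V^{\Beta}_{\Pi}\le V^{ME}_{\Pi'}$, where the policy is unchanged.

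\textbf{Step 3, the main obstacle: the chain of node-by-node replacements.} After the first node's interruption is changed, the intermediate values are no longer those of the original GHBOE fixed point, and the inequality of Step~1 must be re-established at each stage. The cleaner route is a direct operator argument. Let $\mathcal B^{ME}$ be the ME GHBE operator with $\Pi$ fixed. Using Steps~1--2 stage by stage through \eqref{new:eq:ghbe_decision}--\eqref{new:eq:ghbe_preparation}, we show $V^{\Beta}\le\mathcal B^{ME}V^{\Beta}$. The $\hat\gamma$-mixture in the preparation stage is replaced by the root value, which is at least as large. Monotonicity and contraction of $\mathcal B^{ME}$ (Lemma~\ref{def:B.17}, Corollary~\ref{def:B.18}) then give $V^{\Beta}\le\lim_m(\mathcal B^{ME})^mV^{\Beta}=V^{ME}_{\Pi}$.

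One subtlety remains. The preparation stage is multiplied by the survival weight $\gamma_{n(h)}(s')\ge0$ in \eqref{new:eq:ghbe_termination}, and the exit terms are identical in the two modes. The inequality is therefore preserved through termination, and the result follows.
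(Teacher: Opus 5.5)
Your proposal is correct and follows the paper's proof of Theorem~\ref{def:B.23}: optimality of $\hat\pi_{n(h)}$ plus compatibility bounds each $V^{\Beta}(s',\tau_{n(k)})$ by the root value, so every $\hat\gamma$-mixture is dominated by the ME choice $\hat\gamma'_{n(h)}=1$, and execution improvement then gives the comparison. Your Step~3 operator comparison ($V^{\Beta}=\mathcal{F}^{\Beta}V^{\Beta}\le\mathcal{F}^{\mathrm{ME}}V^{\Beta}$, then monotone iteration to the ME fixed point) is exactly the proof of Theorem~\ref{def:B.22}; the paper applies that theorem to the whole-subtree change without remarking that its statement covers only a single node, so your extra care here is sound but does not change the route.
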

The theorem compares execution modes for a node policy optimal under its original mode, with the remaining node policies fixed. Compatibility matters because descendants can optimize rewards different from their ancestors' rewards. The multilevel counterexample (Figure~\ref{fig:Counterexample} in the appendix) shows that when compatibility fails, reselecting from above can reduce the evaluated task's value.

\paragraph{The two gaps.}
Fix the task tree and every node policy except $\hat\pi_{n(h)}$. Let $\Pi^-$ use the node policy obtained from HBOE-SMDP, and let $\Pi^*$ use the one obtained from HBOE-MDP. Denote the four returns by
\[
\begin{array}{c|cc}
&\mathrm{SME}&\mathrm{ME}\\\hline
\Pi^-&J_{\mathcal C}&J_{\mathcal B}\\
\Pi^*&J_{\mathcal D}&J_{\mathcal A}
\end{array}
\]
The rows distinguish node-policy solutions; the columns distinguish execution modes.
\begin{theorem}[Generalized SMDP Suboptimality]
\label{new:thm:generalized_suboptimality}
In an HRL problem with a Markov environment, with the task tree and all other node policies fixed, the HBOE-SMDP and HBOE-MDP solutions induce the SMDP-optimal and MDP-optimal node policies, respectively. Their returns satisfy $J_{\mathcal A}\geq J_{\mathcal B}$ and $J_{\mathcal C}\geq J_{\mathcal D}$. Under Assumption~\ref{new:ass:compatibility}, $J_{\mathcal B}\geq J_{\mathcal C}$ also holds.
\end{theorem}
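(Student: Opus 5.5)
The argument splits into three inequalities, each matched to an earlier result. Throughout, the task tree and all node policies other than $\hat\pi_{n(h)}$ are held fixed. The only objects that vary are the node policy at $n(h)$, either $\hat\pi^-_{n(h)}$ from HBOE-SMDP or $\hat\pi^*_{n(h)}$ from HBOE-MDP, and the interruption tree, either $\Beta=\beta$ (SME) or $\hat\beta\equiv1$ (ME). The returns $J_{\mathcal A},\dots,J_{\mathcal D}$ are the corresponding UVFH values $V_\Pi^{\Beta}(s_0,\tau_{n(h)}\mid\tau_{n(h)})$, averaged over the initial distribution. By Lemma~\ref{def:B.17} and Corollary~\ref{def:B.18}, each is the unique bounded fixed point of its GHBE, so comparisons between them are legitimate.

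\textbf{Step 1: the first claim, that the HBOE solutions induce the SMDP-optimal and MDP-optimal node policies.} Specialize the GHBOE to $h'=h$ with preparation weights $\hat\gamma$ taken from $\hat\beta=\beta$ for HBOE-SMDP and from $\hat\beta\equiv1$ for HBOE-MDP. Each specialization defines a $(1-\epsilon)$-contraction, with the maximum taken only at $n(h)$. Hence each has a unique fixed point $V^*$, and the greedy policy with respect to it attains $V^*$. Moreover, any other policy $\hat\pi$ satisfies $V_\Pi\le V^*$, because $T^{\hat\pi}V\le T^*V$ pointwise and monotonicity of contractions gives the ordering of fixed points. This is the policy-iteration argument of Theorem~\ref{def:B.19} and Corollaries~\ref{def:B.20}--\ref{def:B.21}.

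\textbf{Step 2: $J_{\mathcal A}\ge J_{\mathcal B}$ and $J_{\mathcal C}\ge J_{\mathcal D}$.} Both follow from Step 1. In the ME column, $\hat\pi^*_{n(h)}$ is optimal among all node policies at $n(h)$ under ME, and $\hat\pi^-_{n(h)}$ is one such policy, so $J_{\mathcal A}=V^{ME}_{\Pi^*}\ge V^{ME}_{\Pi^-}=J_{\mathcal B}$. In the SME column the same reasoning, with the roles of the two policies swapped, gives $J_{\mathcal C}=V^{SME}_{\Pi^-}\ge V^{SME}_{\Pi^*}=J_{\mathcal D}$. The one point to verify is that the comparison class is the same in both columns: stationary node policies at $n(h)$ over the extended state, with all other nodes fixed. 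This holds because the ME and SME recursions share the decision stage and differ only in preparation.

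\textbf{Step 3: $J_{\mathcal B}\ge J_{\mathcal C}$ under Assumption~\ref{new:ass:compatibility}.} This is an instance of Theorem~\ref{new:thm:optimal_execution}. Take $\Beta|n(h)$ to be the SME interruption subtree. Then $\hat\pi^-_{n(h)}$ is optimal for the execution mode induced by this subtree, so the theorem gives $V^{SME}_{\Pi^-|n(h)}\le V^{ME}_{\Pi^-|n(h)}$, that is, $J_{\mathcal C}\le J_{\mathcal B}$. Whether the root-level return equals the subtree value depends on the setup. If $n(h)$ is the root, there is nothing to check. Otherwise I would propagate the pointwise inequality upward through the GHBE, using monotonicity of the ancestors' fixed-policy operators as in the proof of Theorem~\ref{new:thm:policy_improvement}.

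The main obstacle is Step 3's reliance on compatibility and on matching hypotheses exactly. Theorem~\ref{new:thm:optimal_execution} requires optimality under the \emph{original} mode, which is precisely what HBOE-SMDP supplies. It gives no such guarantee for $\Pi^*$, which is why the chain stops at $J_{\mathcal B}\ge J_{\mathcal C}$ and does not assert $J_{\mathcal D}$ versus $J_{\mathcal B}$. If the direct invocation turns out to be too coarse, I would fall back on Theorem~\ref{new:thm:execution_improvement}. Compatibility lets any child choice available to a descendant be reproduced by $n(h)$. Together with optimality of $\hat\pi^-_{n(h)}$ in the SME GHBOE, this gives $V(s',\tau_{n(k)})\le V(s',\tau_{n(h)})$ for all $k\ge h$, so moving all interruption mass to $n(h)$, as ME does, weakly increases the preparation value, and the theorem then yields the ordering.
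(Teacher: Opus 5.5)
Your proposal is correct. Steps 2 and 3 follow the paper's argument. The paper also reads off $J_{\mathcal A}\ge J_{\mathcal B}$ and $J_{\mathcal C}\ge J_{\mathcal D}$ from the optimality of the two node policies at $h'=h$, averaged over $\alpha_{n(h)}$. It likewise gets $J_{\mathcal B}\ge J_{\mathcal C}$ by applying the Optimal Execution Theorem to $\Pi^-$ with the SME interruption subtree, and your fallback is exactly that theorem's proof.

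Step 1, however, is not the policy-iteration argument you cite. The paper runs hierarchical policy iteration. Greedy updates at $n(h)$ are monotone by Theorem~\ref{def:B.19} and bounded by Lemma~\ref{def:B.17}, so they converge to a GHBOE solution. Uniqueness is then argued by restarting the iteration from the greedy policy of an arbitrary solution.

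You instead take the Bellman-optimality route. The GHBOE operator with the maximum at $n(h)$ is itself a $(1-\epsilon)$-contraction. Then $T^{\hat\pi}V^*\le T^*V^*=V^*$, together with monotonicity of $T^{\hat\pi}$, gives $V^{\hat\pi}\le V^*$ for every node policy. Monotonicity holds because the weights are nonnegative; contraction alone would not give it. This route needs a lemma the paper never states: contraction of the optimality operator. It holds because $|\max f-\max g|\le\max|f-g|$, and inside each branch of the maximum $V$ enters only through the factor $\gamma_{n(h)}(s')\le1-\epsilon$.

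In return your route gives existence, uniqueness, and optimality over all node policies in one step. The paper's uniqueness step assumes that policy iteration from every initial tree reaches the same limit. Monotone bounded convergence alone does not provide that, and your contraction argument does. The paper's route, for its part, is constructive and reuses the improvement theorem on which its algorithms rely.

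Finally, drop the suggestion of propagating the Step~3 inequality upward to a root-level return. In your own setup, as in the paper's, all four returns are taken in $\tau_{n(h)}$ with $s\sim\alpha_{n(h)}$, so nothing needs propagating. The propagation would also not be valid in general. An ancestor's GHBE accumulates $R_{n(h')}$ and $\hat R_{n(h')}$, so an ordering of values conditioned on $\tau_{n(h)}$ does not transfer to it.
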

Two-level HRL satisfies the stated compatibility alternative, giving the narrow ordering $J_{\mathcal A}\geq J_{\mathcal B}\geq J_{\mathcal C}\geq J_{\mathcal D}$. The resulting decomposition is
\begin{equation}
\label{new:eq:gap_decomposition}
\Delta J^{SMDP}=\Delta J^{E}+\Delta J^{P}=(J_{\mathcal{A}} - J_{\mathcal{B}}) + (J_{\mathcal{B}} - J_{\mathcal{C}}).
\end{equation}
Policy suboptimality is the loss $J_{\mathcal A}-J_{\mathcal B}$ from optimizing for SME and then deploying with ME; execution suboptimality is the loss $J_{\mathcal B}-J_{\mathcal C}$ from retaining SME at deployment. The decomposition separates execution design from policy optimization within a fixed task hierarchy.

\paragraph{A strict separation.}
The exact GridWorld construction in Appendix~\ref{sec:A} makes both losses visible. With a reward of $-1$ per frame, its four undiscounted returns are $J_{\mathcal A}=-8$, $J_{\mathcal B}=-11$, $J_{\mathcal C}=-12$, and $J_{\mathcal D}=-14$. Reselecting subtasks with the SME-trained node policy saves one frame in expectation; optimizing the node policy for ME saves three more. Executing that ME-optimal node policy with SME is worse than the original SME solution. Direct finite-trajectory calculations establish this undiscounted example; the general evaluation argument above uses uniform termination. The example separates the gains from changing execution and retraining for that execution mode.

\section{Decoupling behavior, targets, and deployment}
\label{new:sec:frameworks}

\paragraph{The design principle: three independently specified execution modes.}
The framework separates three roles: behavior determines which trajectories are collected; the target determines which execution process is optimized; deployment determines how the learned policy tree acts. The task tree, including termination $\beta$, fixes the reward objective and exit boundaries; interruption functions $\hat\beta$ specify execution separately for each role. This separation combines temporally extended exploration under SME behavior with frame-by-frame decisions in ME targets and deployment. Applying ME to behavior can interrupt a learning subtask before its reward boundary and leave subsequent actions to another task (Figure~\ref{fig:ME_offline}). The design therefore addresses the two sources of suboptimality separately: deployment execution addresses the execution gap, and the training target addresses the policy gap.

\begin{figure}[!htbp]
\centering
\includegraphics[width=\textwidth]{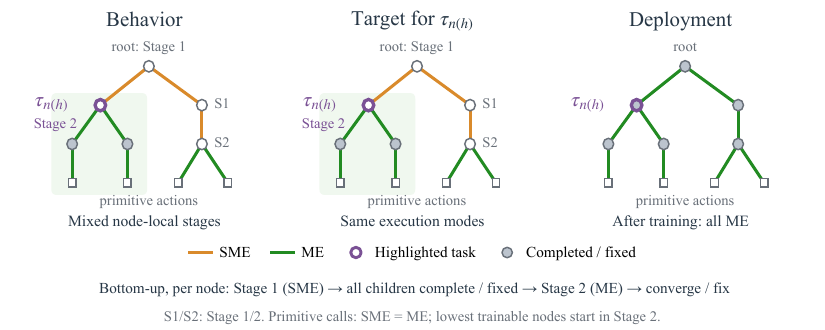}
\caption{\textbf{PSIF-2S matches behavior and target execution as training progresses up the tree.} In this mixed-stage snapshot, the highlighted node is in Stage~2 and its children are fixed, while its parent remains in Stage~1. Each caller's stage sets its outgoing calls. Deployment uses ME after training is complete.}
\label{new:fig:psif2s_modes}
\par\medskip
\includegraphics[width=\textwidth]{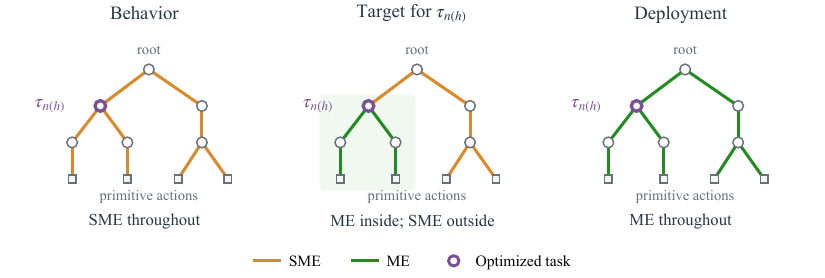}
\caption{\textbf{One policy tree, three execution specifications.} A multilevel PSIF-1S example: behavior uses SME; the target for $\tau_{n(h)}$ uses ME inside its subtree and SME outside, including the incoming call; deployment uses ME throughout. The optimized task can lie at any depth.}
\label{new:fig:multilevel_modes}
\end{figure}
Figures~\ref{new:fig:psif2s_modes} and~\ref{new:fig:multilevel_modes} compare the same hierarchy: PSIF-2S changes behavior and target execution together, node by node; PSIF-1S separates them throughout training. Appendix~\ref{new:app:thesis_expansion} gives the full execution-mode summary (Table~\ref{new:tab:design}) and training-flow comparison (Figure~\ref{new:fig:psif_flows}).

\paragraph{ESIF: change deployment execution.}
The Execution Suboptimality Improvement Framework keeps SME training and deploys the learned policy tree with ME. Its idealized comparison is $J_{\mathcal C}\to J_{\mathcal B}$: the node policies retain their SME optimization objectives. The Optimal Execution Theorem establishes the ME guarantee under its optimal-node-policy and compatibility conditions.

\paragraph{PSIF-2S: a bottom-up, node-wise training schedule.}
The Two-Stage Policy Suboptimality Improvement Framework assigns two training stages to \emph{every task node}, at any depth. In Stage 1, a node calls its children with SME and optimizes the corresponding SME objective. When all its children have completed Stage 2 and their policies are fixed, it enters its own Stage 2: it calls the children with ME and optimizes its MDP node policy on-policy. Only after this node policy has approximately converged is it fixed. The lowest task nodes, which call primitive actions directly, start in Stage 2 because their SME and ME objectives coincide. Higher nodes can therefore remain in Stage 1 while lower nodes train in Stage 2 or have already finished. The schedule propagates upwards until every node has completed Stage 2; the resulting tree is deployed with ME. In the OC instance, completing option learning permits the root to begin ME training with fixed option policies and termination functions.

\paragraph{PSIF-1S: local ME targets under global SME behavior.}
The One-Stage Policy Suboptimality Improvement Framework calls every task with SME during data collection. When optimizing $\hat\pi_{n(h)}$, its target execution uses ME \emph{inside that task's subtree}, while the execution outside remains SME. GHBE supplies the single-frame bootstrap for this node-specific target; GHBOE specifies the MDP node-policy objective. This construction applies recursively at arbitrary depth. It retains on-policy execution outside the optimized subtree and confines the execution mismatch to its interior. All nodes may continue learning without a manual stage transition, and the whole policy tree uses ME at deployment.

\paragraph{OC instance: separate values and explicit option-exit boundaries.}
The two-level OC instantiation illustrates these general rules. PSIF-1S maintains separate values for the root's ME objective and the options' objectives:
\begin{equation}
\label{new:eq:oc_target}
V^{hi}(s_t,\omega_t)\gets R(s_t,a_t)+\gamma\max_{\omega}V^{hi}(s_{t+1},\omega).
\end{equation}
For an option, the action lasts one frame, but option termination still determines its exit reward. The option-exit boundary uses the upper ME value:
\begin{equation}
\label{new:eq:oc_boundary}
U(s)=\max_\omega V^{hi}(s,\omega).
\end{equation}
SME trajectories supply the samples for both lower critics:
\begin{equation}
\label{new:eq:oc_lower_main}
\begin{aligned}
V^{lo}(s_t,\omega_t),\,V^{lo}(s_t,\omega_t,a_t)\gets{}&R(s_t,a_t)\\
&+\gamma\Big[(1-\beta_{\omega_t}(s_{t+1}))V^{lo}(s_{t+1},\omega_t)
+\beta_{\omega_t}(s_{t+1})U(s_{t+1})\Big].
\end{aligned}
\end{equation}
Here $\gamma$ is the discount factor. $V^{hi}$ induces upper selection; $V^{lo}$ supplies option-policy and termination updates. Appendix~\ref{new:app:oc_updates} gives their gradients and the Fourrooms termination-update variant. The resulting design preserves extended option trajectories while training toward ME deployment.

\section{Experimental analysis}
\label{new:sec:experiments}
The experiments isolate deployment execution, policy learning, and their interaction with transition randomness. OC/Fourrooms compares ESIF, PSIF-1S, and PSIF-2S within shared hierarchies; HAC tests execution improvement in goal-conditioned control; DAC tests a PSIF-2S-style second-stage ME switch from shared first-stage checkpoints. Figures~\ref{fig:Experiments} and~\ref{fig:HAC} provide the full OC and HAC curves and single-level controls, with environment details in Appendix~\ref{sec:D}. The learning and checkpoint evaluations below connect each intervention to its observed effect.
\IfFileExists{sections/learning_dynamics.tex}{\subsection{Learning behavior under decoupled execution}
\label{sec:learning-dynamics}
Figure~\ref{fig:main-learning} shows how decoupling shapes learning.
1Stage(PSIF) improves before the episode-500 switch while retaining SME
behavior; 2Stage(PSIF) reduces the recorded training-step count after this
scheduled switch and finishes below Baseline(ESIF). Direct ME behavior from initialization
(NaiveME) remains less effective in both displayed settings.
Together, these trajectories illustrate the role of temporally extended
exploration while the learned objective or subsequent execution changes.
The checkpoint comparisons below separate execution and policy effects
after this finite training budget; all twelve behavior-policy training
settings are provided in the appendix.

\begin{figure}[!htbp]
\centering
\includegraphics[width=\linewidth]{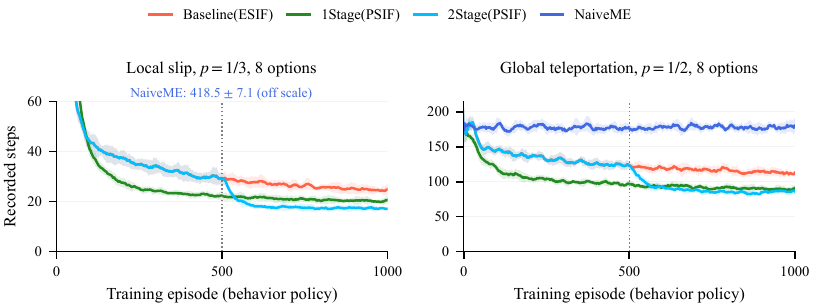}
\caption{Learning behavior with eight options: local slip ($p=1/3$) and
global teleportation ($p=1/2$). Lines use a trailing 20-episode mean;
bands show one SD across saved 35-seed batch curves (ten batches;
nine for teleportation NaiveME). The dotted line marks
the two-stage switch. Off-scale NaiveME reports its episode-901--1000
mean $\pm$ SD across batch-window means. Initial transients may exceed
the focused vertical range.}
\label{fig:main-learning}
\end{figure}
}{}
\IfFileExists{sections/new_experiments.tex}{\subsection{Stochasticity separates execution and policy effects}
\label{sec:stochastic-experiments}
The controlled transition sweep tests how stochasticity affects the gains
from execution and policy improvement. We extend Fourrooms to two transition
families with probabilities
$p\in\{1/6,1/3,1/2\}$ and either four or eight options. In \emph{local action
slip}, a feasible requested move is replaced, with probability $p$, by a
uniformly sampled traversable neighboring cell. In \emph{global
teleportation}, the corresponding event moves to a uniformly sampled free
cell; otherwise the original local-slip kernel is used. This yields twelve settings.

\begin{figure}[!htbp]
\centering
\includegraphics[width=\linewidth]{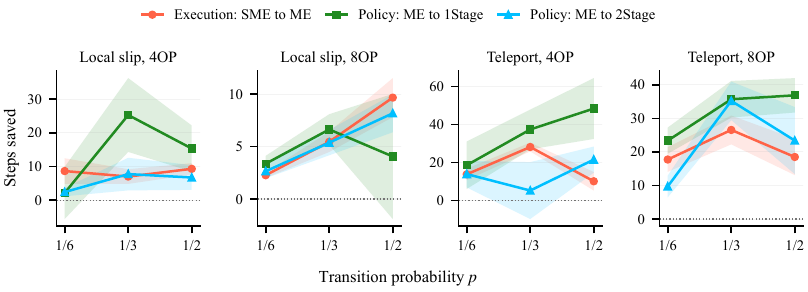}
\caption{\textbf{Execution and policy savings depend on the transition kernel.}
Positive values mean fewer steps: baseline SME minus ME (execution), or
Baseline(ESIF) minus PSIF under ME (policy). Means use checkpoints 910--1000;
bands show one SEM across ten policy indices, each evaluated for 35 episodes
per checkpoint. These are empirical step differences, not discounted optimal-value gaps.}
\label{fig:stochastic-gap}
\end{figure}

Figure~\ref{fig:stochastic-gap} separates changing the execution mode of a
fixed learned policy from learning policies for ME. In the teleportation
environment, the PSIF-1S savings relative to Baseline(ESIF) increase from
$18.6$ to $37.4$ to $48.5$ steps with four options, and from $23.4$ to $35.7$
to $36.9$ with eight options as $p$ increases. The local-slip and
execution-only curves are not uniformly monotone. The observed benefit
of ME-targeted learning thus depends on both the strength and spatial
structure of the random transitions. The full learning curves and evaluation
protocol appear in Appendix~\ref{app:new-experiments}.

\begin{table}[!htbp]
\centering
\caption{Effects across algorithm families; arrows mark the preferred direction. OC: ME steps ($p=1/3$, 910--1000 ep), mean $\pm$ SD over ten policy-window means. HAC: success rate (\%), mean $\pm$ SD across 10 seeds, averaged over the final five plotted checkpoints. DAC: final $0.1$M-step return, mean $\pm$ SD over 50 seeds; full second-stage comparison in Appendix~\ref{app:dac-continuation}. Effects are relative changes (\%).}
\label{tab:cross-family}
\small
\setlength{\tabcolsep}{3pt}
\begin{tabular}{@{}llrrr@{}}
\toprule
Algorithm / environment & Change & Reference & Modified & Effect \\
\midrule
OC / Slip (4 opts.) $\downarrow$ & PSIF-1S & $41.8\pm35.1$ & $16.5\pm1.8$ & $-60.6\%$ \\
OC / Slip (8 opts.) $\downarrow$ & PSIF-1S & $23.4\pm3.2$ & $16.8\pm2.1$ & $-28.4\%$ \\
OC / Teleport (4 opts.) $\downarrow$ & PSIF-1S & $110.2\pm29.9$ & $72.7\pm17.3$ & $-34.0\%$ \\
OC / Teleport (8 opts.) $\downarrow$ & PSIF-1S & $104.8\pm18.5$ & $69.1\pm11.6$ & $-34.1\%$ \\
\midrule
HAC / random AntReacher $\uparrow$ & ESIF & $49.1\pm5.9$ & $72.1\pm2.9$ & $+46.7\%$ \\
HAC / UR5 $\uparrow$ & ESIF & $73.1\pm4.4$ & $76.3\pm4.1$ & $+4.3\%$ \\
HAC / AntReacher $\uparrow$ & ESIF & $91.7\pm1.4$ & $94.0\pm1.1$ & $+2.5\%$ \\
HAC / AntFourrooms $\uparrow$ & ESIF & $69.1\pm5.7$ & $71.3\pm6.0$ & $+3.2\%$ \\
\midrule
DAC / HalfCheetah (2 opts.) $\uparrow$ & Stage-2 ME & $2257\pm506$ & $2269\pm607$ & $+0.54\%$ \\
DAC / HalfCheetah (8 opts.) $\uparrow$ & Stage-2 ME & $2111\pm698$ & $2268\pm636$ & $+7.41\%$ \\
\bottomrule
\end{tabular}
\end{table}

\begin{samepage}
Table~\ref{tab:cross-family} connects these interventions across algorithm
families. At fixed $p=1/3$, PSIF-1S reduces OC's ME evaluation steps for
both transition families and option counts. ESIF raises HAC's displayed
success rate in each environment. DAC~\citep{zhang2019dac} provides a
PSIF-2S-style second-stage comparison in HalfCheetah: a shared first-stage
checkpoint branches into learned-termination and ME continuations. The two-
and eight-option settings have positive mean return changes of $+0.54\%$
and $+7.41\%$, respectively. The complete option-count comparison appears
in Appendix~\ref{app:dac-continuation}.
These comparisons connect execution improvement and ME-targeted training
to option-based, goal-conditioned, and continuous-control learning.
\par
\end{samepage}
}{}
\section{Summary and future work}
\label{new:sec:discussion}

We have separated execution suboptimality from policy suboptimality within a fixed task hierarchy. UVFH and the four-stage Bellman recursion provide a common language for the task objective, policy choice, and decision timing. The resulting improvement criteria and loss decomposition lead to ESIF and two multilevel PSIF procedures, with behavior, target, and deployment execution specified independently. The discrete-control comparisons, continuous-control extensions, and controlled transition sweeps connect this design principle to its empirical effects.

Appendix~\ref{new:app:related_context} relates this execution-centered view to existing HRL approaches.

\paragraph{Future work.}
We identify three directions that follow from this separation: learning interruption rules alongside node policies, adapting training to parent-induced changes in state--task visitation, and improving off-policy value estimation for the local ME targets of PSIF-1S. The improvement criteria in Section~\ref{new:sec:theory} and the learning conditions in Appendix~\ref{new:app:thesis_expansion} provide a basis for these extensions. Together, they open a route to execution-aware learning in deeper hierarchies and continuous-control implementations while preserving temporal abstraction for exploration.

\FloatBarrier
\label{new:main_end}
\clearpage
\section*{Reproducibility statement}
The appendix provides the full formal definitions, theorem statements and proofs, together with the framework algorithms and their instantiations. It also details the experimental environments, hyperparameter settings, training and evaluation protocols, and statistical aggregation procedures needed to reproduce the reported results.

\section*{Ethics statement}
This work is a methodological study of reinforcement learning algorithms. It does not involve human subjects, personal data, or real-world deployment. We believe that there are no ethical concerns beyond the general question of how empirical claims in the field are substantiated, which this paper addresses directly.

\section*{AI use statement}
We used generative AI tools for readability editing of the manuscript. We have not used generative AI tools for proposing or refining the hypotheses, designing or critiquing the methodology or experiments, implementing the methods, interpreting the results, formulating the mathematical claims, or writing the proofs; generating synthetic datasets and qualitative data analysis are not applicable to this work. We have reviewed all AI-assisted work: every number in the paper was read back from the released derived data files by an author, the figure-generation code was checked against those files, and every theorem statement and proof was written and verified by the authors. We take responsibility for the final content of this work, including text, claims, and artifacts produced with the aid of generative AI.

\bibliography{ref}
\bibliographystyle{iclr2027_conference}
\clearpage
\appendix
\let\submissionsection\section
\renewcommand{\section}{\FloatBarrier\submissionsection}
\section{Guide to the supplementary material}
\label{new:app:guide}
The supplementary material contains the stochastic-environment and DAC experiments, an interpretation of the unified formalization, and the complete technical development. Appendices~\ref{sec:A} and~\ref{sec:B} give the GridWorld calculations and full formalization, including the multilevel counterexample. Appendix~\ref{sec:C} gives all three framework algorithms and their option-based and goal-based instantiations. Appendix~\ref{sec:D} documents the environments, hyperparameters, continuous-control comparisons, and visualizations. The extended preliminaries and analyses develop the arguments underlying the main-text presentation.

\IfFileExists{sections/new_experiments_appendix.tex}{\section{Additional stochastic and continuous-control experiments}
\label{app:new-experiments}

\subsection{OC transition families and measurement protocol}
The stochastic-transition Option-Critic (OC) experiments in Fourrooms keep the room geometry, task, option counts,
and learning hyperparameters fixed while varying the transition kernel.
For a requested move into a traversable cell, the local-slip kernel executes
the requested move with probability $1-p$ and samples uniformly among all
traversable neighboring cells with probability $p$; the latter set includes
the requested destination. A requested move into a wall leaves the agent
in place and does not trigger a random event. For a feasible requested move,
the global-teleportation kernel instead samples a free cell uniformly
(including the goal) with probability $p$; otherwise it applies the
local-slip kernel with fixed slip-event probability $1/3$. Thus $p$ denotes
an event probability conditional on a feasible requested move. The
unconditional probability of an unintended direction also depends on the
traversable neighbors and the requested action.
We evaluate $p\in\{1/6,1/3,1/2\}$ with four and eight options.
Each training run contains 2000 episodes, capped at 1000 environment steps
per episode. The displayed learning curves cover episodes 1--1000, and the
two-stage schedule switches at episode 500.

The training CSVs contain ten worker-batch averages, each formed
from 35 training seeds. The displayed uncertainty is dispersion across
these saved batch averages; the 350 individual seed trajectories were not
separately retained in the CSVs. The NaiveME global-teleportation
condition at $p=1/2$ contains nine saved batches, representing 315 seeds.
The training logger records the zero-based terminal step index and saves
integer-formatted batch averages. The learning curves use that recorded
scale, and checkpoint tests count actual steps; these two measurements
have distinct scales and sampling units.

For the checkpoint evaluation sweep, ten policy indices are evaluated every
ten training episodes from episode 10 through 1990. Each saved policy is
tested for 35 episodes, using the same seeded evaluation scheme for SME and
ME and an $\epsilon$-greedy option selector with $\epsilon=0.1$.
The saved checkpoint-test entry is the rounded mean of those 35 episodes.
The summary uses episodes 910--1000: first average the ten checkpoints for
each policy index, then report the mean and sample standard deviation across
the ten policy indices. A comparison of two conditions uses their matching
policy indices; the standard-error bands in
Figure~\ref{fig:stochastic-gap} are computed from those ten differences.
This paired protocol measures execution and policy effects for the learned
checkpoints at a fixed training budget. The theoretical optimal-value gap
compares Bellman-optimal node policies with the other policies fixed.

\subsection{All twelve OC behavior-policy learning curves}
Figures~\ref{fig:new-training-original} and~\ref{fig:new-training-random} report the OC learning curves in the local-slip and global-teleportation Fourrooms environments, respectively. Table~\ref{tab:all-checkpoint-results} gives the corresponding controlled checkpoint comparisons for OC: the same baseline policies are evaluated under SME and ME, and the 1Stage and 2Stage policies are evaluated under ME.
The algorithm names follow the experimental implementation:
Baseline(ESIF), 1Stage(PSIF), 2Stage(PSIF), and NaiveME. The ESIF designation
of the baseline concerns its deployment intervention. Baseline and 1Stage
collect with SME behavior, while the staged method changes behavior at the
prescribed switch. These curves show behavior during learning; the
controlled checkpoint comparisons separate the gains from
execution and policy improvement.

\begin{figure}[p]
\centering
\includegraphics[width=\linewidth]{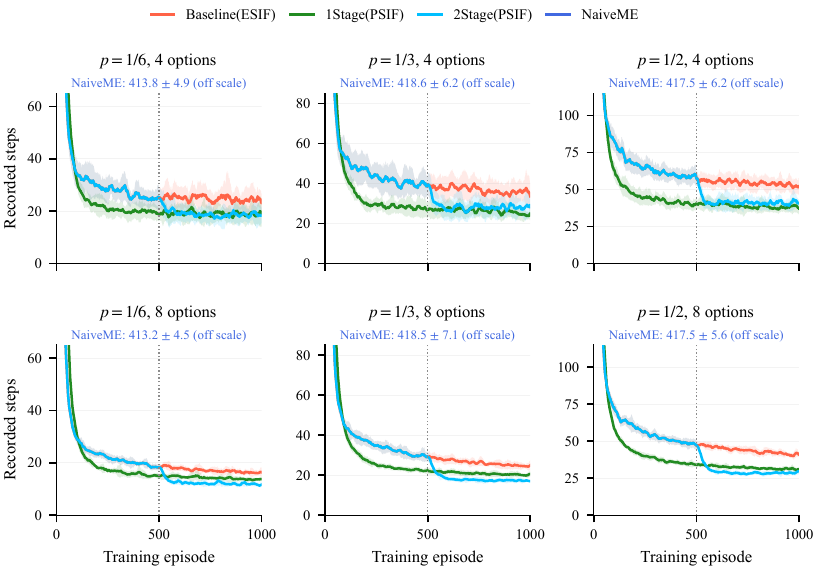}
\caption{OC on Fourrooms: all six local-slip learning curves, with four options in the upper
row and eight in the lower row. Curves apply a trailing 20-episode mean to
each saved worker-batch trajectory; bands are one sample standard deviation
across the smoothed batch trajectories. The dotted line marks the
two-stage switch at episode 500. Focused vertical scales can clip the
initial transient. Off-scale NaiveME labels report the unsmoothed
episode-901--1000 mean $\pm$ SD across worker-batch window means.
Legends and off-scale labels are outside the plotting areas.}
\label{fig:new-training-original}
\end{figure}

\begin{figure}[p]
\centering
\includegraphics[width=\linewidth]{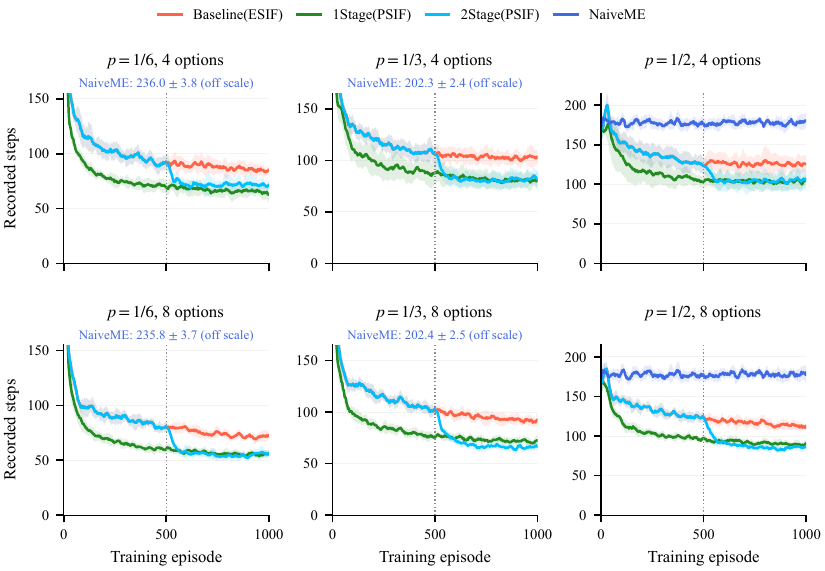}
\caption{OC on Fourrooms: all six global-teleportation learning curves, with the same
conventions as Figure~\ref{fig:new-training-original}. At $p=1/2$, NaiveME
is included directly in the plotting range; its nine saved worker batches
are retained without imputation.}
\label{fig:new-training-random}
\end{figure}

\begin{table}[t]
\centering
\caption{Complete OC checkpoint evaluation on Fourrooms at episodes 910--1000: mean $\pm$ sample SD of the ten policy-window means (steps, lower is better). Within each checkpoint, each saved policy is evaluated for 35 episodes.}
\label{tab:all-checkpoint-results}
\small
\setlength{\tabcolsep}{3pt}
\begin{tabular}{@{}lccrrrr@{}}
\toprule
Environment & Opts. & $p$ & Baseline SME & Baseline ME & 1Stage ME & 2Stage ME \\
\midrule
Slip & 4 & $1/6$ & $31.4\pm24.6$ & $22.7\pm14.7$ & $20.5\pm16.6$ & $20.3\pm13.5$ \\
Slip & 4 & $1/3$ & $48.8\pm38.8$ & $41.8\pm35.1$ & $16.5\pm1.8$ & $34.0\pm27.2$ \\
Slip & 4 & $1/2$ & $57.0\pm16.8$ & $47.7\pm14.1$ & $32.3\pm21.5$ & $40.9\pm13.7$ \\
Slip & 8 & $1/6$ & $17.5\pm3.0$ & $15.2\pm2.5$ & $11.9\pm0.5$ & $12.6\pm0.7$ \\
Slip & 8 & $1/3$ & $28.9\pm4.4$ & $23.4\pm3.2$ & $16.8\pm2.1$ & $18.0\pm2.2$ \\
Slip & 8 & $1/2$ & $47.8\pm12.2$ & $38.1\pm7.9$ & $34.1\pm17.9$ & $30.0\pm5.7$ \\
Teleport & 4 & $1/6$ & $88.8\pm23.3$ & $75.0\pm20.2$ & $56.3\pm29.5$ & $61.0\pm19.4$ \\
Teleport & 4 & $1/3$ & $138.4\pm34.6$ & $110.2\pm29.9$ & $72.7\pm17.3$ & $104.9\pm73.4$ \\
Teleport & 4 & $1/2$ & $168.4\pm61.5$ & $158.2\pm60.3$ & $109.8\pm21.9$ & $136.7\pm47.0$ \\
Teleport & 8 & $1/6$ & $86.2\pm17.0$ & $68.5\pm11.7$ & $45.1\pm7.5$ & $58.8\pm14.7$ \\
Teleport & 8 & $1/3$ & $131.3\pm27.7$ & $104.8\pm18.5$ & $69.1\pm11.6$ & $69.6\pm10.6$ \\
Teleport & 8 & $1/2$ & $144.6\pm22.4$ & $126.1\pm30.0$ & $89.2\pm22.0$ & $102.7\pm12.9$ \\
\bottomrule
\end{tabular}
\end{table}

\FloatBarrier
\subsection{Second-stage DAC comparison on HalfCheetah}
\label{app:dac-continuation}
The DAC study compares second-stage continuations from matched first-stage
checkpoints at $3\times10^6$ environment steps through $4\times10^6$ steps,
using 50 paired runs per option count. The reference branch retains DAC's
learned termination; the ME branch reselects an option each frame and fixes
its termination head to one. This PSIF-2S-style actor--critic continuation
adapts intra-option policies, while the idealized node-wise schedule freezes
completed children. The continuation restores network and normalization
parameters and reinitializes other training state.

Figure~\ref{fig:new-dac} and Table~\ref{tab:dac-second-stage-all} report the
two- and eight-option comparisons.
The displayed quantity is training return: curves use 20,000-step bins and
the table summarizes the final 100,000 steps.

\begin{figure}[!htbp]
\centering
\includegraphics[width=\linewidth]{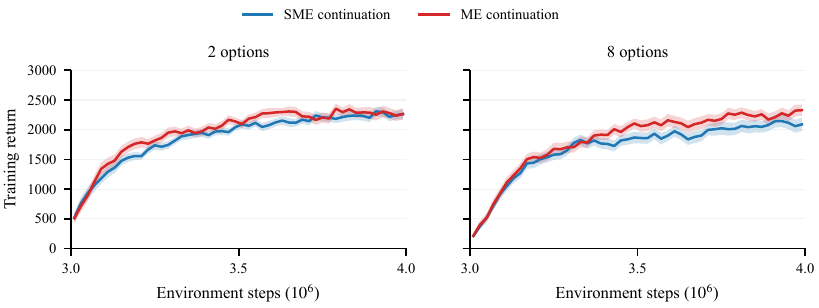}
\caption{\textbf{Second-stage DAC comparison with two and eight options.}
Learned-termination and ME continuations start from the shared $3$M-step
checkpoint. Curves show the observed $3$M--$4$M segment in 20,000-step bins;
bands are one standard error across 50 runs. Their summary statistics
are given in Table~\ref{tab:dac-second-stage-all}.}
\label{fig:new-dac}
\end{figure}

\begin{table}[!htbp]
\centering
\caption{DAC second-stage comparison on HalfCheetah for the valid two- and eight-option settings. Returns use the final $0.1$M steps: mean $\pm$ sample SD over 50 paired seeds. Effects are relative changes (\%).}
\label{tab:dac-second-stage-all}
\small
\setlength{\tabcolsep}{4pt}
\begin{tabular}{@{}crrr@{}}
\toprule
Options & SME return $\uparrow$ & ME return $\uparrow$ & Effect \\
\midrule
2 & $2257\pm506$ & $2269\pm607$ & $+0.54\%$ \\
8 & $2111\pm698$ & $2268\pm636$ & $+7.41\%$ \\
\bottomrule
\end{tabular}
\end{table}

}{}
\section{Additional interpretation of the unified formalization}
\label{new:app:thesis_expansion}

The unified formalization separates task objectives from decision authority, providing the basis for independent behavior, target, and deployment execution. The complete definitions and proofs appear in Appendix~\ref{sec:B}.

\begin{figure}[!htbp]
\centering
\includegraphics[width=\textwidth]{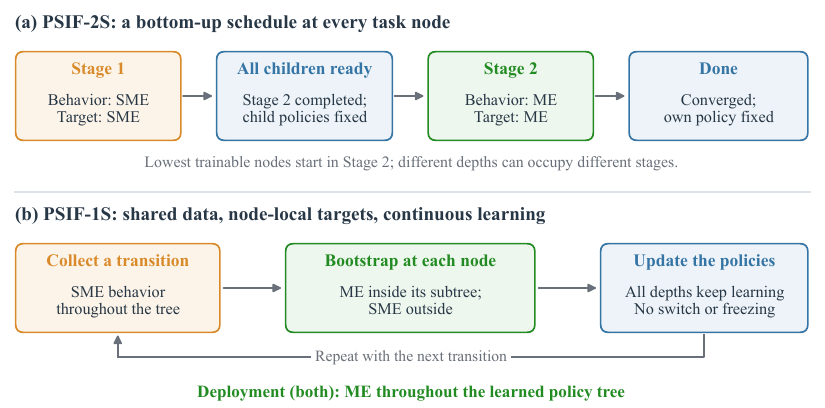}
\caption{\textbf{PSIF-2S changes the training schedule; PSIF-1S separates behavior from the target.} In PSIF-2S, behavior and target modes describe the current node's calls to its children, not a global switch. The schedule propagates bottom-up. PSIF-1S keeps SME behavior and updates node-local ME targets throughout training. Both apply at arbitrary depth and deploy with ME.}
\label{new:fig:psif_flows}
\end{figure}
Figure~\ref{new:fig:psif_flows} supplements the main-text tree illustrations in Figures~\ref{new:fig:multilevel_modes} and~\ref{new:fig:psif2s_modes} with the training schedule. Panel (a) shows the node-wise stage transitions of PSIF-2S; panel (b) shows the repeated SME collection and node-local target updates of PSIF-1S. Table~\ref{new:tab:design} records the execution modes and idealized endpoints for all frameworks.

\begin{table}[!htbp]
\centering
\caption{\textbf{Execution rules for hierarchies of arbitrary depth.} In PSIF-2S, the stage is local to each task node. In PSIF-1S, the target is specific to the node being optimized. Endpoints denote the conditional idealized comparisons, not finite-training guarantees.}
\label{new:tab:design}
\small
\setlength{\tabcolsep}{4pt}
\begin{tabular}{@{}lp{2.7cm}p{4.1cm}cc@{}}
\toprule
Framework & Behavior & Target & Deployment & Endpoint\\
\midrule
Baseline & SME throughout & SME throughout & SME & $J_{\mathcal C}$\\
ESIF & SME throughout & SME throughout & ME & $J_{\mathcal B}$\\
PSIF-2S & Node-local SME $\to$ ME & Same node-local modes as behavior & ME & $J_{\mathcal A}$\\
PSIF-1S & SME throughout & ME within the updated subtree; SME outside it & ME & $J_{\mathcal A}$\\
\bottomrule
\end{tabular}
\end{table}

\paragraph{Why two trees, and why four entries in a task node?}
The task tree answers four questions for each task: which subtasks can it call, when does its objective terminate, what reward is accumulated at each frame, and what reward is received on exit? The last question cannot generally be absorbed into a single shared environmental reward. A task can exit because of its own termination event or because an ancestor terminates, and these exits can require different boundary values. The tuple $(\mathcal T',\beta,R,\hat R)$ records these distinctions explicitly. Task termination propagates down the hierarchy, with the highest terminating ancestor determining the boundary. The external task $\tau_{out}=\tau_{n(-1)}$ calls the root task and receives its exit, whereas action leaves produce one frame of transition and have $\beta_{leaf}\equiv1$.

The execution tree answers a different question: which node policy may act now? An execution node pairs a node policy with a decision interruption function. The policy tree $\Pi$ contains the former; the interruption tree $\Beta$ contains the latter. This separation permits changing execution without changing the task's reward accumulation rule. It also distinguishes learning a task decomposition from learning a scheduler. For example, changing an option's intrinsic termination rule can change the subproblem it must learn, whereas changing only its interruption rule changes when another node can take control.

\paragraph{Why the extended state contains decision authority.}
In flat RL, a state value evaluates the process before selecting an action, and an action value evaluates it after an action is selected. In a hierarchy there are intermediate decision points: a parent can have selected a child task while that child has not yet selected its own child. The extended state $(s,\tau_{n(i)})$ identifies all such positions. The conditioning task $\tau_{n(h)}$ separately determines which reward and exit boundary are evaluated. Hence the two task indices serve different purposes. For a countable task tree with $N$ nodes, the formalization describes $|S|\times N$ extended-state values for each conditioning task, and $|S|\times N^2$ values across all conditioning tasks. This describes the value objects; it does not require estimating every one with a separate function approximator.

\paragraph{One environment frame, four value stages.}
The decision stage follows the policy tree downwards; the interaction stage applies an action and obtains the evaluated task's reward. The termination stage then either closes that task's reward accumulation with an exit reward or passes its remaining value to the preparation stage. The preparation stage redistributes decision authority through the interruption tree. These four stages form a closed recursion. They distinguish transitions of task objectives, represented by $\gamma$, from transitions of decision authority, represented by $\hat\gamma$. In particular, knowing the probability that a task terminates does not by itself specify how frequently a target policy reselects subtasks.

\paragraph{Why SME behavior can support ME targets.}
With SME behavior, if an ancestor terminates and decision authority leaves a descendant's subtree, that descendant's task also terminates. Its sampled trajectory can therefore extend to its own hierarchical boundary. With ME behavior, authority can leave a descendant while the descendant's task objective has not terminated. Subsequent actions can then be selected by another task. Using them to update the interrupted task without accounting for the change introduces an off-policy discrepancy. Separating execution roles preserves temporal abstraction for behavior while allowing the target and deployment to exploit ME decisions.

PSIF-2S resolves this scheduling issue through two training stages for each task node. Stage 1 uses SME to call children and optimizes the corresponding SME objective. Completion of every child's Stage 2, with those child policies fixed, permits the parent to enter its own Stage 2. The parent then calls children with ME and optimizes its MDP node policy; it is fixed only after completing this optimization. Lowest task nodes start in Stage 2 because their children are primitive actions and their SME and ME objectives coincide. Different nodes may therefore be in Stage 1, Stage 2, or finished at the same time. It is the caller's own stage that governs its execution of its children, and the bottom-up progression ends when all task nodes have finished Stage 2. These are node-local training stages, with behavior and target execution matched for the node being optimized.

PSIF-1S instead keeps SME behavior throughout the hierarchy and forms a separate target for each updated node: descendants within that node's subtree are called with ME, while the part outside remains SME. GHBE supplies its single-frame value recursion, and GHBOE specifies its MDP node-policy objective. The behavior can thus preserve the temporal structure useful for exploration without imposing the same execution mode on the learning target or final deployment. This construction concerns an arbitrary node and its descendants, not a fixed pair of upper and lower layers. The OC instantiation below is a two-level example of these general rules.

\paragraph{Scope of the value guarantees and learning updates.}
The value-evaluation proof uses bounded rewards and a uniform positive probability of task termination. The optimal-execution result further requires a node policy optimal for its original interruption tree and the stated task-subtree compatibility condition. The multilevel counterexample identifies why compatibility matters: independent descendant objectives can make a new scheduling opportunity worse for an ancestor. The idealized framework endpoints compare the specified node-policy optima, without asserting finite-training convergence; partially trained agents are evaluated empirically. In PSIF-2S, SME visitation can leave rarely called subtasks undertrained, and subsequent parent improvement can shift the state--task distribution seen by fixed children. PSIF-1S can also encounter state--option distribution shift and function-approximation error. Multi-frame bootstrapping requires accounting for off-policy data, while single-frame bootstrapping can accumulate approximation error over long horizons. These learning effects are separate from the exact value comparisons.

\subsection{Complete PSIF-1S updates for Option-Critic}
\label{new:app:oc_updates}

This two-level instantiation uses the environment reward $R$ and discount factor $\gamma$. The agent collects trajectories with SME. Its upper value estimates the ME target, while two lower values estimate the option objective:
\begin{equation}
\label{new:eq:oc_hi_complete}
V^{hi}(s_t,\omega_t)\gets R(s_t,a_t)+\gamma\max_\omega V^{hi}(s_{t+1},\omega).
\end{equation}
The assignment symbol denotes fitting the left-hand value estimates to the right-hand target on sampled inputs. For the option's one-frame action transitions, SME and ME coincide. At option termination, the exit reward uses the upper target policy's ME value $U(s)=\max_\omega V^{hi}(s,\omega)$. The lower updates used in this paper are
\begin{equation}
\label{new:eq:oc_offtrace}
\begin{aligned}
V^{lo}(s_t,\omega_t),\,V^{lo}(s_t,\omega_t,a_t)
\gets{}& R(s_t,a_t)+\gamma\Bigl[\bigl(1-\beta_{\omega_t}(s_{t+1})\bigr)V^{lo}(s_{t+1},\omega_t)\\
&+\beta_{\omega_t}(s_{t+1})\max_\omega V^{hi}(s_{t+1},\omega)\Bigr].
\end{aligned}
\end{equation}
This boundary trains options to support the upper ME target while retaining SME interaction. Both critics bootstrap from the observed next state after one sampled action; the target execution specifies its continuation value.

\paragraph{Policy and termination updates.}
The upper node policy is induced by $V^{hi}$. The option-policy and termination gradients use the lower values, with the termination baseline specified separately from the option-exit target:
\begin{equation}
\label{new:eq:oc_termination}
\begin{aligned}
\Delta\vartheta &\sim-\alpha_\vartheta\frac{\partial\beta_\omega(s_t)}{\partial\vartheta}
\bigl(V^{lo}(s_t,\omega)-V^{lo}(s_t)\bigr),\\
V^{lo}(s)&=\max_\omega V^{lo}(s,\omega).
\end{aligned}
\end{equation}
\begin{equation}
\label{new:eq:oc_policy}
\Delta\theta\sim\alpha_\theta\frac{\partial\pi_\omega(a_t\mid s_t)}{\partial\theta}
\bigl(V^{lo}(s_t,\omega,a_t)-V^{lo}(s_t,\omega)\bigr).
\end{equation}
The termination baseline $\max V^{lo}$ and the option-exit reward $U(s)=\max V^{hi}$ are distinct quantities with distinct roles.

\paragraph{Fourrooms termination-update variant.}
The runs labeled \texttt{1Stage(PSIF)} combine the upper-ME critic boundary in Equation~\eqref{new:eq:oc_offtrace} with the termination advantage $V^{hi}(s,\omega)-\max_{\omega'}V^{hi}(s,\omega')+\texttt{delib}$ in \texttt{A\_Omega}, where \texttt{delib} is the deliberation term. Their termination update therefore uses upper values, while the framework update in Equation~\eqref{new:eq:oc_termination} uses lower values. The reported curves and statistics evaluate this upper-value termination variant.

\subsection{Context within hierarchical reinforcement learning}
\label{new:app:related_context}
Hierarchical reinforcement learning uses task decomposition to organize temporally extended behavior; surveys describe both its algorithmic diversity and the resulting design choices~\citep{DEDS03HRL,ACMCS21HRLSurvey}.

Options permit interruption \citep{AI99MDPsSMDPs}. Our analysis separates its execution benefit from the loss induced by an SME training objective and formalizes both for nested tasks. Representation-induced restrictions \citep{ICLR19RLHRL} and the sample efficiency of temporal abstraction \citep{NIPS20EfficiencyHRL,robert2023sample} concern complementary questions. UVFA \citep{ICML15UVFA} conditions values on goals, while UVFH also records decision authority and an independently chosen execution tree. Recent analyses of option duration \citep{drappo2023option} and alternative option-induced MDPs \citep{li2023hit} reinforce the distinction between a hierarchy's structure and its execution. Appendix~\ref{related} discusses skill discovery, option learning, multitask learning, and goal-space learning.

\section{Extended background}
\begin{figure*}[t]
\centering
\includegraphics[width=0.9\textwidth]{
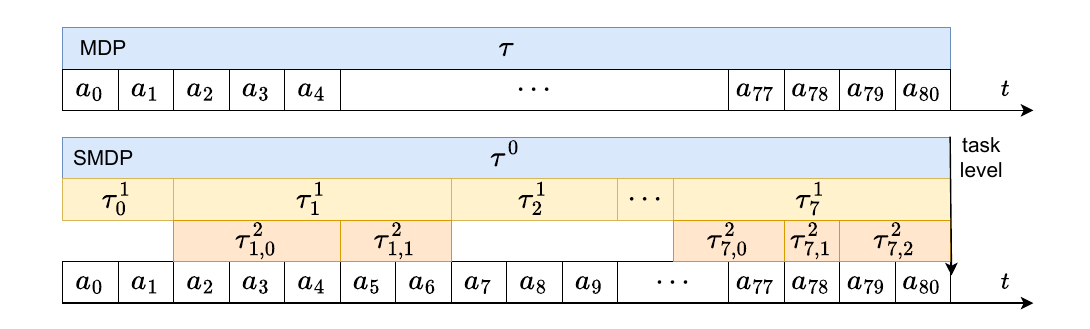
}
\caption{Differences between MDP and SMDP: calling actions vs. calling subtasks. The superscript of $\tau$ denotes its hierarchical level, while the subscript denotes its called index $\hat t$. The subscript of actions $a$ indicates the timestep index $t$.
}
\label{fig:MDP_and_SMDP}
\end{figure*}
\subsection{Markov Decision Processes}
\label{sec:2.1}
A \textcolor{black}{\textbf{task}} $\tau$ is described as a \textbf{Markov Decision Process} (MDP) defined by the six-tuple $(\mathcal S,\mathcal A, T, R_\tau, \alpha_\tau, \beta_\tau)$. This MDP comprises a state space $\mathcal S$, a action space $\mathcal A$, a transition kernel $T: \mathcal S\times\mathcal A\to\Delta(\mathcal S)$ (where $\Delta(\mathcal S)$ denotes probability distributions over $\mathcal S$), a reward function $R_\tau:\mathcal S\times\mathcal A\to\mathbb R$, a task  initial-state distribution $\alpha_\tau\in\Delta(\mathcal S)$, and a termination function $\beta_\tau:\mathcal S\to[0,1]$ which means the probability of exiting task $\tau$ upon visiting a state $s$. \textcolor{black}{$1-\beta_\tau$ plays a role analogous to the discount factor $\gamma_\tau$.}
We define one unit of timestep in which the agent takes an action and interacts with the environment as a \emph{frames}, and denote the current frame index as $t$. When the agent interacts with the environment by calling the policy
$\pi_{\tau}: \mathcal{S} \to \Delta \mathcal{A}$,
its value function is defined as
$
V_{\pi}(s|\tau) =
\mathbb{E}_{\pi_{\tau},\, s_0 = s}
\left[
\sum_{t=0}^{\infty}
\left(
R_{\tau}(s_t, a_t)
\prod_{k=0}^{t-1} (1 - \beta_{\tau}(s_k))
\right)
\right],
$
where $\pi$ is the set of policies for all tasks,\footnote{We will extend and redefine the concepts of certain traditional symbols. Notations are in~\cref{notations}}
$\forall \tau, \pi_{\tau} \in \pi$.
The optimization objective of $\pi_{\tau}$ is to maximize $V_{\pi}(\cdot | \tau)$,
and different $\tau$ will have different optimization objectives due to different $\beta_{\tau}$.
The expected return of task $\tau$ is
$
J(\pi | \tau) = \mathbb{E}_{s \sim \alpha_{\tau}} V_{\pi}(s | \tau)
$.
\subsection{Semi-Markov Decision Processes}
\label{sec:2.2}
Tasks affect the optimization objective of their policies and induce dynamic transitions in the environment when executed by the agent, similar to the effects of actions. Therefore, as illustrated in~\cref{fig:MDP_and_SMDP}, in HRL it is possible to call other tasks multiple times to accomplish $\tau$ instead of directly calling actions. We refer to this stochastic process with variable-duration transitions as a \textbf{Semi-Markov Decision Process} (SMDP), and the tasks that can be called under $\tau$ are called the \textcolor{black}{\textbf{subtasks}} of $\tau$.
In the formalization, we replace $\mathcal{A}$ in the MDP tuple with the subtask set $\mathcal{T}'_{\tau}$ of $\tau$, and define the SMDP of task $\tau$ as $(\mathcal{S},\mathcal{T}'_{\tau},\bar T_{\tau},\bar R_{\tau},\alpha_{\tau},\beta_{\tau})$. Here, $\bar T_{\tau}: \mathcal{S} \times \mathcal{T}'_{\tau} \to \Delta \mathcal{S}$ is the subtask transition function, and $\bar R_{\tau}: \mathcal{S} \times \mathcal{T}'_{\tau} \to \mathbb{R}$ is the subtask reward function.
Since tasks at different hierarchical levels operate on different time scales, we use $\hat t$ to record the number of subtasks that have been called by a given task. We refer to the policy by which $\tau$ calls its subtasks as the node policy $\hat\pi_{\tau}: \mathcal{S} \to \Delta \mathcal{T}'_{\tau}$, and we denote the mapping for $\tau$ that starts from $\hat\pi_{\tau}$ and keeps selecting subtasks until an action is eventually chosen, while keeping the original notation, as the policy $\pi_{\tau}: \mathcal{S} \to \Delta \mathcal{A}$.
The value function that the node policy $\hat{\pi}_{\tau}$ needs to maximize can be simply formalized as
$
V_{\pi}(s | \tau) =
\mathbb{E}_{\tau' \sim \hat{\pi}_{\tau},\, s_0 = s}
\left[
\sum_{\hat{t}=0}^{\infty}
\left(
R_{\tau}(s_{\hat{t}}, \tau'_{\hat{t}})
\prod_{k=0}^{\hat{t}-1} (1 - \beta_{\tau}(s_k))
\right)
\right]
$.
\footnote{This calculation ignores the case where a subtask is terminated due to the termination of its higher-level task during its execution.
}
\subsection{Problem setup}
In this paper, we divide the algorithm into two phases: the training phase and the deployment phase.
\textbf{Training phase:} The agent interacts with the environment to learn appropriate policies.
\textbf{Deployment phase:} The agent needs to call the learned policies in a reasonable manner.
We examine the final performance exhibited by the agent when interacting with the environment during the deployment phase while executing policies in a certain manner.\footnote{This setup is more reasonable in RL, for example, by using $\epsilon$-greedy during training and greedy during deployment, or softmax during training and argmax during deployment.}The main motivation for this setup is that during training, the agent needs to explore the environment, and policy execution involves an exploration mechanism. However, during deployment, more emphasis should be placed on the optimal performance that the learned policies can achieve in terms of exploitation. We find that this exploration-exploitation tradeoff takes on a new form in HRL.
\section{Detailed problem formulation}
In this section, we analyze the impact of Semi-Markov Decision Processes on the agent during training and deployment through a simple case study. We point out two distinctly different execution modes when calling subtasks in HRL.
\begin{definition}[\textbf{Markov Execution, ME}]
Task $\tau$ makes a decision and calls a new subtask $\tau'$ at every frame, meaning that each called $\tau'$ generates only a single-frame transition. We refer to this execution mode, in which each transition lasts only one frame, as task $\tau$ calling $\hat\pi_{\tau}$ in a ME manner. When all tasks of the agent call their node policies in the ME manner, we refer to the agent as calling $\pi$ in the ME manner.
\textcolor{black}{The value function under ME is formalized as
$
V_{\pi}^{\text{ME}}(s | \tau) =
\mathbb{E}_{\tau' \overset{\text{ME}}{\sim} \hat{\pi}_{\tau},\, s_0 = s}
\left[
\sum_{\hat{t}=0}^{\infty}
\left(
R_{\tau}(s_{\hat{t}}, \tau'_{\hat{t}})
\prod_{k=0}^{\hat{t}-1} (1 - \beta_{\tau}(s_k))
\right)
\right]
$.}
\end{definition}
\begin{definition}[\textbf{Semi-Markov Execution, SME}]
After task $\tau$ decides and calls a subtask $\tau'$, $\tau'$ is executed in full until it terminates due to termination function $\beta_{\tau'}$. We refer to this execution mode, in which each transition lasts for the entire execution of the subtask, as task $\tau$ calling $\hat\pi_{\tau}$ in a SME manner. When all tasks of the agent call their node policies in the SME manner, we refer to the agent as calling $\pi$ in the SME manner. \textcolor{black}{The value function under SME is formalized as
$
V_{\pi}^{\text{SME}}(s | \tau) =
\mathbb{E}_{\tau' \overset{\text{SME}}{\sim} \hat{\pi}_{\tau},\, s_0 = s}
\left[
\sum_{\hat{t}=0}^{\infty}
\left(
R_{\tau}(s_{\hat{t}}, \tau'_{\hat{t}})
\prod_{k=0}^{\hat{t}-1} (1 - \beta_{\tau}(s_k))
\right)
\right]
$.}
\end{definition}
As shown in~\cref{fig:ME_and_SME}, ME ignores the original termination functions of subtasks, whereas SME strictly follows the termination functions of subtasks.
SME used in HRL can lead to suboptimality. We decompose this suboptimality into \emph{execution suboptimality} and \emph{policy suboptimality}, as illustrated by the \emph{GridWorld} shown in~\cref{fig:Gridworld-1}:
\begin{figure*}[t]
\centering
\includegraphics[width=0.9\textwidth]{
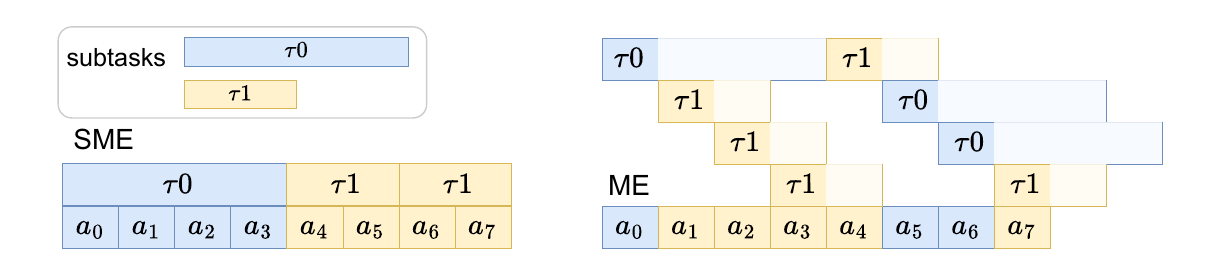
}
\caption{Interaction differences between SME and ME: calling subtasks in full vs. calling subtasks in one timestep. Task $\tau$ has a 4-timestep subtask $\tau0$ and a 2-timestep subtask $\tau1$. Both $\tau0$ and $\tau1$ directly call actions.
The subtask $\tau0$ and the actions it decides are represented in blue, while $\tau1$ and its actions are represented in yellow.
}
\label{fig:ME_and_SME}
\end{figure*}
\begin{figure*}[tbp]
    \centering
        \begin{subfigure}[b]{0.18\textwidth}
            \includegraphics[width=\textwidth]{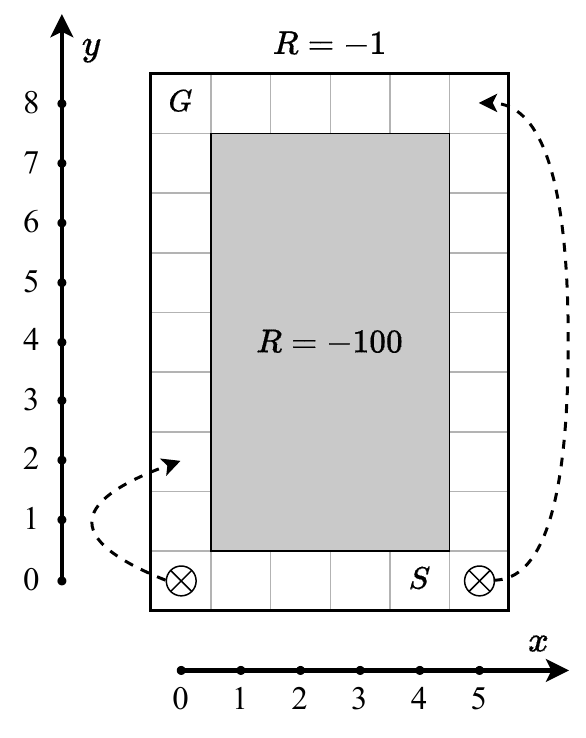}
            \caption{The GridWorld.}
            \label{fig:Gridworld-1}
        \end{subfigure}
        \begin{subfigure}[b]{0.35\textwidth}
            \includegraphics[width=\textwidth]{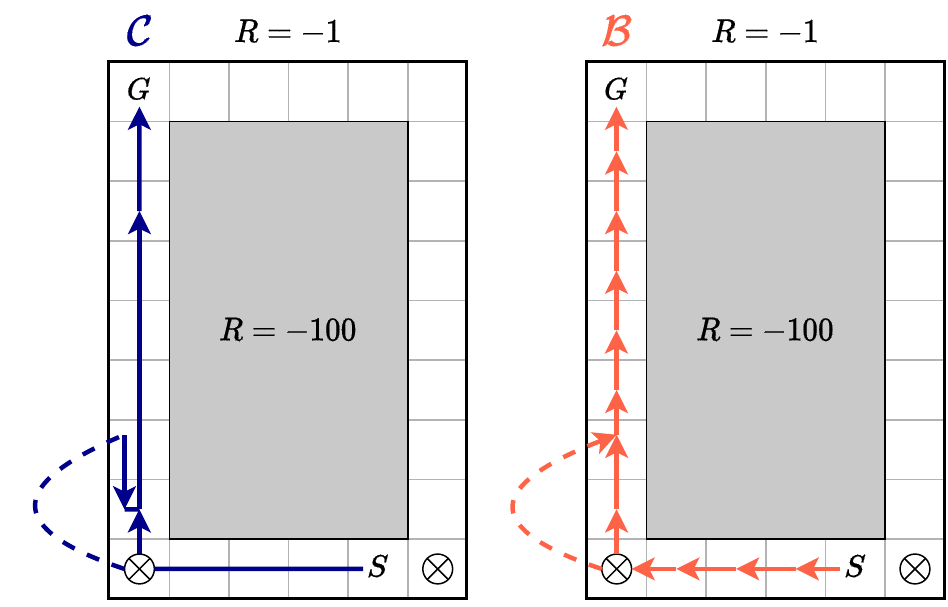}
            \caption{Execution results of SME and ME.}
            \label{fig:Gridworld-2}
        \end{subfigure}
        \begin{subfigure}[b]{0.38\textwidth}
            \includegraphics[width=\textwidth]{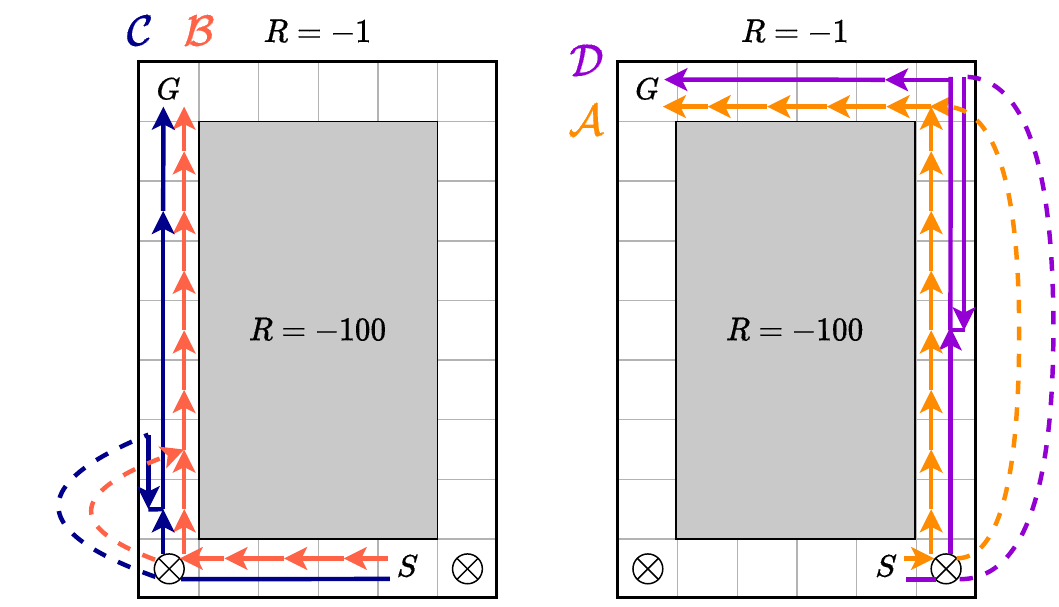}
            \caption{Policy differences between $\hat\pi_{hi}^-$ and $\hat\pi_{hi}^*$.}
            \label{fig:Gridworld-3}
        \end{subfigure}
    \caption{Trajectories of the agent in the \emph{GridWorld} under different policies and execution modes. The agent needs to reach $G$ at $(0,8)$ from $S$ at $(4,0)$ in the minimum number of frames, receiving a reward of $-1$ per frame. After reaching the position marked $\otimes$, there is a 50\% chance of an additional state transition along the dashed line.
    \textcolor{black}{The trajectories of the SMDP-optimal policy $\hat\pi_{hi}^-$ under SME and ME are shown in blue and red, with their expected returns denoted by $J_{\mathcal{C}}$ and $J_{\mathcal{B}}$ respectively. The trajectories of the MDP-optimal policy $\hat\pi_{hi}^*$ under SME and ME are shown in purple and orange, with their expected returns denoted by $J_{\mathcal{D}}$ and $J_{\mathcal{A}}$ respectively.}
    The detailed descriptions of the policies and execution modes, as well as the related numerical computations, are provided in the~\cref{sec:A}}
    \label{fig:Gridworld}
\end{figure*}
\subsection{Execution suboptimality}
\label{sec:3.1}
\cref{fig:Gridworld-2} shows the trajectories of an agent with a two-level policy set $\pi$ under SME and ME.
Given the termination functions and node policies of all tasks within the agent, \textbf{\textit{using different execution modes results in different trajectories}}. That is, for any $\pi$, there exist the expected return under SME $J^{\text{SME}} (\pi | \tau)$, and under ME $J^{\text{ME}} (\pi | \tau)$.
To reach $(0,8)$ as quickly as possible, \texttt{up} is the optimal action at state $(0,2)$. For the policy set in~\cref{fig:Gridworld-2}, it is clear that ME achieves a higher expected return than SME: $J^{\text{ME}}(\pi | G) = -11 > -12 = J^{\text{SME}}(\pi | G)$.
During the execution of subtask to reach $(0,1)$, the subtask with the highest expected return changed (due to a random event), but SME could not promptly stop the unfinished suboptimal subtask, resulting in suboptimality. We refer to this suboptimality during the deployment phase as \textbf{execution suboptimality}.
\subsection{Policy suboptimality}
\cref{fig:Gridworld-3} shows the trajectories of the optimal policy under the two different execution modes, SME and ME:
$J^{\text{ME}}(\pi^* | G) = -8 > -11 = J^{\text{ME}}(\pi^- | G), J^{\text{SME}}(\pi^- | G) = -12 > -14 = J^{\text{SME}}(\pi^* | G)$.
Given the termination functions of all subtasks within an agent, \textbf{\textit{different execution modes can lead to different optimal node policies.}}This paper refers to the difference in policy set caused by the execution mode during the training phase as \textbf{policy suboptimality}.
We provide the following definitions of policy optimality under these two specific execution modes:
\begin{definition}[\textbf{MDP-optimal}]\label{def:MDP_opt}
For a task $\tau$, given the node policies of all its descendant tasks, the node policy $\hat\pi_{\tau}$ that maximizes $V^\text{ME}_\pi(\cdot|\tau)$ is called the \emph{MDP optimal node policy} of task $\tau$, denoted as $\hat\pi^*_{\tau}$. That is,
$
\hat\pi^*_{\tau} = \argmax_{\hat\pi_\tau}V^\text{ME}_\pi(\cdot|\tau)
$.
When all node policies in $\pi$ are MDP optimal node policies, we call $\pi$ an \emph{MDP-optimal policy set}, denoted as $\pi^*$.
\end{definition}
\begin{definition}[\textbf{SMDP-optimal}]\label{def:SMDP_opt}
For a task $\tau$, given the node policies and termination functions of all its descendant tasks, the policy $\hat\pi_{\tau}$ that maximizes $V^\text{SME}_\pi(\cdot|\tau)$ is called the \emph{SMDP optimal node policy} of task $\tau$, denoted as $\hat\pi^{-}_{\tau}$. That is,
$
\hat\pi^-_{\tau} = \argmax_{\hat\pi_\tau}V^\text{SME}_\pi(\cdot|\tau)
$.
When all node policies in $\pi$ are SMDP optimal node policies, we call $\pi$ an \emph{SMDP-optimal policy set}, denoted as $\pi^{-}$.
\end{definition}
\section{Additional theory overview}
\label{sec:4}
\subsection{Limitations of SME}
\label{sec:4.1}
We now formally present the narrow SMDP suboptimality theory under the two-level HRL setting, which provides theoretical guarantees for the limitations of SME.
\begin{table}[t]
\centering
\caption{Optimality comparison table.}
\begin{tabular}{lcc}
\toprule
Expected return & SME & ME \\
\midrule
SMDP-optimal policies & $J_{\mathcal C}$ & $J_{\mathcal B}$ \\
MDP-optimal policies & $J_{\mathcal D}$ & $J_{\mathcal A}$ \\
\bottomrule
\end{tabular}
\label{tab:SubOpt}
\end{table}
\begin{theorem}[\textbf{Narrow SMDP Suboptimality Theorem}]
\label{def:4.1}
In a two-level HRL problem where the environment satisfies the Markov property, the four expected returns generated by the MDP-optimal policy set $\pi^*$ and the SMDP-optimal policy set $\pi^-$ under ME and SME are defined in~\cref{tab:SubOpt}. Then the following ordering holds:
\begin{equation}
J_{\mathcal A} \ge J_{\mathcal B} \ge J_{\mathcal C} \ge J_{\mathcal D}. \nonumber
\end{equation}
\end{theorem}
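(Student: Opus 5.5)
The plan is to prove the three inequalities $J_{\mathcal A}\ge J_{\mathcal B}$, $J_{\mathcal B}\ge J_{\mathcal C}$ and $J_{\mathcal C}\ge J_{\mathcal D}$ separately and then chain them. In the two-level setting the only free node policy is the root's $\hat\pi_{hi}$; the option (leaf-calling) node policies and termination functions are fixed. Their SME and ME objectives coincide because they call primitive actions. So $\pi^-$ and $\pi^*$ differ only in the root policy, $\hat\pi^-_{hi}$ versus $\hat\pi^*_{hi}$, and the table of \cref{tab:SubOpt} is exactly the table of Theorem~\ref{new:thm:generalized_suboptimality} with $h=h'=0$. Throughout I assume the evaluation conditions (bounded rewards, uniform termination). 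Under them Lemma~\ref{def:B.17} gives a unique bounded fixed point of each GHBE operator, so the expected returns $J=\mathbb E_{s\sim\alpha}V(s)$ are well defined and pointwise value inequalities integrate to return inequalities.

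The two outer inequalities are optimality statements. By Definition~\ref{def:MDP_opt}, $\hat\pi^*_{hi}$ maximizes $V^{ME}_\pi(\cdot\mid\tau_{root})$ over root node policies with the children fixed, and $\hat\pi^-_{hi}$ is one such competitor. Hence $V^{ME}_{\pi^*}\ge V^{ME}_{\pi^-}$ pointwise, and integrating against $\alpha$ gives $J_{\mathcal A}\ge J_{\mathcal B}$.

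To make the maximization rigorous I will invoke the GHBOE with the maximum placed only at the root decision node. Its fixed point dominates the value of every fixed root policy, by the policy-iteration argument in Theorem~\ref{def:B.19} and Corollaries~\ref{def:B.20}--\ref{def:B.21}, and this domination rests on Theorem~\ref{new:thm:policy_improvement}. The greedy policy attains the fixed point. The symmetric argument in the SME interruption tree gives $V^{SME}_{\pi^-}\ge V^{SME}_{\pi^*}$, hence $J_{\mathcal C}\ge J_{\mathcal D}$.

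The middle inequality $J_{\mathcal B}\ge J_{\mathcal C}$ is the substantive one, and it is exactly Theorem~\ref{new:thm:optimal_execution} applied at the root. A two-level task subtree satisfies the first alternative of Assumption~\ref{new:ass:compatibility}. Moreover, $\hat\pi^-_{hi}$ is optimal for the execution mode induced by the SME interruption tree. That theorem then yields $V^{SME}_{\pi^-}\le V^{ME}_{\pi^-}$, and integrating gives $J_{\mathcal C}\le J_{\mathcal B}$.

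If I had to argue this step directly, I would use Theorem~\ref{new:thm:execution_improvement} with $e$ as SME and $e'$ as ME, changing the interruption at the option node. The preparation-stage criterion then asks whether the weighted value on continuing the current option is at most the value of returning authority to the root. Under ME all interruption mass sits on the root, so the right-hand side is $V_{e}(s',\tau_{root})$. By root-level optimality of $\hat\pi^-_{hi}$ under $e$, this equals $\max_\omega V_e(s',\omega)$, which dominates any convex combination that includes continuing the current option $\omega$.

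I expect the main obstacle to be this middle step. The subtlety is that the values being compared are those of the old tree $e$ evaluated for the root task, including the exit reward of the option. I must check that "continuing option $\omega$" in the preparation stage is evaluated by the same quantity $V_e(s',\omega\mid\tau_{root})$ over which the root maximizes. That identification is exactly where two-level compatibility is used, so the multilevel counterexample of Figure~\ref{fig:Counterexample} does not apply here.

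Chaining the three inequalities gives $J_{\mathcal A}\ge J_{\mathcal B}\ge J_{\mathcal C}\ge J_{\mathcal D}$.
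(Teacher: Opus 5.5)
Your proposal is correct and follows essentially the same route as the paper, which obtains this theorem as the root specialization ($h=h'$) of its Generalized SMDP Suboptimality Theorem. There, $J_{\mathcal A}\ge J_{\mathcal B}$ and $J_{\mathcal C}\ge J_{\mathcal D}$ come from hierarchical policy iteration converging to the unique GHBOE solution from any starting root policy, resting on the contraction lemma and the policy-improvement theorem; $J_{\mathcal B}\ge J_{\mathcal C}$ comes from the Optimal Execution Theorem, whose proof is exactly your direct argument that $V_e(s',\tau_{root})=\max_\omega V_e(s',\omega\mid\tau_{root})$ dominates every preparation-stage mixture, combined with the Execution Improvement Theorem.
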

This paper also proposes a \textbf{Generalized SMDP Suboptimality Theorem} under more-than-two-level and arbitrary execution modes, which constitutes the core theoretical contribution of this work. We provide detailed \textcolor{black}{formulations, } descriptions and proofs in the~\cref{sec:B}, where the narrow SMDP suboptimality theory is also proven as a special case.
\begin{proof}[Proof Sketch \textcolor{black}{for \cref{def:4.1}}.]
\textbf{Conceptual Clarification:} This paper notes that each task $\tau$ has two termination functions with different roles. One controls the optimization objective of $\hat\pi_\tau$, namely the \textbf{termination function} $\beta_\tau$ introduced in~\cref{sec:2.2}. The other controls the duration of $\tau$'s transitions when it is called as a subtask during interaction, defined as the \textbf{interrupt function} $\hat\beta_\tau$. In this context, we observe that when using ME, $\hat\beta_\tau \equiv 1$, while when using SME, $\hat\beta_\tau \equiv \beta_\tau$.
In our proof, we further introduce the \textbf{task tree} and the \textbf{execution tree} to decouple the binding of $\hat\beta_\tau$ and $\beta_\tau$ in existing HRL, which respectively describe the complex task optimization objectives and task execution modes across multiple hierarchical levels.
In simple terms, for each $\tau$, the task tree corresponds to the task decomposition, which determines the termination conditions of $\tau$ as well as the boundary cases when returning to different upper-level tasks. The execution tree, on the other hand, acts as the policy scheduler, specifying how $\tau$ calls its lower-level descendant tasks. By defining these two types of trees, we can precisely formalize the complex situations in more-than-two-level hierarchical settings where the termination or interruption of any upper-level task of $\tau$ will cause $\tau$ itself to be terminated or interrupted.
\textbf{Unified Formalization:} The value computation under different execution modes cannot be directly described by the Bellman equation, since action selection depends on both the current frame's state and the task. At each frame during interaction, the hierarchical level of a task is jointly influenced by $\hat\pi$ and $\hat\beta$. For instance, in a two-level HRL, under ME, decisions are made from $\hat\pi_{hi}$ downward at every frame, whereas under SME, downward decisions from $\hat\pi_{hi}$ occur only when a subtask ends, and the remaining frames are decided by $\hat\pi_{lo}$.
To address this, we describe the generalized interaction process of HRL using four stages: \textbf{Preparation stage:} Determine which task makes downward decisions based on $\hat\beta$. \textbf{Decision stage:} The task determined in the previous stage makes downward decisions according to $\hat\pi$ until an action $a$ is selected. \textbf{Interaction stage:} Each task receives rewards according to the agent's transitions. \textbf{Termination stage:} Tasks in execution determine termination based on $\beta$, including boundary conditions for termination.
Based on four stages, we propose a \textbf{Unified Value Function for HRL(UVFH)}. Using UVFH, we establish the \textbf{Generalized Hierarchical Bellman Equation (GHBE)} and the \textbf{Generalized Hierarchical Bellman Optimality Equation (GHBOE)}, which describe the dynamic value transitions across the four stages and serve as the foundation for subsequent theoretical analysis. Here "generalized" refers to arbitrary execution modes and task hierarchy configurations.
\textbf{Theoretical Proofs:} We prove that the iterative operators of the two equations above are Contraction Mappings, thereby providing theoretical guarantees for the convergence of their solutions. On this basis, we further prove the \textbf{Generalized Hierarchical Policy Improvement Theorem}, which ensures that under the ME and SME execution settings, the optimal node policies induced by the GHBOE correspond to the MDP optimal node policies and the SMDP optimal node policies, respectively. Consequently, we establish that
$J_{\mathcal A} \geq J_{\mathcal B}$ and $J_{\mathcal C} \geq J_{\mathcal D}.
$
Next, we prove the \textbf{Hierarchical Execution Improvement Theorem} for execution modes controlled by $\hat\beta$. Using this theorem, together with certain subtask conditions (satisfied by two-level HRL), we prove the \textbf{Optimal Execution Theorem}. This theorem guarantees that, under some conditions, the expected return of any execution mode is no greater than that under ME, i.e.,
$J_{\mathcal B} \geq J_{\mathcal C}.$
In particular, we also provide counterexamples for multi-level HRL settings where the conditions are not satisfied for descendant task, demonstrating that the optimality of ME is not evident.
\end{proof}
\textcolor{black}{
In the process of proving~\cref{def:4.1}, we derived the following key theorems.
The relevant concepts and their precise formal definitions can be found in~\cref{sec:B}.
}
\textcolor{black}{
\begin{theorem}[\textbf{Hierarchical Policy Improvement Theorem}]
\label{def:4.1.1}
Given execution trees $e$ and $e'$, suppose that the only difference between them lies in the node policy functions $\hat{\pi}_{n(h)}$ and $\hat{\pi}'_{n(h)}$ at state $s$.
If
$
\mathbb{E}_{ \tau_{n(h+1)} \sim \hat{\pi}_{n(h)}(s) }
V_{e|n(h')} (s, \tau_{n(h+1)})
\leq
\mathbb{E}_{ \tau_{n(h+1)} \sim \hat{\pi}'_{n(h)}(s) }
V_{e|n(h')} (s, \tau_{n(h+1)}),
$
where $h \geq h'$,
then for all $s$, it holds that
$
V_{e|n(h')} (s, \tau_{n(h)})
\leq
V_{e'|n(h')} (s, \tau_{n(h)})
$
.
\end{theorem}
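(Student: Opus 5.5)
We prove this with the standard policy-improvement argument, carried out at the level of the GHBE operator rather than trajectory by trajectory. Fix the evaluated task $\tau_{n(h')}$ and the executing subtree $e|n(h')$. Let $\mathcal T_e$ and $\mathcal T_{e'}$ denote the four-stage GHBE operators of $e$ and $e'$, composed into a single map on extended-state value functions $V(\cdot,\cdot\mid\tau_{n(h')})$. The two operators are built from the same task tree, interruption tree, transition probabilities $\gamma,\hat\gamma$ and other node policies. Their only difference is the decision-stage expectation at node $n(h)$ at state $s$.

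\textbf{Step 1: monotonicity.} Every stage of the GHBE is an expectation or a nonnegative-weighted sum of values, plus value-independent reward terms. The weights are $\hat\pi$, $\mathrm T$, $\gamma_{n(h')}$ and $\hat\gamma$; the reward terms are $R$ and the exit rewards $\gamma_{n(k)}\hat R$. Hence $\mathcal T_e$ and $\mathcal T_{e'}$ are monotone: $V\le W$ pointwise implies $\mathcal T V\le\mathcal T W$.

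\textbf{Step 2: one-step comparison.} The hypothesis says that replacing $\hat\pi_{n(h)}(s)$ by $\hat\pi'_{n(h)}(s)$ in the decision equation \eqref{new:eq:ghbe_decision}, with the old values $V_e$ on the right-hand side, does not decrease the value at $(s,\tau_{n(h)})$. Values at ancestor decision states are averages over the descending chain that passes through $(s,\tau_{n(h)})$, so they also do not decrease. All other decision states, and all interaction, termination and preparation stages, are identical under the two operators. Therefore $V_e=\mathcal T_eV_e\le\mathcal T_{e'}V_e$ pointwise.

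\textbf{Step 3: propagation to the fixed point.} Induction with monotonicity gives $V_e\le\mathcal T_{e'}^kV_e$ for all $k$. The evaluation conditions (uniform termination $\epsilon$ and bounded rewards $M$) make $\mathcal T_{e'}$ a $(1-\epsilon)$-contraction with unique fixed point $V_{e'}$ (Lemma~\ref{def:B.17}). By Corollary~\ref{def:B.18}, $\mathcal T_{e'}^kV_e\to V_{e'}$. Taking limits yields $V_e\le V_{e'}$ everywhere, in particular $V_{e|n(h')}(s,\tau_{n(h)})\le V_{e'|n(h')}(s,\tau_{n(h)})$ for all $s$.

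\textbf{Main obstacle.} The delicate step is defining $\mathcal T$ so that it is a genuine one-frame contraction. The decision stage maps values to values within the same frame, with no discounting. It must therefore be composed with the interaction, termination and preparation stages into one map, so that each application crosses exactly one environment frame. This requires care in the case $h>h'$, where a within-frame descent starting at the subtree root passes through $n(h)$, so the changed node feeds into the chain of composed decision stages. Only after this unrolling does the survival factor $\gamma_{n(h')}\le1-\epsilon$ appear and give contraction.

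I would handle this by defining $\mathcal T$ directly on the preparation-stage values $\hat V$: substitute the decision recursion, which terminates in finitely many downward steps because the tree has finite depth, into the interaction equation. Steps 1–3 then run on $\hat V$ and transfer to all decision states by monotone averaging.
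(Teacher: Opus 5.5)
Your proposal is correct and follows essentially the paper's proof. The paper likewise shows $V_{e|n(h')}=\mathcal F^{e,\tau}_{n(h')}V_{e|n(h')}\le\mathcal F^{e',\tau}_{n(h')}V_{e|n(h')}$: the gain at an ancestor decision state is weighted by the probability $\mathbb P[n(\hat h)\to n(h)]$ of descending to $n(h)$, with equality elsewhere. It then iterates $\mathcal F^{e',\tau}_{n(h')}$ and passes to the limit via Lemma~\ref{def:B.17}. Your explicit monotonicity step supplies what the paper uses implicitly in the chain $\mathcal F^{e'}V_e\le(\mathcal F^{e'})^kV_e$. The one-frame packaging you flag as the main obstacle is already built into the paper's operator, which folds the within-frame descent into the induced action policy $\pi_{n(i)}$.
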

}
\textcolor{black}{
\begin{theorem}[\textbf{Hierarchical Execution Improvement Theorem}]
\label{def:4.1.2}
Given execution trees $e$ and $e'$, suppose that only the decision interruption functions $\hat\beta_{n(h)}$ and $\hat\beta'_{n(h)}$ differ at $s'$. Let decision transition mechanism $\hat\gamma$ and $\hat\gamma'$ be computed from $\hat\beta$ and $\hat\beta'$, respectively. If for all $n(l)$ it holds that
$
\sum^l_{k = h'} \hat\gamma_{n(k)} (s') V_{e|n(h')} (s', \tau_{n(k)}| \tau_{n(h')})
\leq
\sum^l_{k = h'} \hat\gamma'_{n(k)} (s') V_{e|n(h')} (s', \tau_{n(k)}| \tau_{n(h')}),
$
then for all $s$ we have
$
V_{e|n(h)} (s, \tau_{n(h)}) \leq V_{e'|n(h)} (s, \tau_{n(h)}),
$
where $l \geq h \geq h'$, and $l$ denotes the level of the task directly calling an action in the previous frame.
\end{theorem}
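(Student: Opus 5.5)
The plan is to treat the Hierarchical Execution Improvement Theorem as a monotone-operator comparison on the GHBE, analogous to the policy-improvement argument. Let $\mathcal{B}_e$ and $\mathcal{B}_{e'}$ be the one-frame GHBE operators for the subtree $e|n(h)$ and $e'|n(h)$. These compose the decision stage \eqref{new:eq:ghbe_decision}, interaction \eqref{new:eq:ghbe_interaction}, termination \eqref{new:eq:ghbe_termination} and preparation \eqref{new:eq:ghbe_preparation}. Since $e$ and $e'$ share the task tree $\tau$ and the policy tree $\Pi$, the decision, interaction and termination stages coincide. The two operators differ only in the preparation stage at $s'$, through the weights $\hat\gamma$ versus $\hat\gamma'$. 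I will rely on the evaluation results (Lemma~\ref{def:B.17}, Corollary~\ref{def:B.18}), taken as given: under uniform termination and bounded rewards, each operator is a $(1-\epsilon)$-contraction with a unique bounded fixed point.

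\textbf{Step 1: one-step improvement.} Apply $\mathcal{B}_{e'}$ to the old fixed point $V_e$. For each depth $l$ of the action-selecting node, the hypothesis says the preparation value $\hat V$ computed with $\hat\gamma'$ weakly dominates the one computed with $\hat\gamma$. The hypothesis uses values conditioned on $\tau_{n(h')}$ with $h'\le h$; I need to relate these to the evaluated task $\tau_{n(h)}$. My plan is to note that only the coordinates inside $e|n(h)$ are reallocated by $\hat\beta_{n(h)}$. Since authority outside the subtree is untouched, the comparison reduces to the terms $k\ge h$, where the decision states and rewards agree. This yields $\hat V'_{\text{prep}}\ge \hat V_{\text{prep}}$ pointwise.

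Termination multiplies this quantity by the nonnegative survival probability $\gamma_{n(h)}(s')$ and adds exit terms that do not depend on $\hat\beta$. Interaction takes an expectation over $s'$, and the decision stage averages with the fixed policies. All of these operations are monotone. Hence $\mathcal{B}_{e'}V_e\ge \mathcal{B}_e V_e=V_e$ componentwise over all extended states $(s,\tau_{n(i)})$.

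\textbf{Step 2: propagation to the fixed point.} Monotonicity of $\mathcal{B}_{e'}$ and induction give $\mathcal{B}_{e'}^{m}V_e\ge V_e$ for every $m$. By the contraction property, $\mathcal{B}_{e'}^{m}V_e\to V_{e'}$, so $V_{e'}\ge V_e$. In particular this holds at $(s,\tau_{n(h)})$ for all $s$.

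\textbf{Main obstacle.} The hard part is Step 1's reconciliation of conditioning and subtree indices. The hypothesis is stated for values of $e|n(h')$ conditioned on $\tau_{n(h')}$, while the conclusion concerns $e|n(h)$ evaluating $\tau_{n(h)}$. The depth $l$ of the last action-selecting node also varies.

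I would first try to show that the hypothesis, quantified over every $n(l)$, is exactly the statement that the reallocation of authority performed by $\hat\beta_{n(h)}$ is weakly improving at every possible preparation configuration. I would do this by writing $\hat\gamma_{n(k)}$ explicitly as products of $\hat\beta$ along the path from $l$ up to $k$, and noting that $\hat\gamma_{n(k)}=\hat\gamma'_{n(k)}$ for $k$ above $h$.

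If the conditioning mismatch does not vanish, the fallback is to restrict attention to the case $h'=h$, where the hypothesis is literally the preparation-stage inequality needed. I would then argue that interruptions originating above $h$ are not part of $e|n(h)$, so the general case reduces to this one.
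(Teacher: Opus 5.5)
Your Steps 1 and 2 follow the paper's proof. It applies $\mathcal{F}^{e',\tau}_{n(h')}$ to the old fixed point $V_{e|n(h')}$. At every leaf state and every $n(l)$, the difference from $\mathcal{F}^{e,\tau}_{n(h')}V_{e|n(h')}$ is a nonnegative transition weight times $\gamma_{n(h')}(s')$ times the difference of the two preparation sums in the hypothesis. The inequality is pushed up through the decision stage as in the policy-improvement proof, and iterating the monotone contraction gives $V_{e|n(h')}\le V_{e'|n(h')}$.

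The problem is the bridge in your ``main obstacle'' paragraph, which would not go through, for two reasons.

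\emph{Where the change acts.} $\hat\beta_{n(h)}$ does not only reallocate authority inside $e|n(h)$. An interruption at $e_{n(h)}$ hands authority to $e_{n(h-1)}$. So under $e|n(h')$, when $l\ge h$, the change shifts weight between the terms $k\ge h$ and the term $k=h-1$. It does so through $\hat\gamma_{n(h-1)}=\hat\beta_{n(h)}\prod_{k=h'+1}^{h-1}(1-\hat\beta_{n(k)})$ and the factor $1-\hat\beta_{n(h)}$ in every $\hat\gamma_{n(k)}$ with $k\ge h$. Only the terms with $k\le h-2$ are unchanged. Conversely, inside $e|n(h)$ the root is never interrupted, so $\hat\gamma^{n(h)}$ does not involve $\hat\beta_{n(h)}$ at all. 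Your fallback case $h'=h$ is therefore vacuous for this change, and nothing reduces $h'<h$ to it.

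\emph{Which objective the values evaluate.} The values in the hypothesis accumulate $R_{n(h')}$ and $\hat R_{n(h')}$ up to the termination boundary of $\tau_{n(h')}$. By contrast, $V_{e|n(h)}(\cdot\mid\tau_{n(h)})$ uses $R_{n(h)}$, $\hat R_{n(h)}$ and a different boundary. The rewards do not ``agree'' (think of a goal-conditioned subtask with its own reward). An inequality between $\hat\gamma$- and $\hat\gamma'$-weighted sums of one function says nothing about the corresponding sums of another.

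The fix is not to transfer the hypothesis at all. Define your operators for the subtree $e|n(h')$ evaluating $\tau_{n(h')}$. This is exactly the setting in which Lemma~\ref{def:B.17} gives the contraction, and the hypothesis is then literally the preparation-stage inequality. Steps 1 and 2 yield $V_{e|n(h')}(s,\tau_{n(\hat h)}\mid\tau_{n(h')})\le V_{e'|n(h')}(s,\tau_{n(\hat h)}\mid\tau_{n(h')})$ for every decision state, in particular at $\tau_{n(h)}$.

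This is the conclusion the paper's proof actually ends with. Its one intermediate line written with $\mathcal{F}^{e,\tau}_{n(h)}$ is not derived by transferring the hypothesis to the objective of $\tau_{n(h)}$. It is also all that is needed downstream: the Optimal Execution Theorem uses the result with $h'=h$ and the whole interruption subtree switched to ME. Your argument covers that case verbatim, because nothing in Steps 1 and 2 uses that only a single $\hat\beta$ differs.
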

}
\textcolor{black}{
\begin{theorem}[\textbf{Optimal Execution Theorem}]
\label{def:4.1.3}
Under~\cref{def:B.16}, if the node policy $\hat\pi_{n(h)}$ is the optimal policy for the execution induced by any interruption subtree $\mathbf{B}|n(h)$, then
$
V^{\mathbf{B}|n(h)}_{\Pi|n(h)} \leq V^{\text{ME}}_{\Pi|n(h)},
$
i.e., ME is the optimum.
\end{theorem}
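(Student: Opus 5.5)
The plan is to derive the Optimal Execution Theorem from the Hierarchical Execution Improvement Theorem (Theorem~\ref{def:4.1.2}). We pass from an arbitrary interruption subtree $\Beta|n(h)$ to ME by switching interruption functions to $\hat\beta\equiv1$ one node at a time, and show that each switch satisfies the preparation-stage inequality of that theorem. Fix $\Pi|n(h)$ with $\hat\pi_{n(h)}$ optimal for the GHBOE induced by $\Beta|n(h)$, so that $V^{\Beta|n(h)}_{\Pi|n(h)}$ coincides with the GHBOE fixed point $V^*$ in which only the $n(h)$ decision stage carries a maximum. The evaluation lemma (contraction with factor $1-\epsilon$, Lemma~\ref{def:B.17}) guarantees that all values below are well defined and bounded.

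\textbf{Key inequality.} After an action chosen at depth $l$, the preparation stage returns authority to some node $n(k)$, $h\le k\le l$, and evaluates $\sum_{k}\hat\gamma_{n(k)}(s')V(s',\tau_{n(k)}\mid\tau_{n(h)})$. Under ME the whole weight sits on $k=h$, and $V^*(s',\tau_{n(h)}\mid\tau_{n(h)})=\max_{\tau'}V^*(s',\tau'\mid\tau_{n(h)})$ by the optimality of $\hat\pi_{n(h)}$. We therefore need
\[
V^*(s',\tau_{n(k)}\mid\tau_{n(h)})\le \max_{\tau'\in\mathcal T'_{\tau_{n(h)}}}V^*(s',\tau'\mid\tau_{n(h)})\qquad\text{for every }k>h .
\]
The right-hand side then dominates any convex combination over $k$, which is exactly the hypothesis of Theorem~\ref{def:4.1.2} with $h'=h$.

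\textbf{Using compatibility.} In the two-level case, every $n(k)$ with $k>h$ is a child of $\tau_{n(h)}$, so the bound holds trivially because the maximum ranges over that node. In the nested case, a deeper decision node $\tau_{n(k)}$ proceeds by choosing from $\mathcal T'_{\tau_{n(k)}}\subseteq\mathcal T'_{\tau_{n(h)}}$. Its value under the evaluated objective $\tau_{n(h)}$ is then an average, through the decision stage~\eqref{new:eq:ghbe_decision}, of values $V^*(s',\tau''\mid\tau_{n(h)})$ with $\tau''$ drawn from the parent's own child set. Each of these is therefore at most the maximum. I would establish this by induction down the decision chain from depth $l$ to $h+1$.

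\textbf{Main obstacle.} The step I expect to be hardest is that the identified child $\tau''$, reached through a descendant, may sit at a different tree position than the same label called directly by $n(h)$. Its subsequent interruption and termination boundaries could then differ, so the two values would not literally be equal. I would try first to argue that, under compatibility and with boundaries evaluated for the condition $\tau_{n(h)}$, the continuation value depends only on the task label and $s'$ once the next preparation stage arrives. Having obtained the inequality, I apply Theorem~\ref{def:4.1.2} once per node, sweeping bottom-up from the deepest interruption functions to $\hat\beta_{n(h)}$. Each application needs the current node policy to remain greedy for the updated value, and I would secure this with Theorem~\ref{def:4.1.1}: once ME is reached, re-optimizing $\hat\pi_{n(h)}$ can only increase the value. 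Chaining the inequalities gives $V^{\Beta|n(h)}_{\Pi|n(h)}\le V^{ME}_{\Pi|n(h)}$.
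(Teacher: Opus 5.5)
Your key inequality, fed into Theorem~\ref{def:4.1.2} with $h'=h$, is the paper's argument. The node-by-node sweep you wrap around it is a genuine gap. After the first switch the value becomes some $V_{e_1}\neq V^{\mathbf{B}|n(h)}_{\Pi|n(h)}$, and $\hat\pi_{n(h)}$ need not be greedy for $V_{e_1}$. So the identity $V(s',\tau_{n(h)})=\max_{\tau'}V(s',\tau')$, on which your key inequality rests, is unavailable at the next step.

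Your patch, re-optimizing $\hat\pi_{n(h)}$ via Theorem~\ref{def:4.1.1}, replaces $\Pi$ by another tree $\Pi'$. What you would then obtain is $V^{\mathbf{B}|n(h)}_{\Pi|n(h)}\le V^{\mathrm{ME}}_{\Pi'|n(h)}$. Since $V^{\mathrm{ME}}_{\Pi'|n(h)}\ge V^{\mathrm{ME}}_{\Pi|n(h)}$, this says nothing about the fixed $\Pi$ in the statement. That matters downstream, because $J_{\mathcal B}\ge J_{\mathcal C}$ compares the same $\Pi^-$ under both modes.

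The sweep is also unnecessary. Take $e=(\Pi,\mathbf{B})$ and $e'=(\Pi,\mathrm{ME})$. The proof of the execution-improvement theorem only compares the two operators at the old value, $\mathcal F^{e',\tau}_{n(h)}V_e\ge\mathcal F^{e,\tau}_{n(h)}V_e=V_e$, and then iterates the monotone contraction $\mathcal F^{e',\tau}_{n(h)}$ to its fixed point. It never uses that only one node or one state changed. The paper applies it once, switching $\mathbf{B}|n(h)$ to ME in a single step. The key inequality is then needed only for $V^{\mathbf{B}|n(h)}_{\Pi|n(h)}$, where greediness of $\hat\pi_{n(h)}$ is exactly the hypothesis.

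Your ``main obstacle'' is real, and the paper does not resolve it either. Its displayed chain simply writes $\max_{\tau_{n(h+1)}\in\mathcal T'_{\tau_{n(\hat h)}}}V(s,\tau_{n(h+1)})=\max_{\tau_{n(\hat h+1)}\in\mathcal T'_{\tau_{n(\hat h)}}}V(s,\tau_{n(\hat h+1)})$. Your proposed resolution, that the continuation depends only on the label and $s'$, is false for a general $\mathbf{B}$. A copy of a task at depth $\hat h+1$ can return authority to $n(h+1),\dots,n(\hat h)$ rather than to $n(h)$. Being a distinct execution node, it may also carry a different $\hat\pi$ and $\hat\beta$.

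In the nested branch of the compatibility condition you can avoid the identification altogether:
\begin{itemize}
\item Every primitive action used below $n(h)$ lies in $\mathcal T'_{\tau_{n(h)}}$.
\item An action chosen by $\hat\pi_{n(h)}$ itself has $l=h$, so authority returns to $n(h)$ at the next preparation stage.
\item The maximum in the root's decision stage therefore gives $V^{\mathbf{B}}(\cdot,\tau_{n(h)})\ge\mathcal G V^{\mathbf{B}}(\cdot,\tau_{n(h)})$. Here $\mathcal G$ is the flat Bellman optimality operator over those primitive actions for the evaluated objective (reward $R_{n(h)}$, continuation $\gamma_{n(h)}$, exit reward $\hat R_{n(h)}$).
\item By monotonicity and the $(1-\epsilon)$-contraction of $\mathcal G$, $V^{\mathbf{B}}(\cdot,\tau_{n(h)})$ is at least the flat optimal value.
\item The flat optimal value bounds the value from every extended state $(s,\tau_{n(k)})$, because each such process is a randomized, history-dependent action policy for the same objective.
\end{itemize}
In the two-level branch every task node below $n(h)$ is a direct child of $n(h)$, so the inequality is immediate.
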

}
Through the SMDP suboptimality theory, we reveal the expected return suboptimality of SME during both the training and deployment phases. \textcolor{black}{
To avoid policy suboptimality, the target policy should use ME to approaches $J_{\mathcal{A}}$ instead of being limited by $J_{\mathcal{B}}$. To avoid execution suboptimality, the deployment policy shoud use ME to approaches $J_{\mathcal{B}}$(or $J_{\mathcal{A}}$) instead of being limited by $J_{\mathcal{C}}$(or $J_{\mathcal{D}}$).
}
\subsection{Limitations of ME}
Unlike SME, under ME, $\hat\beta$ and $\beta$ are set differently, which may result in frequent interruptions of a task by higher-level tasks during its execution. During task optimization, deviations between the target policy and the behavior policy may occur, as indicated by the red marks in~\cref{fig:ME_offline}. This makes it difficult for the boundary conditions controlled by $\beta$ at task termination to be properly back-propagated along the interaction trajectory during dynamic programming.
\begin{figure*}[t]
\centering
\includegraphics[width=0.95\textwidth]{
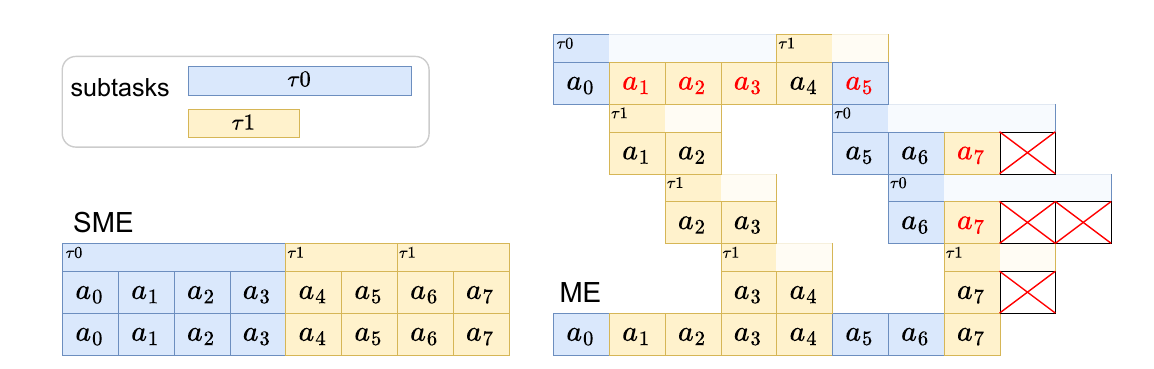
}
\caption{Subtask off-policy issue under ME. Following the example in Figure~1, the bottom row of the action sequence represents the agent's interaction trajectory with the environment,
where blue and yellow colors are used to distinguish whether an action is decided by the node policy of subtask $\tau 0$ or $\tau 1$ respectively.
Within each subtask box, the action sequence corresponds to the truncated trajectory used for training that subtask.
Whenever the truncated trajectory contains data that is not decided by the node policy of the corresponding subtask,
or when trajectory data is missing, these actions are highlighted in red, indicating the occurrence of an off-policy issue.
}
\label{fig:ME_offline}
\end{figure*}
Therefore, if the agent directly interacts with the environment using ME during training, the node policies of all subtasks will face severe off-policy issues, making efficient training difficult. \textcolor{black}{To ensure stable sampling and learnability of lower-level subtasks during training, the behavior policy should still use SME.}
\subsection{The exploration--exploitation tradeoff induced by execution modes}
\textcolor{black}{Given that we have already adopted a hierarchical architecture, what extra suboptimality is introduced purely by the SME, and how can we remove it by using a more suitable execution mode?}
Based on the theoretical analysis above, the problem can be reduced to an exploration-exploitation tradeoff caused by the execution mode in HRL.
\textcolor{black}{
Overall, the common HRL design in which both target and behavior policies use SME is exactly the fundamental design flaw.}
\textbf{During the training phase:} Directly using ME to learn the MDP-optimal policy set results in low subtask sampling efficiency, leading to an exploration problem; directly using SME to learn the SMDP-optimal policy set results in policy suboptimality, leading to an exploitation problem.
\textbf{During the deployment phase:} The optimal execution mode, ME, should be used to ensure exploitation.
Therefore, a reasonable HRL algorithm should balance exploration and exploitation, interacting with the environment via SME to generate data while learning the MDP-optimal policy set in some way.
\section{Framework implementations}
Based on~\cref{sec:4}, this paper designs three theoretically guaranteed frameworks for improving HRL algorithms
by separating the task termination function from the interruption functions and independently specifying the execution modes of the \textbf{behavior policy}, \textbf{target policy}, and \textbf{deployment policy}:
the \emph{Execution Suboptimality Improvement Framework (ESIF)}, the \emph{Two-Stage Policy Suboptimality Improvement Framework (PSIF-2S)}, and the \emph{One-Stage Policy Suboptimality Improvement Framework (PSIF-1S)}. Existing HRL algorithms can be improved through these frameworks to balance the exploration-exploitation tradeoff and mitigate the SMDP suboptimality of the original algorithms to varying degrees.
\subsection{Frameworks}
\textbf{ESIF:} This framework does not modify the training process of the algorithm and directly uses the original algorithm to learn $\pi^-$ (i.e., by sampling data through SME to solve the SMDP-optimal policy set). During the deployment phase, using ME can achieve the expected return $J_{\mathcal B}$.
This framework is analogous to using $\epsilon$-greedy SARSA~\citep{1994On} to train the agent in RL, followed by greedy execution during deployment.
\textbf{PSIF-2S:} Each task node in the task hierarchy has two training stages.
\textbf{Stage 1:} A task node calls its children using SME and trains its node policy for the corresponding SME objective.
\textbf{Stage 2:} Once every child has completed its own Stage 2 and its policy is fixed, the parent calls its children using ME and trains its MDP node policy. The parent policy is fixed only after this second-stage optimization has approximately converged.
The lowest trainable task nodes, whose children are primitive actions, start in Stage 2 because their SME and ME objectives coincide; all other task nodes start in Stage 1. During training, each unfinished node solves the GHBOE for its current node-local execution mode. Stage transitions propagate from the bottom upwards, and nodes at different depths may be in different stages simultaneously. Training ends when every task node has completed Stage 2. ME deployment then corresponds to $J_{\mathcal A}$ in the idealized comparison. Here stage numbers describe a node's own training and its calls to children, consistently with the general framework in Section~\ref{new:sec:frameworks}.
\textbf{PSIF-1S:} Each task node is called using SME during data collection. When a particular node policy is optimized, its target calls descendants within its subtree using ME, while execution outside that subtree remains SME. GHBE supplies its single-frame bootstrap and GHBOE its MDP node-policy objective $\hat\pi_\tau^*$. The same construction applies at every depth. During deployment, the whole policy tree uses ME, corresponding to $J_{\mathcal A}$ in the idealized comparison.
This is analogous to using Q-Learning~\citep{watkins1992q} to train the agent in RL, followed by greedy execution during deployment.
We provide the pseudocode and improvement instances for the three frameworks in~\cref{sec:C}.
\subsection{Algorithm improvement instances}
The OC algorithm is a representative algorithm of the Options Framework~\citep{stolle2002learning,riemer2018learning,brunskill2014pac,harutyunyan2019termination}, suitable for verifying theoretical correctness.
\textbf{ESIF framework:} During deployment, options can be selected greedily at each frame according to the higher-level value function.
\textbf{PSIF-2S framework:}
During the root's Stage 1, options are called using SME to collect trajectory data; the lowest task nodes learn their option policies, for which primitive-action SME and ME coincide.
The value updates of both high-level and low-level node policies follow the original algorithm updated by:
\textcolor{black}{
\[
\begin{aligned}
V(s_t,\omega_t),\,V(s_t,\omega_t,a_t)\gets{}&R(s_t,a_t)+\gamma\big(\beta_{\omega_t}(s_{t+1}) \max_{\omega} V(s_{t+1},\omega) \\
&+ (1-\beta_{\omega_t}(s_{t+1})) V (s_{t+1},\omega_t)\big)
\end{aligned}
\]
}
where $\omega$ denotes an option subtask,
and the left arrow indicates that the left-hand side is trained under its input distribution sampling,
using the right-hand side as the supervision target via Mean Squared Error.
$V(s,\omega)$ is used to induce the greedy high-level node policy, to provide gradients for the termination function of the low-level policy, and can also serve as a baseline for low-level policy gradient updates. The original Option-Critic paper also used an intra-option policy-gradient baseline in its Atari experiments~\citep{AAAI17OC}.
During the root's Stage 2, it calls the completed options using ME. The option policies and termination functions are fixed, and the upper-level value update follows the GHBOE under ME:
\[
V(s_t,\omega_t) \leftarrow
R(s_t,a_t) + \gamma \max_{\omega} V(s_{t+1},\omega)
\]
\textbf{PSIF-1S framework:} The agent interacts with SME, but the higher-level values are updated off-policy according to the GHBOE under ME:
\[
V^{hi}(s_t,\omega_t) \leftarrow R(s_t,a_t)+\gamma \max_{\omega} V^{hi}(s_{t+1},\omega)
\]
Primitive-action SME and ME coincide within an option, but the option-exit boundary must still be specified. The PSIF-1S instance studied here uses the upper ME value on option exit, yielding the lower updates: \textcolor{black}{
\[
\begin{aligned}
V^{lo}(s_t,\omega_t),\,V^{lo}(s_t,\omega_t,a_t)\gets{}&R(s_t,a_t)+\gamma \big(
\big(1-\beta_{\omega_t}(s_{t+1})\big)V^{lo}(s_{t+1},\omega_t)\\
&+\beta_{\omega_t}(s_{t+1})\max_{\omega} V^{hi}(s_{t+1},\omega)\big).
\end{aligned}
\]
}
$V^{hi}(s,\omega)$ induces the higher-level greedy node policy; $V^{lo}(s,\omega)$ supplies the lower-level policy and termination gradients. The full updates and the separate termination baseline are given in Equations~\eqref{new:eq:oc_offtrace}--\eqref{new:eq:oc_policy}, with the Fourrooms termination-update variant described in Appendix~\ref{new:app:oc_updates}.
\section{Additional experimental analysis}
This section analyzes ESIF in OC and HAC and compares the two PSIF training procedures in OC. The learning curves, percentage improvements, and uncertainty summaries use the experimental settings described in Appendix~\ref{sec:D}. Section~\ref{sec:stochastic-experiments} studies the effects of controlled transition kernels.
\begin{figure*}[tbp]
	\centering
	\begin{subfigure}[b]{\textwidth}
			\includegraphics[width=0.49\textwidth]{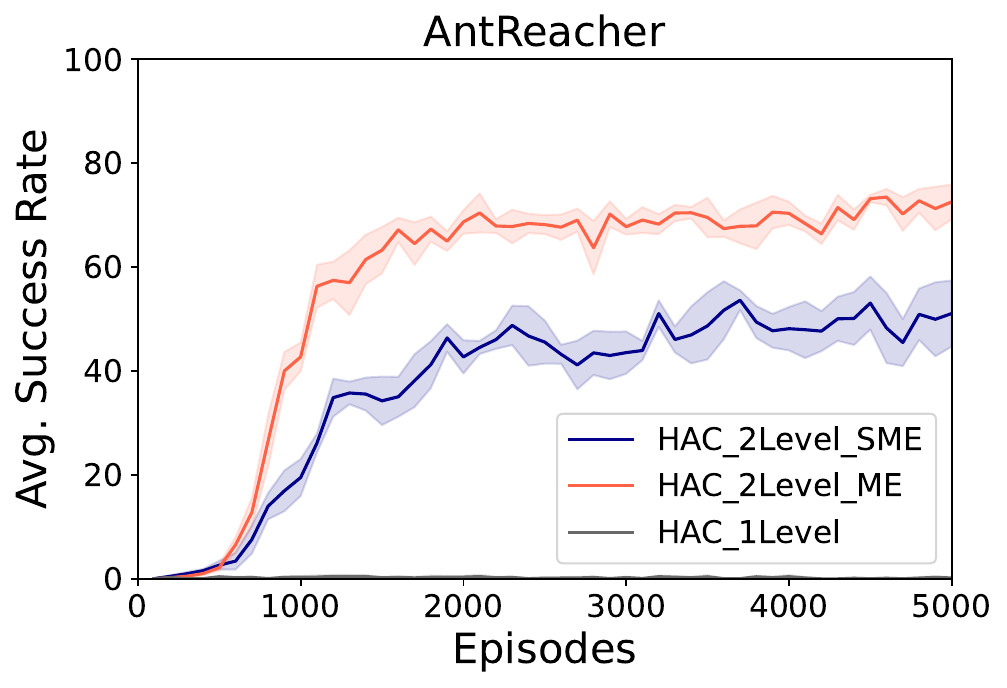}
			\includegraphics[width=0.49\textwidth]{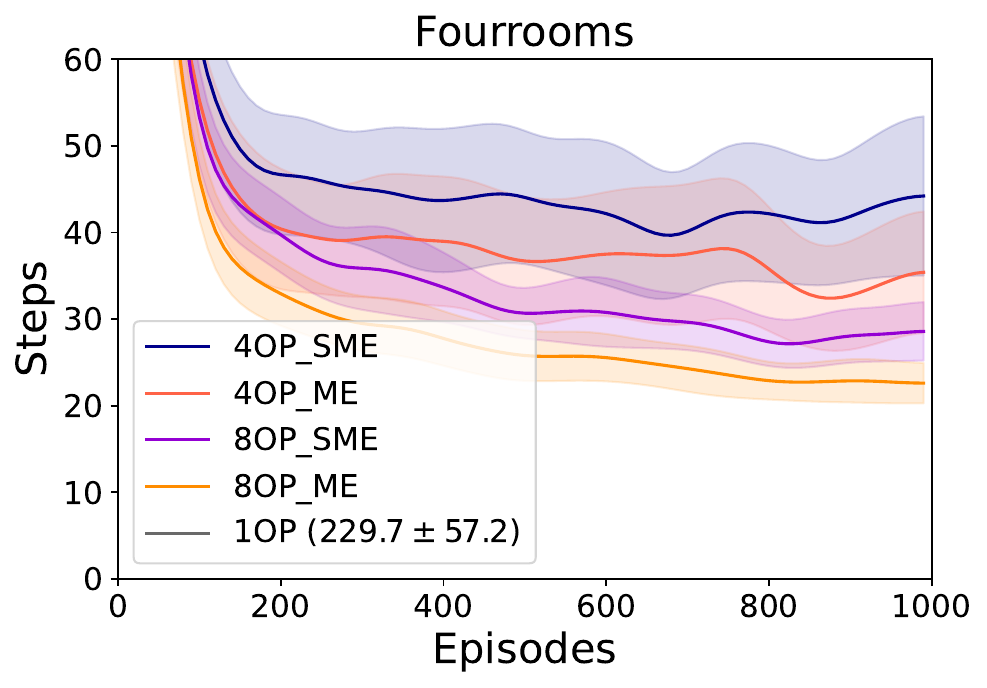}
			\caption{ESIF for HAC and ESIF for OC.}
			\label{fig:Experiments-1}
	\end{subfigure}
	\par\medskip
	\begin{subfigure}[b]{\textwidth}
			\includegraphics[width=0.49\textwidth]{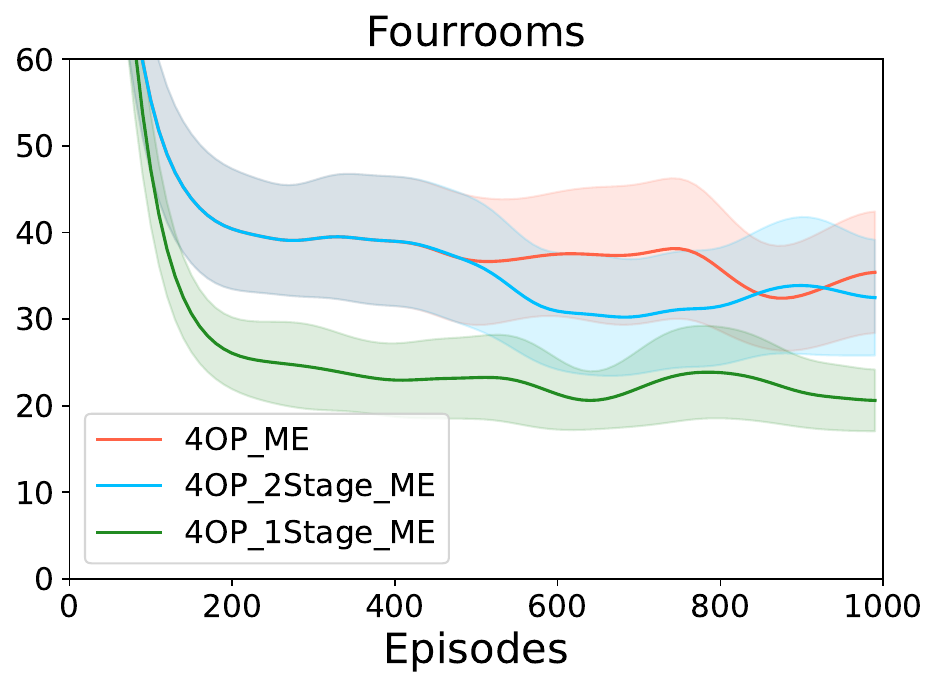}
			\includegraphics[width=0.49\textwidth]{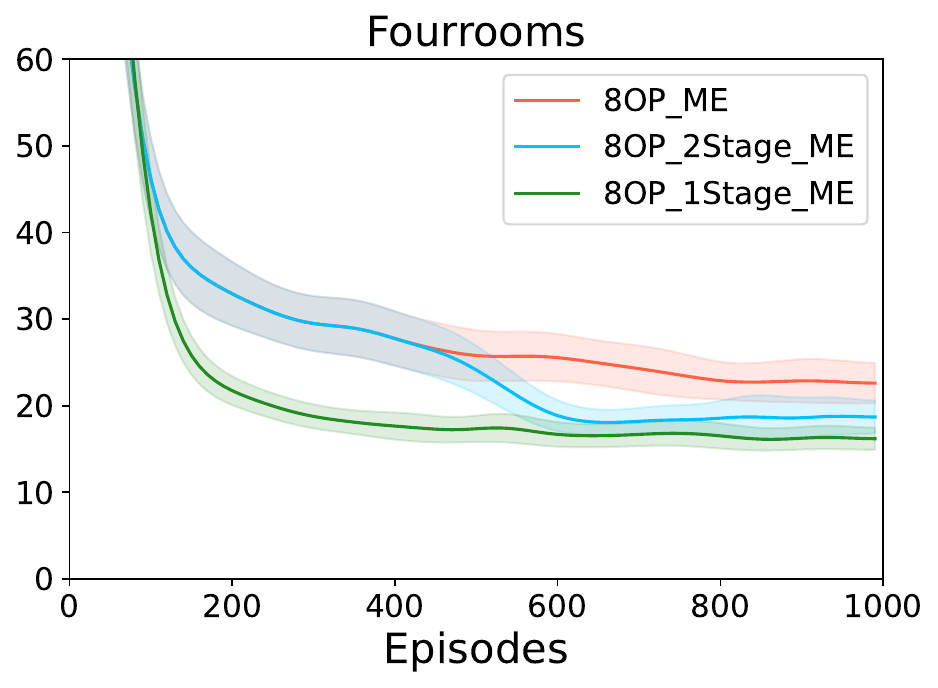}
			\caption{PSIF-2Stage and PSIF-1Stage for OC.}
			\label{fig:Experiments-2}
	\end{subfigure}
	\caption{
    Results of the improvement frameworks. (a) Verification of execution suboptimality.
    (b) Verification of policy suboptimality. The shaded regions represent one standard deviation over results obtained with multiple random seeds. Higher success rates and fewer steps are better.
	}
	\label{fig:Experiments}
\end{figure*}
In the grid-world environment \textit{Fourrooms}, we employ the option-based Option Critic (OC) algorithm \citep{AAAI17OC} as a representative of option frameworks with solid theoretical guarantees, to empirically validate the suboptimality theory of SMDPs.
In the robotic navigation environment \textit{AntReacher}, we adopt the HAC algorithm \citep{ICLR19HAC} as a representative of mainstream goal-based frameworks, verifying the execution suboptimality problem of the feudal framework and achieving performance improvements.
Details of the environments, additional experiments, and visualizations are provided in~\cref{sec:D}.
\textbf{Improvement of ESIF and Verification of Execution Suboptimality.}
In the right side of \cref{fig:Experiments-1}, the execution suboptimality of the option-based baseline algorithm OC was validated on the \textit{Fourrooms} problem, while on the left side of \cref{fig:Experiments-1}, the execution suboptimality of the goal-based baseline algorithm HAC was validated on the stochastic version of \textit{AntReacher}, where this suboptimality led to a significant performance gap of over 20\%.
This experiment verifies both $J_{\mathcal B} \geq J_{\mathcal C}$ and the effectiveness of ESIF.
\textbf{Ablation Study.}
To further illustrate the difficulty of these tasks, we also conducted single-level experimental validation using corresponding flat RL algorithms , which failed to be effectively trained on these long-horizon tasks.
Therefore, it is infeasible to directly obtain the MDP-optimal policy $\pi^*$ using standard MDP-based RL algorithms.
\footnote{\textcolor{black}{For the single-layer RL baseline ablation experiments that were truncated due to zero success rate and excessively long episode lengths, we display them only as numerical values in the legend so as not to affect the clarity of the main curves.}}
\textbf{Improvement of PSIF and Verification of Policy Suboptimality.}
\cref{fig:Experiments-2} shows the comparison of the OC algorithm under different settings of 4 and 8 options, where the OC baseline, OC-PSIF-2S, and OC-PSIF-1S are deployed using ME.
In OC-PSIF-2S, option learning is fixed after 500 episodes and the root enters its second stage.
This experiment validates the policy suboptimality of the OC baseline in the \textit{Fourrooms} problem: the MDP-optimal policies $\pi^*$ obtained via the two improvement frameworks achieve higher expected returns $J_{\mathcal A}$ under ME deployment compared with the SMDP-optimal policies $\pi^-$ and their expected returns $J_{\mathcal B}$ under ME.
For 8 options, OC-PSIF-2S completes the task in over 30\% fewer steps than the OC baseline, and for 4 options, the reduction exceeds 50\%, indicating that the OC baseline suffers from stronger SMDP suboptimality when fewer options.
\textbf{Analysis of Differences Between the Two PSIF Variants.}
We observe that PSIF-1S achieves better final performance than PSIF-2S upon deployment. This is because in PSIF-2S, the low-level node policies are fixed during the second-stage training. However, when $\hat\pi_{hi}$ updates from $\hat\pi^-_{hi}$ to $\hat\pi^*_{hi}$ upon entering the second stage, the state distribution of the agent $s$ changes, and the frequency of subtask $g$ called also changes. This causes a more severe $(s,g)$ distribution shift for the low-level policy $\hat\pi_{lo}(s,g)$, degrading the performance of the learned low-level policies and limiting the achievable final performance of the high-level tasks.
This issue does not occur in the PSIF-1S framework.
We define the set of already-trained subtasks among the subtasks of $\tau$ as the \emph{subtask support} of $\tau$. Since employing different policies for $\tau$ results in different distributions over its subtasks during training, the obtained subtask support also differs accordingly.
Comparing the performance curves of PSIF-1S and PSIF-2S in \cref{fig:Experiments-2}, the expected return $J_{\mathcal A}$ obtained with $\hat\pi^*_{hi}$ subtask support is higher than that obtained with $\hat\pi^-_{hi}$ subtask support.
The advantages of PSIF-1S over PSIF-2S are:
1) It does not require manual determination of whether a child node policy has reached $\pi^*_\tau$ and can be fixed so that its parent enters the second stage.
2) PSIF-1S learns the subtask support of $\pi^*_\tau$ without being constrained by the subtask support of $\pi^-_\tau$.
\textbf{Limitations.}
ESIF can only address execution suboptimality and is ineffective for policy suboptimality.
In PSIF-2S, during a parent's second-stage training, its fixed child policies can suffer from state-task distribution shift.
PSIF-1S, on the other hand, faces an off-policy problem during training due to the mismatch between the target execution mode (using ME) and the behavior execution mode (using SME).
Our framework decouples execution modes and training objectives in HRL; efficiently computing these objectives for different algorithms remains an open problem.
For instance, in the goal-based PSIF-2S method, the challenge lies in subtask dependencies: during training, subtask frequencies are concentrated on those commonly called by the SME-optimal node policy, leading to insufficient learning of subtasks required by the ME-optimal node policy.
PSIF-1S needs to address the off-policy problem of SME data.
Extending SMDP suboptimality improvement frameworks to policy-gradient-based HRL algorithms is a promising direction for future research.
\section{Extended related work}
\label{related}
\textbf{Optimality in SMDP.}
\citet{NIPS20EfficiencyHRL} provide optimality guarantees for planning in HRL, but the number of subtasks required to guarantee optimality grows exponentially and becomes impractical.~\citet{ICLR19RLHRL} studies the impact of subtask space representation on suboptimality, which represents another type of suboptimality in HRL.~\citet{AI99MDPsSMDPs} also reveals the existence of suboptimality in SMDP under the option framework, but only identifies \emph{execution suboptimality} while overlooking \emph{policy suboptimality}.
In contrast, this paper further proves the \emph{Optimal Execution Theorem}, which formally establishes the value-optimal execution mode and provides theoretical guarantees for general hierarchical policy improvement.
We also provide a generalized multi-level proof and counterexamples for SMDP suboptimality, reducing the problem to the exploration--exploitation tradeoff in HRL. Recent work~\citet{drappo2023option} analyzed regret bounds in finite-horizon SMDPs, showing how option duration shapes performance. Meanwhile, HiT-MDP~\citep{li2023hit} reformulated option-induced processes as equivalent MDPs to stabilize optimization. These studies reveal the tradeoffs of temporal abstraction
but not take into account that SME and ME should differ in the behavior policy, target policy, and deployment policy.
\textbf{Exploration in SMDP.}
Potential algorithms in HRL that can be adapted to our improvement frameworks
cover a wide range of research directions, including autonomously creating option policies from data~\citep{harutyunyan2019termination,riemer2020role,NIPS21MOC,zhang2021hierarchical,lin2024hierarchical,nayyar2025autonomous}, skill discovery~\citep{menache2002q,bagaria2020option,ICLR19DIAYN,tang2018subgoal}, multi-task or multi-agent learning~\citep{igl2020multitask,liu2021coach,yang2019hierarchical,tang2018hierarchical}, and goal-space learning~\citep{andre2002state,ghosh2018learning,zhang2020generating}.
These algorithms consistently demonstrate that HRL can significantly improve efficiency in long-horizon decision-making problems, and \citet{NIPS20EfficiencyHRL,robert2023sample} also provide theoretical guarantees on the sample efficiency of goal-based HRL.
However, most of these algorithms are based on the SMDP framework and therefore inherently suffer from the SMDP suboptimality issues.
\textbf{Value formalization in HRL.}
\citet{ICML15UVFA} proposed the concept of unified value function, and \citet{AAAI17OC} share similar ideas, but these approaches cannot represent multi-level policy values under arbitrary execution modes.

\section{Notations}
\label{notations}
Table~\ref{tab:legacy-notations} lists the symbols used in the detailed formalization and proofs.
\begin{table}[h]
\centering
\caption{Notations used in this paper.}
\label{tab:legacy-notations}
\begin{tabular}{ll}
\hline
\textbf{Symbol} & \textbf{Description} \\
\hline
$\mathcal{S}$ & State space \\
$\mathcal{A}$ & Action space \\
$T : \mathcal{S} \times \mathcal{A} \to \Delta(\mathcal{S})$ & Transition kernel of MDP \\
$R_\tau : \mathcal{S} \times \mathcal{A} \to \mathbb{R}$ & Per-frame reward function of task $\tau$ \\
$\hat{R}_\tau : \mathcal{S} \times \mathcal{T}'_\tau \to \mathbb{R}$ & Exit reward function of task $\tau$ \\
$\alpha_\tau \in \Delta(\mathcal{S})$ & Initial state distribution of task $\tau$ \\
$\beta_\tau : \mathcal{S} \to [0,1]$ & Termination function of task $\tau$ (in execution tree)\\
$\mathcal{T}'_\tau$ & Subtask set of task $\tau$ \\
$\bar{T}_\tau : \mathcal{S} \times \mathcal{T}'_\tau \to \Delta(\mathcal{S})$ & Subtask transition function \\
$\bar{R}_\tau : \mathcal{S} \times \mathcal{T}'_\tau \to \mathbb{R}$ & Subtask reward function \\
$t$ & Frames number \\
$\hat t$ & Subtasks been called number \\
$\pi_\tau : S \to \Delta(\mathcal{A})$ & Policy of task $\tau$ (calls actions across full level) \\
$\hat{\pi}_\tau : S \to \Delta(\mathcal{T}'_\tau)$ & Node policy of task $\tau$ (calls subtasks) \\
$\pi$ & Policy set \\
$V_\pi(s|\tau)$ & Value function of policy set $\pi$ under task $\tau$ \\
$J(\pi|\tau)$ & Expected return of policy set $\pi$ under task $\tau$ \\
$\hat{\beta}_\tau : \mathcal{S} \to [0,1]$ & Interrupt function of task $\tau$ (in execution tree) \\
$\pi^*$ & MDP-optimal policy set (all $\hat{\pi}_\tau$ optimal under ME) \\
$\pi^-$ & SMDP-optimal policy set (all $\hat{\pi}_\tau$ optimal under SME) \\
$J_\mathcal{A}, J_\mathcal{B}, J_\mathcal{C}, J_\mathcal{D}$ & Expected returns under ME/SME with $\pi^*$ or $\pi^-$ \\
$\gamma, \hat{\gamma}$ & Discount factors for task termination / decision interruption \\
$N(i)$ & Set of node indices at depth $i$ in a tree \\
$n(i)$ & Node index at depth $i$ in a tree \\
$\cdot | n(i)$ & Subtree rooted at $\cdot_{n(i)}$ \\
$\tau_{n(i)}$ & Task node at index $n(i)$ \\
$e_{n(i)}$ & Execution node at index $n(i)$ \\
$\Pi$ & Policy tree in the execution tree \\
$\mathbf{B}$ & Interruption tree in the execution tree \\
$\hat s : (s, \tau_{n(i)})$ & Extended state \\
$V_e(\hat s|\tau) : V^{\mathbf{B}}_\Pi(\hat s|\tau)$ & Value function of execution tree $e$ under task $\tau$ (UVFH) \\
$\mathcal{F}^{e,\tau}_{n(h)}$ & The GHBE operator for executing $\tau_{n(h)}$ using $e|n(h)$ \\
\hline
\end{tabular}
\end{table}
\clearpage
\section{Naive case study: GridWorld}
\label{sec:A}
Figure~\ref{fig:GridWorld} gives the corridor geometry used in both numerical calculations below; Figure~\ref{fig:Gridworld} summarizes the resulting policy and execution comparisons.
\begin{figure}[t]
\centering
\includegraphics[width=0.6\textwidth]{
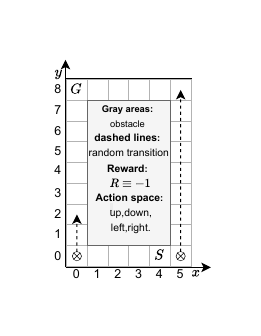
}
\caption{GridWorld.}
\label{fig:GridWorld}
\end{figure}
This is a $6\times 8$ GridWorld, where the central $4\times 6$ region is set as an obstacle. Under this configuration, the agent can only reach the goal through the left or right corridors. The action space of the agent is $\{\text{up},\text{down},\text{left},\text{right}\}$, which allows the agent to move one cell in the corresponding direction. When the agent reaches the $\otimes$ region on the map, it has a $50\%$ probability of undergoing an additional state transition along the dashed line. During interaction with the environment, the agent starts from the starting point $S$ and receives a reward of $-1$ for each transition until it reaches the goal $G$.
Within the HRL framework, the agent in this example completes sub-tasks by calling actions, and accomplishes the overall task of reaching $G$ by calling these sub-tasks.
In this example, the sub-task space $G'$ is defined as:
\[
G' = \{ g \in S \mid \text{sub-goal\footnote{A sub-goal is a special type of sub-task, in which the agent attempts to reach the location of the specified sub-goal.
} $g$ is a state whose $L_1$ distance from the current state is exactly $5$} \},
\]
where the $L_1$ distance is defined as
\[
L_1(s_1,s_2)=|x_1-x_2|+|y_1-y_2|.
\]
For instance, at the starting point $(4,0)$, the only two selectable sub-goals are $(0,1)$ and $(5,4)$. In fact, under this setting, there are always exactly two selectable sub-goals from any state, which greatly facilitates our clear analysis of the problem.
In each sub-task, the agent will learn a policy that reaches the sub-goal as quickly as possible: starting from the initiation of the sub-task, the agent receives a reward of $-1$ for each transition until reaching the sub-goal $g$. By maximizing the cumulative reward, the optimal policy for each sub-task can be obtained. Since the shortest paths to each sub-goal in this example are straightforward, we omit the detailed explanation here.
\subsection{Execution suboptimality}
\label{A_1}
\begin{figure}[t]
\centering
\includegraphics[width=0.95\textwidth]{
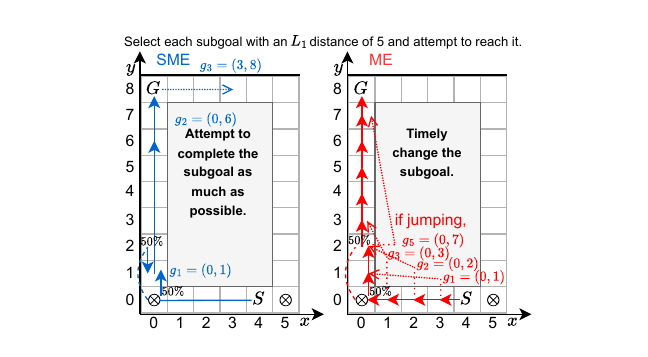
}
\caption{Execution Suboptimality.}
\label{fig:Execution_Suboptimality}
\end{figure}
The figure illustrates the trajectories of an agent under SME and ME execution modes with a two-level policy set $\pi$.
The high-level policy is denoted as $\hat\pi_{hi}(g \mid s, G)$, which selects an appropriate subgoal $g$ from the subtask space based on the current state $s$ in order to reach $G$ as quickly as possible.
The low-level policy is denoted as $\hat\pi_{lo}(a \mid s, g)$, which selects an appropriate action $a \in \{ \text{up}, \text{down}, \text{left}, \text{right} \}$ based on the current state $s$ to reach $g$ as quickly as possible.
The mapping of $\hat\pi_{hi}$ and its value estimation under ME are as~\cref{tab:SMDP_ME_value} (only listing the left-side path):
\begin{table}[t]
\centering
\caption{The mapping of $\hat\pi_{hi}$ and its value estimation under ME.}
\begin{tabular}{|l|c|c|c|c|c|c|}
\hline
$s$ & (4,0) & (3,0) & (2,0) & (1,0) & (0,0) & (0,1) \\
$g$ & (0,1) & (0,2) & (0,3) & (0,4) & (0,5) & (0,6) \\
$V(s)$ & -11 & -10 & -9 & -8 & -8 & -7 \\
\hline
$s$ & (0,2) & (0,3) & (0,4) & (0,5) & (0,6) & (0,7) \\
$g$ & (0,7) & (0,8) & (1,8) & (2,8) & (3.8) & (4,8) \\
$V(s)$ & -6 & -5 & -4 & -3 & -2 & -1 \\
\hline
\end{tabular}
\label{tab:SMDP_ME_value}
\end{table}
The trajectory of the agent using SME is marked by the blue arrows on the left side of~\cref{fig:Execution_Suboptimality}.
$\hat\pi_{hi}$ will sequentially select three subgoals $(0,1),(0,6),(3,8)$ and call $\hat\pi_{lo}$ to complete these subgoals, eventually reaching $G$.
The trajectory of the agent using ME is marked by the orange arrows on the right side of~\cref{fig:Execution_Suboptimality}.
At each frame, it selects a new optimal subgoal $g_t$ through the high-level policy $\hat\pi_{hi}$, such as $(0,1),(0,2),\dots$, and executes the optimal action $a_t$ for $g_t$ through $\hat\pi_{lo}$ for one frame.
If the agent does not encounter the stochastic transition event at $\otimes$, then both SME and ME agents will require 12 frames to reach $G$, yielding a return of $-12$.
If the agent encounters a random transition event at \(\otimes\) and is additionally teleported to \((0,2)\), the two execution modes will diverge: the agent using SME will continue executing the unfinished subtask \((0,1)\) and choose the optimal action \(\text{down}\) under this subtask; in contrast, the agent using ME will, at \((0,2)\), reselect a new optimal subtask \((0,7)\) via \(\hat{\pi}_{hi}\) and choose the optimal action \(\text{up}\) under this subtask. In this case, the agent with SME still requires 12 frames to reach \(G\), while the agent with ME only requires 10 frames to reach \(G\).
Thus,
\[
J^{ME}(\pi|G) = -12 \times 0.5 - 10 \times 0.5 = -11 > -12 = -12 \times 0.5 - 12 \times 0.5 = J^{SME}(\pi|G).
\]
In the table,
\[
V(s=(1,0)) = -1 + 0.5 \times V(s'=(0,0)) + 0.5 \times V(s'=(0,2))
= -1 - 0.5 \times 8 - 0.5 \times 6 = -8.
\]
\subsection{Policy suboptimality}
\begin{figure}[t]
\centering
\includegraphics[width=0.95\textwidth]{
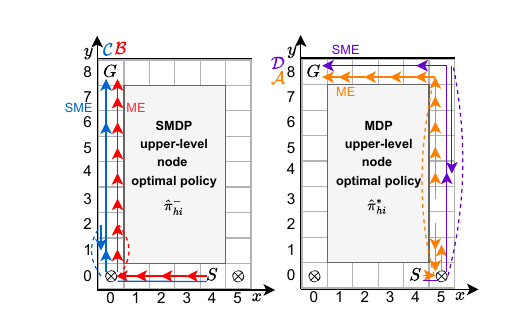
}
\caption{Policy Suboptimality.}
\label{fig:Policy_Suboptimality}
\end{figure}
Now consider an agent with a two-level policy set $\pi^*$,
the mapping of $\hat\pi^*_{hi}$ and the value estimation under ME are as~\cref{tab:MDP_ME_value_right} (right-hand path) and ~\cref{tab:MDP_ME_value_left} (left-hand path):
\begin{table}[t]
\centering
\caption{The mapping of $\hat\pi^*_{hi}$ and its value estimation under ME (right-hand path).}
\begin{tabular}{|l|c|c|c|c|c|c|c|}
\hline
$s$ & (4,0) & (5,0) & (5,1) & (5,2) & (5,3) & (5,4) & (5,5) \\
$g$ & (5,4) & (5,5) & (1,0) & (2,0) & (5,8) & (4,8) & (3,8) \\
$V(s)$ & -8 & -9 & -8 & -9 & -10 & -9 & -8 \\
\hline
$s$ & (5,6) & (5,7) & (5,8) & (4,8) & (3,8) & (2,8) & (1,8) \\
$g$ & (2,8) & (1,8) & (0,8) & (0,7) & (0,6) & (0,5) & (0,4) \\
$V(s)$ & -7 & -6 & -5 & -4 & -3 & -2 & -1 \\
\hline
\end{tabular}
\label{tab:MDP_ME_value_right}
\end{table}
\begin{table}[t]
\centering
\caption{The mapping of $\hat\pi^*_{hi}$ and its value estimation under ME (left-hand path).}
\begin{tabular}{|l|c|c|c|c|c|c|}
\hline
$s$ & (4,0) & (3,0) & (2,0) & (1,0) & (0,0) & (0,1) \\
$g$ & (5,4) & (5,3) & (0,3) & (0,4) & (0,5) & (0,6) \\
$V(s)$ & -8 & -9 & -9 & -8 & -8 & -7 \\
\hline
$s$ & (0,2) & (0,3) & (0,4) & (0,5) & (0,6) & (0,7) \\
$g$ & (0,7) & (0,8) & (1,8) & (2,8) & (3.8) & (4,8) \\
$V(s)$ & -6 & -5 & -4 & -3 & -2 & -1 \\
\hline
\end{tabular}
\label{tab:MDP_ME_value_left}
\end{table}
The agent's trajectory under SME is indicated by the purple arrows on the right of~\cref{fig:Policy_Suboptimality}. It sequentially selects three sub-tasks $(5,4),(4,8),(0,7)$ and executes them using $\hat\pi^*_{lo}$, eventually reaching $G$. The trajectory under ME is indicated by the yellow arrows on the right of~\cref{fig:Policy_Suboptimality}; at each frame, the agent selects a new sub-task $g_t$: $(5,4),(5,5)$ or $(0,8),\dots$ via the high-level policy $\hat\pi^*_{hi}$, and executes a single-frame optimal action $a_t$ under $\hat\pi^*_{lo}$.
The expected returns of the two trajectories are denoted as $J_{\mathcal D}=J^{SME}(\pi^*|G)$ and $J_{\mathcal A}=J^{ME}(\pi^*|G)$. The original policy set $\pi$ from\cref{A_1} is denoted as $\pi^-$, with its expected returns under SME and ME denoted as $J_{\mathcal C}=J^{SME}(\pi^-|G)$ and $J_{\mathcal B}=J^{ME}(\pi^-|G)$, with trajectories shown on the left side of~\cref{fig:Policy_Suboptimality}.
The calculation of $J^{SME}(\pi^*|G)$ is relatively straightforward:
\[
J_{\mathcal D}=J^{SME}(\pi^*|G)=-14\times 0.5 - 14\times 0.5 = -14.
\]
The expected return under ME is more complex, as the agent may repeatedly leverage stochastic transitions to reach the goal faster:
\[
J_{\mathcal A}=J^{ME}(\pi^*|G)=-1-\text{Time}[(5,0)\sim(5,8)]-5 = -6 - \sum_{i=0}^\infty (0.5)^{i+1} (2 \times i) = -6-2=-8.
\]
In this example, under SME, the agent's $\hat\pi_{hi}$ can only make decisions at the five positions $(4,0),(0,1),(0,6),(5,4),(4,8)$. Among the $2^5=32$ possible node policies, only $\hat\pi^-_{hi}$ and $\hat\pi^*_{hi}$ considered in this paper have finite expected returns. It is thus easy to verify that $\hat\pi^-_{hi}$ is the node policy that maximizes the expected return from $S$ to $G$ under SME\footnote{We do not further elaborate on the SMDP optimality of $\hat\pi^-_{hi}$ at other states, as the proof follows in a similar manner.}, i.e., $\hat\pi^-_{hi}$ is the SMDP optimal node policy as defined in~\cref{def:SMDP_opt}.
It can be verified that the value under ME using $\pi^*$ satisfies the Bellman optimality equation, which proves that $\pi^*_{hi}$ is the optimal policy in the single-level RL case. Therefore, $\hat\pi^*_{hi}$ is also the node policy that achieves the highest expected return under ME, i.e., $\hat\pi^*_{hi}$ is the MDP optimal node policy as defined in~\cref{def:MDP_opt}.
When the agent calls $\pi^-$ under SME, we have $J_{\mathcal C}=-12$, and for any policy $\pi$ with a different high-level node strategy, $J^{SME}(\pi|G) \leq J_{\mathcal C}$, e.g., $J_{\mathcal D}=J^{SME}(\pi^*|G)=-14<J_{\mathcal C}$.
When the agent calls $\pi^*$ under ME, we have $J_{\mathcal A}=-8$, and for any policy $\pi$ with a different high-level node strategy, $J^{ME}(\pi|G) \leq J_{\mathcal A}$, e.g., $J_{\mathcal B}=J^{ME}(\pi^-|G)=-11<J_{\mathcal A}$.
\section{Formalization and theoretical proof}
\label{sec:B}
\subsection{Task Tree and Execution Tree}
\label{sec:B.1}
This work observes that each task $\tau$ has two distinct termination functions. One controls the optimization objective of $\hat\pi_\tau$, i.e., the \textbf{termination function} $\beta_\tau$ defined in~\cref{sec:2.2}. The other controls the duration for which $\tau$ executes when it is called as a subtask during interaction, and is defined as the \textbf{interrupt function} $\hat\beta_\tau$.
In HRL, both for the execution of the behavioral policy and the target policy, the number of frames a subtask is actually executed is simply tied to the subtask's own duration, i.e., $\hat\beta_\tau \equiv \beta_\tau$. From the analysis in~\cref{sec:A}, we recognize that this is unreasonable.
To address this, we propose the \textbf{task tree} and \textbf{execution tree}, which independently describe, respectively, the complex task optimization objectives and task calling mechanisms in multi-level hierarchies. The most crucial aspect is the precise formalization and proper design of hierarchical boundary conditions when a task terminates.
We standardize the notation for node indices used in the tree structures in this paper as\cref{def:B.1}:
\begin{definition}[\textbf{Node indices in the tree}]
\label{def:B.1} Let $N$ be the set of all node indices in the tree, and $N(i) \subset N, i \geq 0$ denote the set of indices of all nodes at depth $i$ in the tree. We use the variable $n(i) \in N(i)$ as the index of a node at depth $i$, $n(i-1)$ as the index of its parent node, and $n(i+1)$ as the index of one of its child nodes. Given a path $L$ on the tree, $n(i,L) \in N(i) \cap L$ denotes the index of the node at depth $i$ on path $L$. The set of leaf node indices is denoted by $N_{\text{leaf}}$. In this paper, if the variable $x$ represents a tree, we use $x_{n(i)}$ to denote the node corresponding to index $n(i)$, and $x|n(i)$ to denote the subtree rooted at $x_{n(i)}$.
\end{definition}
\subsubsection{Task Tree}
In this paper, we use the concept of a task tree to formally describe the hierarchical nested calling relationships among multiple tasks in HRL, and to specify the optimization objective for each task.
\begin{definition}
\label{def:B.2}
\textbf{Task tree}
We extend the task $\tau$ in RL into a \textit{task tree} $\tau$.
A task tree $\tau$ is a tree structure composed of task nodes $\tau_{n(i)}, n(i) \in N$.
It describes the calling relationships among tasks and the nested relationships among the optimization objectives of the node policies of each task.
\end{definition}
\begin{definition}[\textbf{Task Node}]
\label{def:B.3}
\footnote{There are two special types of nodes in the task tree:
\begin{itemize}
  \item \textbf{External task} $\tau_{out} = \tau_{n(-1)} : (\mathcal{T}'_{out}, /, /, /)$,
  where $\mathcal{T}'_{out} = \{ \tau_{n(0)} \}$.
  This node does not require a policy optimization objective and is used solely to call the root task and wait for its termination.
  \item \textbf{Action} $\tau_{leaf} : (/, \beta_{leaf}, /, /)$,
  where $\beta_{leaf} \equiv 1$.
  This node also does not require a policy optimization objective and is used to produce a single-frame state transition.
\end{itemize}
}
A task node $\tau_{n(i)}$ is defined as a quadruple $(\mathcal{T}', \beta, R, \hat R)_{n(i)}$ consisting of:
\begin{itemize}
  \item \textbf{$\mathcal{T}'$: The set or space of child task nodes.}
  The parent-child relationships between task nodes form the tree structure of the task tree.
  \item \textbf{$\beta$: The task termination function.}
  $\beta_{n(i)}(\cdot): \mathcal{S} \to [0,1]$ represents the probability that $\tau_{n(i)}$ generates a task termination event when the agent reaches state $s'$.
  This is an extension of the termination function in ~\cref{sec:2.1} of the main text.
  As will be shown in~\cref{def:B.4}, the termination of a task at each frame is jointly determined by its own and all its ancestor task nodes' termination events, which defines the temporal scope of reward accumulation.
  \item \textbf{$R$: The per-frame reward function.}
  $R_{n(i)}(\cdot,\tau_{leaf}): \mathcal{S} \to \mathbb{R}$ provides a per-frame reward whenever the agent takes an action $a$ at state $s$ during the interaction process of $\tau_{n(i)}$.
  \item \textbf{$\hat{R}$: The exit reward function.}
  $\hat{R}_{n(i)}(\cdot,\tau_{n(j)}): \mathcal{S} \to \mathbb{R},\ -1\leq j < i$, provides a boundary reward for the entire process when $\tau_{n(i)}$ exits to $\tau_{n(j)}$ due to its own or an ancestor node's termination event.
\end{itemize}
\end{definition}
\begin{definition}[\textbf{Task termination event}]
\label{def:B.4}
For a task node $\tau_{n(i)}$, if a task termination event is triggered by itself or any of its ancestor nodes $\tau_{n(j)}$, $0 \leq j \leq i$,
then the termination event of the highest-level ancestor $\tau_{n(j_{\min})}$ takes effect.
In this case, the task node $\tau_{n(i)}$ is terminated and exits to its ancestor node $\tau_{n(j_{\min-1})}$.
This mechanism ensures that when an ancestor task is terminated, all of its descendant tasks are also terminated,
thereby maintaining the nested structure among tasks.
\end{definition}
The task tree functions in the following two phases:
1) \textbf{Interaction phase per frame:} After the agent executes an action $a$, each task node $\tau_{n(i)}$ will individually obtain a per-frame reward for this task through its per-frame reward function $R_{n(i)}(s,a)$.
2) \textbf{Termination phase per frame:} All task nodes in the task tree will independently trigger task termination events according to their termination functions.
Based on the triggered termination events, Definition B.4 determines whether each task is terminated.
If a task $\tau_{n(i)}$ is terminated, the accumulation of per-frame rewards ends, and an exit reward $R_{n(i)}(s',\tau_{n(j_{\min}-1)})$ is provided by the exit reward function.
Otherwise, the task $\tau_{n(i)}$ continues.
The task tree defines the optimization objective of each task node's policy, which is to maximize the expected return
$\mathbb{E}_{s_0=s}\big[\sum_{t=0}^{T-1} R_{n(r)}(s_t,a_t)+\hat R_{n(r)}(s_T,\tau_{n(j_{\min}-1)})\big]$
starting from any state $s$, where $T$ is the number of frames when the task is terminated.
Unlike standard RL, for tasks in HRL we must also consider the hierarchical structure at the task's termination.
Specifically, when different termination events occur, they should correspond to different task boundary conditions, as in the option framework.
\paragraph{Example: Option framework.}
The root task has a countable set of subtasks (options). The termination function of the root task is given by the environment. The per-frame reward of the root task is defined as \(R_{n(0)}(s,\tau_{leaf}) = R(s,a)\), which is the environmental reward, and its exit reward is \(R_{n(0)}(s,\tau_{n(-1)}) \doteq 0\).
For each option, its subtasks are the actions. The termination function of an option is learned via the termination gradient theorem. The per-frame reward of the option is \(R_{n(1)}(s,\tau_{leaf}) = R(s,a)\), and its exit rewards are \(R_{n(1)}(s,\tau_{n(-1)}) \doteq 0\) and \(R_{n(1)}(s,\tau_{n(0)}) \doteq V(s)\), where \(V(s)\) denotes the state value function of the root task. By setting the exit reward to the value function of the upper-level task, the lower-level policy is optimized to improve the upper-level task.
\paragraph{Example: Goal-based framework.}
In the Goal-based framework, the root task has a subtask space (sub-goal space). The termination function of the root task is given by the environment. The per-frame reward of the root task is \(R_{n(0)}(s,\tau_{leaf}) = R(s,a)\), and its exit reward is \(R_{n(0)}(s,\tau_{n(-1)}) \doteq 0\).
For each goal, its subtasks are the actions. The termination function of a goal is fixed as \(\bar\beta(s') \doteq \mathbb{I}[d(s',g)<d_0]\), where \(d\) is a distance metric and \(d_0\) is a hyperparameter related to neighborhood determination. The reward tree of a goal is provided by itself: its per-frame reward \(R_{n(1)}(s,\tau_{leaf})\) is defined by the goal, and its exit rewards are \(R_{n(1)}(s,\tau_{n(-1)}) \doteq 0\) and \(R_{n(1)}(s,\tau_{n(0)}) \doteq 0\). By setting the exit reward to zero, the lower-level tasks become feudal, i.e., independent from the upper-level value.
\subsubsection{Execution Tree}
This paper introduces the concept of an execution tree to describe how an agent in HRL executes its hierarchical policies to interact with the environment.
We observe a fundamental difference between HRL and standard RL:
during the execution process of HRL,
in addition to each task node having its own node policy function, an extra \textit{execution mechanism} is required to properly organize and call these node policies at every frame.
This is the key point discussed in~\cref{sec:3.1} of the main text.
\textbf{The differences in execution mechanisms can significantly affect the agent's expected return, its exploration capability, and the data efficiency of each node policy during the learning process.}
Therefore, the choice of execution mechanisms for the agent's behavior policy during training, the target policy during training, and the policy during deployment are all crucial research questions that have been largely overlooked.
\begin{definition}[\textbf{Execution Tree}]
\label{def:B.5}
An execution tree $e$ is a tree structure composed of execution nodes $e_{n(i)}, n(i) \in N$, used to control the agent's level-wise decision-making to call actions and complete various tasks in the task tree. Except for the two special nodes in the task tree, each execution node $e_{n(i)}$ corresponds one-to-one with a task node $\tau_{n(i)}$. The execution subtree $e|n(i)$ is used to execute the corresponding task $\tau_{n(i)}$.
\end{definition}
\begin{definition}[\textbf{Execution Node}]
\label{def:B.6}
An execution node $e_{n(i)}$ is composed of a tuple $(\hat\pi, \hat\beta)_{n(i)}$:
\begin{itemize}
    \item \textbf{$\hat\pi$: Node policy function.} $\hat\pi_{n(i)}(\cdot): \mathcal{S} \to \Delta \mathcal{T}'_{n(i)}$ (or equivalently $\hat\pi_{n(i)}(\cdot|s): \mathcal{T}'_{n(i)} \to [0,1]$), which allows task $\tau_{n(i)}$ to select a subtask when at state $s$.
    \item \textbf{$\hat\beta$: Decision interruption function.} $\hat\beta_{n(i)}(\cdot): \mathcal{S} \to [0,1]$ denotes the probability that $e_{n(i)}$ generates a decision interruption event after the agent reaches state $s'$.
\end{itemize}
\end{definition}
\begin{definition}[\textbf{Decision Interruption Event}]
\label{def:B.7}
For an execution node $e_{n(i)}$, if a decision interruption event occurs at $e_{n(i)}$ itself or any of its ancestors $e_{n(j)}, 0 \leq j \leq i$, then the interruption from the highest-level ancestor $e_{n(j_{\min})}$ takes effect. The decision authority of execution node $e_{n(i)}$ is then transferred to the ancestor node $e_{n(j_{\min}-1)}$.
\end{definition}
\begin{definition}[\textbf{Policy Tree}]
\label{def:B.8}
The tree structure in the execution tree $e$ formed by the node policy functions is referred to as the policy tree $\Pi$.
\end{definition}
\begin{definition}[\textbf{Interruption Tree}]
\label{def:B.9}
The tree structure in the execution tree $e$ formed by the decision interruption functions is referred to as the interruption tree $\mathbf{B}$.
\end{definition}
When the agent interacts with the environment using the execution tree, only the execution nodes that currently hold \textbf{decision authority} can call their node policies to make decisions. Taking the interaction of executing task $\tau_{n(h)}$ using the execution subtree $e|n(h)$ as an example, the execution tree operates in the following two phases:
1) \textbf{Preparation phase of each frame:} At frame 0, $e_{n(h)}$ directly obtains decision authority. In subsequent frames, the decision authority initially resides with the execution node $e_{n(l)}$ that made the decision in the previous frame. All execution nodes within the subtree $e|n(h)$, \textbf{\textit{except the root $e_{n(h)}$}}, then generate decision interruption events according to their interruption functions. The decision authority is transferred based on Definition B.7. If $e_{n(l)}$ is interrupted, the authority is passed to its ancestor $e_{n(j_{min-1})}$; otherwise, it remains with $e_{n(l)}$.
2) \textbf{Decision phase of each frame:} The execution node holding the decision authority selects a subtask according to its node policy function and continuously passes the decision authority to the execution node corresponding to the selected subtask. If there is no corresponding execution node (e.g., for actions), the authority is not transferred. This process continues until an action is selected and executed.
At this point, we have decoupled the original termination function $\beta$ under the definitions of the task tree and execution tree: $\beta$ now controls the intrinsic optimization objective of the task, while $\hat\beta$ controls the calling of node policies during execution.
In the task tree, the optimal subtask set for facilitating task learning can be found by optimizing $\beta$ and $R$. In existing algorithms, some $\beta$ functions are fixed; for example, in the Goal-based framework, each task (goal) has a fixed $\beta$. Others are learnable; for instance, in the OC method, each task (option) has a $\beta$ optimized via the termination gradient theorem.
In the execution tree, the optimization of $\hat\beta$ must balance the exploration and exploitation of task $\tau_{n(h)}$ during both behavior policy and target policy calling. On one hand, it should consider interacting with the environment under a certain execution mode to achieve high sampling efficiency (exploration). On the other hand, it should consider planning an optimal policy tree under the given execution mode using existing data to ensure maximal expected return (exploitation)\footnote{Analysis shows that there is room for improvement in the design of OC here. The termination gradient theorem should be used for ``exploitation'' to learn $\hat\beta$ in order to maximize expected return, instead of modifying the subtasks' $\beta$.
When $\beta$ approaches $0$, subtasks rarely terminate, causing the subtask learning difficulty to approach that of the original task. Conversely, when $\beta$ approaches $1$, subtasks terminate almost every frame, preventing the higher-level task from effectively leveraging the subtask for efficient exploration.
}.
Only by decoupling the intrinsic optimization objective of the task, the execution mode of the behavior policy, and the execution mode of the target policy can the HRL field further address and study the exploration-exploitation tradeoff during the learning process.
For the sake of formal convenience in the proofs of subsequent theorems, we extend and decouple the discount factor $\gamma$ in the same manner as the termination function $\beta$. Unlike $\beta$ and $\hat\beta$, which are bound to each node, $\gamma$ and $\hat\gamma$ are defined to directly describe task transitions or decision transitions during the interaction process.
\begin{definition}[\textbf{Task Transition Mechanism $\gamma$}]
\label{def:B.10}
Let $\bar\gamma^{n(h)}_{n(i)}$ denote the probability that the currently executing task $\tau_{n(h)}$ exits to its ancestor $\tau_{n(i)}, -1 \leq i \leq h$ during each frame's termination phase based on the task termination events.
\end{definition}
$\gamma^{n(h)}_{n(i)}$ can be computed from $\beta$: when exiting to $\tau_{n(i)}, -1 \leq i < h$, $\gamma^{n(h)}_{n(i)}$ is the probability that along the path from $\tau_{n(h)}$ to the root task $\tau_{n(0)}$, $\tau_{n(i+1)}$ triggers a termination event while all its ancestors do not trigger any termination events, i.e.,
\[
\gamma^{n(h)}_{n(i)} = \beta_{n(i+1)} \prod_{k=0}^{i} (1 - \beta_{n(k)}).
\]
When $\tau_{n(h)}$ does not exit,
\[
\gamma^{n(h)}_{n(h)} = 1 - \sum_{k=0}^{h-1} \gamma^{n(h)}_{n(k)}.
\]
In this paper, when task $\tau_{n(h)}$ is clear from context, the superscript of $\gamma^{n(h)}_{n(i)}$ is omitted for simplicity, and we write it as $\gamma_{n(i)}$.
\begin{definition}[\textbf{Decision transition mechanism $\hat\gamma$}]
\label{def:B.11}
Let $\hat\gamma^{n(h)}_{n(i)}$ denote the probability that, when executing the task $\tau_{n(h)}$ using the execution subtree $e|n(h)$, the decision authority at the preparation stage of each frame transfers from the execution node $e_{n(l)}$ that made the final decision in the previous frame to $e_{n(i)}$.
\end{definition}
$\hat\gamma^{n(h)}_{n(i)}$ can be computed from $\hat\beta$: when the decision authority transfers to $e_{n(i)}, h \leq i < l$, $\hat\gamma^{n(h)}_{n(i)}$ is the probability that, along the path from $e_{n(l)}$ to $e_{n(h)}$, $e_{n(i+1)}$ triggers a decision interruption while none of its ancestors do, i.e.,
\[
\hat\gamma^{n(h)}_{n(i)} = \hat\beta_{n(i+1)} \prod_{k=h+1}^{i} (1-\hat\beta_{n(k)}).
\]
When the authority does not transfer,
\[
\hat\gamma^{n(h)}_{n(l)} = 1 - \sum_{k=h}^{l-1} \hat\gamma^{n(h)}_{n(k)}.
\]
When the task $\tau_{n(h)}$ is clear from context, we omit the superscript and write $\hat\gamma^{n(h)}_{n(i)}$ simply as $\hat\gamma_{n(i)}$.
\subsubsection{Two special execution modes: SME and ME}
Existing hierarchical reinforcement learning algorithms often execute their policy trees using the SME mode, without decoupling the task termination functions. The transfer of decision authority is controlled by task termination events, and the decision interruption events are bound to the task termination events. In the computation of the decision transition mechanism $\hat\gamma$, it holds that $\hat\beta \dot= \beta$.
If all node policies within a policy subtree $\Pi|n(h)$ are called under SME, it is referred to as SME called the policy subtree $\Pi|n(h)$. The interaction process when an agent uses SME to execute task $\tau_{n(h)}$ with the policy subtree $\Pi|n(h)$ is illustrated in~\cref{fig:SME_process}:
\begin{figure}[t]
\centering
\includegraphics[width=0.95\textwidth]{
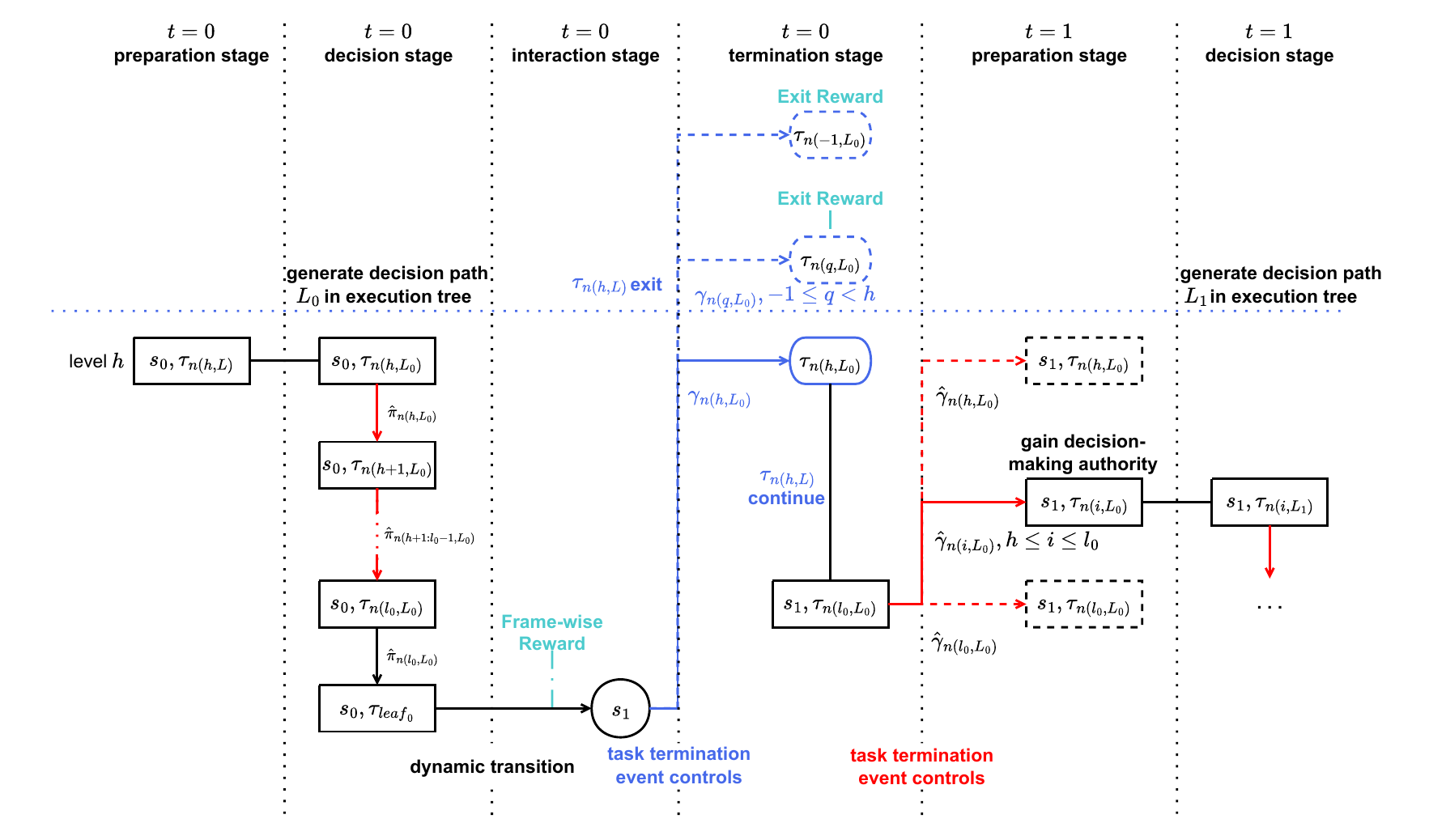
}
\caption{SME interaction process.}
\label{fig:SME_process}
\end{figure}
Since under SME the transfer of decision authority is controlled by task termination events, during the SME call of a policy subtree $\Pi|_{n(h')}$ with $h' < h$ to execute task $\tau_{n(h)}$, if a task termination event causes the decision authority to exit the execution subtree $e|_{n(h)}$, then the task $\tau_{n(h)}$ will also be terminated. Therefore, for the entire execution of task $\tau_{n(h)}$, it is equivalent to calling the policy subtree $\Pi|_{n(h)}$ under SME. Consequently, when the agent interacts with the environment using the SME call of the policy subtree $\Pi|_{n(h')}$ as the behavior policy, all descendant tasks $\tau_{n(h)}$ can obtain the complete trajectory data corresponding to executing $\tau_{n(h)}$ under an SME call of $\Pi|_{n(h)}$.
After decoupling the termination functions, the agent's decision interruption events and task termination events become independent. When the agent executes task $\tau_{n(h)}$ by calling the policy subtree $\Pi|_{n(h)}$ under ME, we have $\hat\beta \equiv 1$ for the descendant execution nodes.
If all node policies within the policy subtree $\Pi|_{n(h)}$ are called under ME, the subtree is referred to as a ME called policy subtree $\Pi|_{n(h)}$. The interaction process when the agent executes task $\tau_{n(h)}$ using $\Pi|_{n(h)}$ under ME is illustrated in~\cref{fig:ME_process}:
\begin{figure}[t]
\centering
\includegraphics[width=0.95\textwidth]{
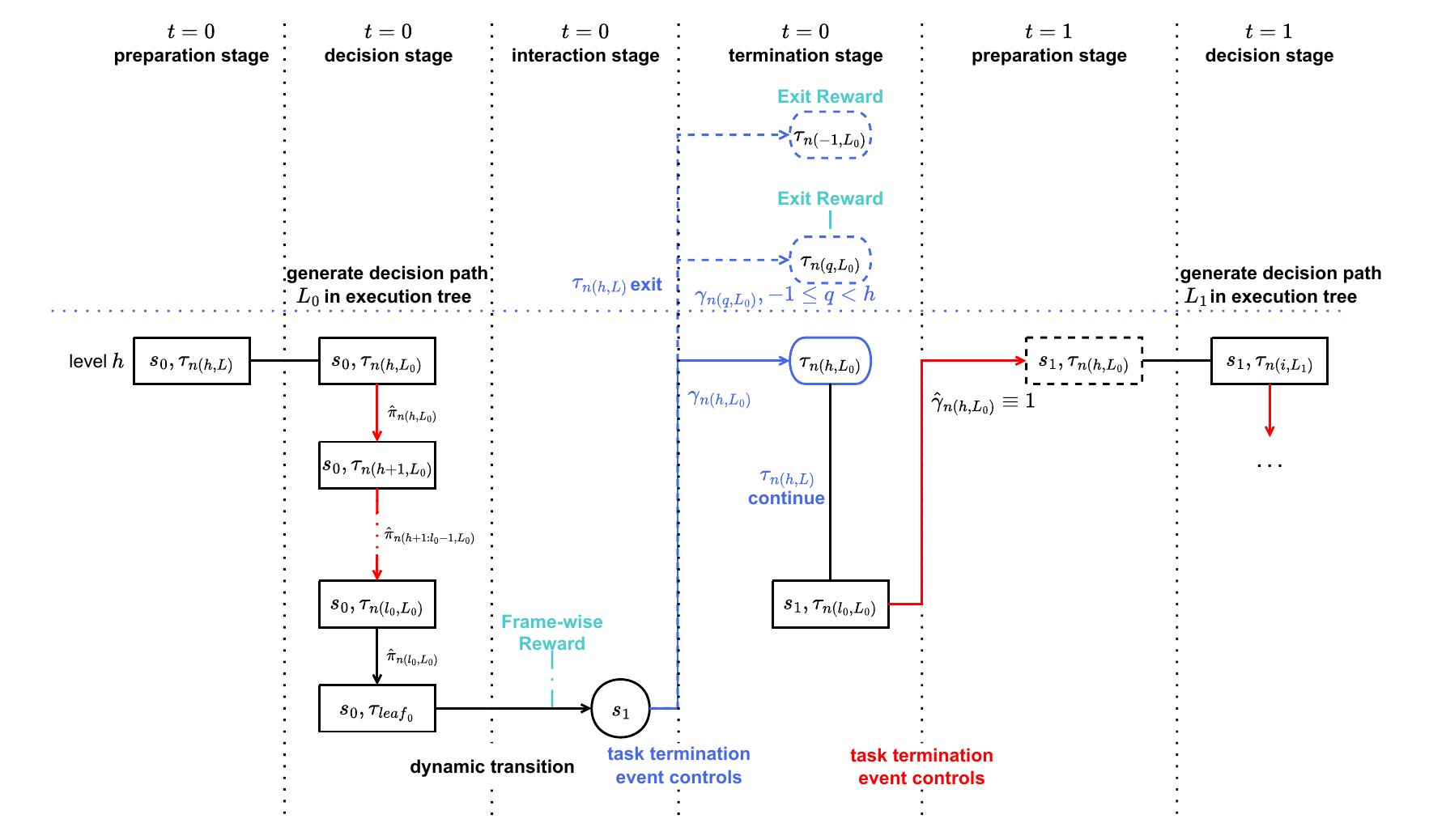
}
\caption{ME interaction process.}
\label{fig:ME_process}
\end{figure}
Unlike in SME, decision interrupt events are independent of task termination events in ME. During the execution of task $\tau_{n(h)}$ with the ME called policy subtree $\Pi|_{n(h')}, h'<h$, if the decision authority is transferred out of the execution subtree $e|_{n(h)}$ in the preparation phase due to a decision interrupt event while the task $\tau_{n(h)}$, controlled by task termination events, is not terminated, and in the next frame the decision authority is not returned to $e_{n(h)}$, as~\cref{fig:ME_process_offline}, this will cause an off-policy discrepancy between the target policy (decisions made downward from $e_{n(h)}$) and the behavior policy (decisions made downward from the execution node $e'_{n(h)}$ that holds the decision authority).
\begin{figure}[t]
\centering
\includegraphics[width=0.95\textwidth]{
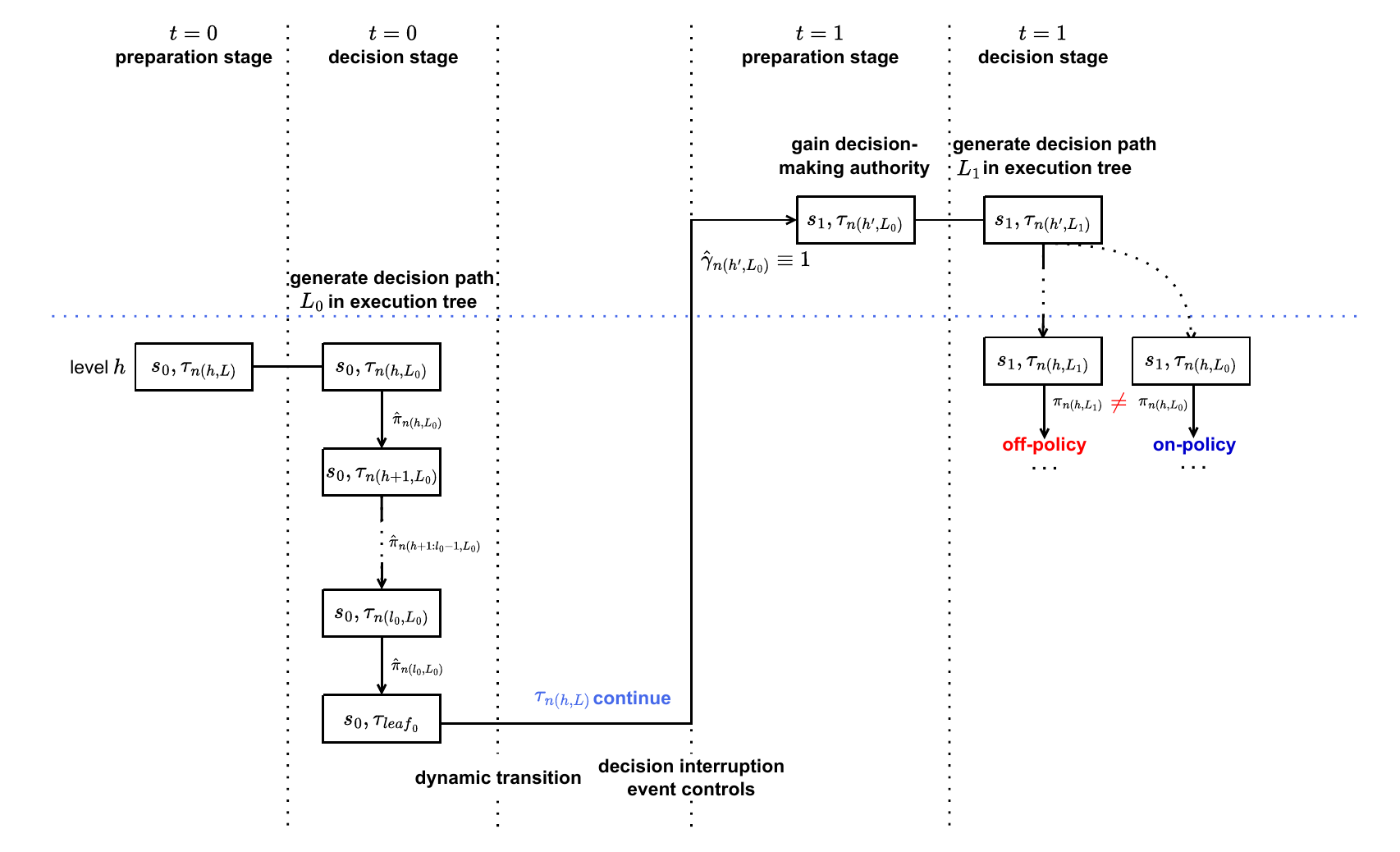
}
\caption{Off-policy issue in subtask node policy learning under ME.}
\label{fig:ME_process_offline}
\end{figure}
\subsection{Hierarchical Bellman Equation}
\label{sec:B.2}
\subsubsection{Unified Value Function in HRL}
We need to establish a consistent representation such that the value of any policy can be expressed under any execution mechanism. To this end, we construct a \emph{unified value function} applicable to HRL under arbitrary execution modes, which describes the expected return across different settings.
\begin{definition}[\textbf{Unified Value Function for HRL, UVFH}]
\label{def:B.12}
The unified value function in HRL is denoted as $V_e(\hat s|\tau)$, where $\hat s$ is the extended state of the agent at the decision stage, and $\tau$ is the task node whose value we aim to evaluate.
\begin{itemize}
    \item
    \textbf{Extended state} $\hat s \dot= (s, \tau_{n(i)})$: represents the agent being at state $s$ during the decision stage and the execution node $e_{n(i)}$ having the decision authority (or having selected an action).
    \item
    \textbf{task} $\tau \dot= \tau_{n(h)}$: specifies the scope of per-frame reward accumulation, the per-frame reward values, and the exit reward values.
    \item
    \textbf{execution tree} $e$: is the execution tree used by the agent.
    \item
    \textbf{unified value function} is defined as
    \[
    V^\mathbf{B}_\Pi(\hat s|\tau) = V_e(\hat s|\tau) = \mathbb{E}_{s_0=s, T \sim \tau, e}\Bigg[\sum_{t=0}^{T-1} R_{n(h)}(s_t, a_t) + \hat R_{n(h)}(s_T, \tau_{n(j_{min}-1)})\Bigg],
    \]
    which represents the cumulative sum of all rewards, including per-frame and exit rewards, when the agent uses execution tree $e$, starts from $\hat s$, and executes until task $\tau$ terminates.
\end{itemize}
\end{definition}
In particular, given a strategy tree $\Pi$, when the execution modes are SME or ME, the UVFH for these two special execution trees are denoted as $V^{SME}_{\Pi}$ and $V^{ME}_{\Pi}$, respectively.
The state value and state-action value in RL are unified into the value of the extended state $\hat s$ at the decision stage. This transforms the non-Markovian process in HRL, given $\tau$ and $e$, into an extended Markov process that is easier to analyze. Meanwhile, both the Goal-based framework and the Option framework in HRL are unified under the task node $\tau$.
\subsubsection{Generalized Hierarchical Bellman Equation}
For each task node in the task tree, there exists a set of generalized Bellman equations for value estimation.
When the agent executes a task $\tau_{n(h)}$, $h \geq h'$, using any given execution subtree $e|n(h')$ of $e$, we can iteratively compute the HRL unified value through the generalized hierarchical Bellman equation given $\tau_{n(h)}$ and $e|n(h')$, i.e., value evaluation:
\begin{definition}[\textbf{Generalized Hierarchical Bellman Equation, GHBE}]
\label{def:B.13}
\textbf{Decision stage:}
\[
V_{e|n(h')} (s, \tau_{n(i)}| \tau_{n(h)}) = \mathbb{E}_{\tau_{n(i+1)} \sim \hat\pi_{n(i)}} \left[ V_{e|n(h')} (s, \tau_{n(i+1)}| \tau_{n(h)}) \right], \quad \forall n(i)
\]
\textbf{Interaction stage:}
\[
V_{e|n(h')} (s, \tau_{leaf}| \tau_{n(h)}) = R_{n(h)}(s,a) + \mathbb{E}_{s' \sim T(s,a)} \left[ \bar V_{e|n(h')} (s', \tau_{leaf}| \tau_{n(h)}) \right]
\]
\textbf{Termination stage:}
\[
\bar V_{e|n(h')} (s', \tau_{leaf}| \tau_{n(h)}) =
\hat V_{e|n(h')} (s', \tau_{leaf}| \tau_{n(h)}) \gamma_{n(h)}(s') +
\sum^{h-1}_{k=-1} \gamma_{n(k)}(s') \hat R_{n(h)} (s', \tau_{n(k)})
\]
\textbf{Preparation stage:}
\[
\hat V_{e|n(h')} (s', \tau_{leaf}| \tau_{n(h)}) =
\sum^l_{k=h'} \hat\gamma_{n(k)} (s') V_{e|n(h')} (s', \tau_{n(k)}| \tau_{n(h)})
\]
\end{definition}
$\bar V$ and $\hat V$ are intermediate variables. In the termination stage equation, the first term on the right-hand side is the bootstrapped value when $\tau_{n(h)}$ has not yet terminated, and the second term is the exit reward provided by $\hat R_{n(h)}$ when $\tau_{n(h)}$ terminates and transfers to another task. In the preparation stage equation, the right-hand side represents the bootstrapped value when decision authority jumps occur due to decision interruption events.
In particular, since the execution subtree $e|n(h)$ will be used to execute task $\tau_{n(h)}$, we need to optimize the node policy $\hat\pi_{n(h)}$ given all decision interruption functions in $e|n(h)$ and all node policies except $\hat\pi_{n(h)}$, such that the expected return in $\tau_{n(h)}$
\[
J(e|\tau_{n(h)}) =
\mathbb{E}_{s \sim \alpha_{n(h)}}
\left[
V_{e|n(h)}(s, \tau_{n(h)} | \tau_{n(h)})
\right]
\]
is maximized.
If we want to optimize the node policy $\hat\pi_{n(h)}$ in $e|n(h')$, given all decision interruption functions and all node policies except $\hat\pi_{n(h)}$ for $h \geq h'$, such that the expected return in $\tau_{n(h')}$
\[
J(e|\tau_{n(h')}) =
\mathbb{E}_{s \sim \alpha_{n(h')}}
\left[
V_{e|n(h')}(s, \tau_{n(h')} | \tau_{n(h')})
\right]
\]
is maximized, this can be achieved via the Generalized Hierarchical Bellman Optimality Equation (GHBOE) for the optimization of $\hat\pi_{n(h)}$ given $\tau_{n(h')}$ and $e|n(h')$. We refer to this as the GHBOE for $\hat\pi_{n(h)}$. That is, in HRL, the optimization objective of a node policy can be set as maximizing the expected return of an ancestor task node $\tau_{n(h')}$, including its own task node. We denote the results of GHBOE iterations by adding superscripts to $V$, $\bar V$, and $\hat V$ as $V^*$, $\bar V^*$, $\hat V^*$ to indicate the computed values from the GHBOE instead of the value evaluation under a specific execution tree.
\begin{definition}[\textbf{GHBOE for $\hat\pi_{n(h)}$}]
\label{def:B.14}
\textbf{decision stage:}
\[
V^*_{e|n(h')} (s, \tau_{n(h)}| \tau_{n(h')})
= \max_{\tau_{n(h+1)}} V^*_{e|n(h')} (s, \tau_{n(h+1)}| \tau_{n(h')})
\]
\[V^*_{e|n(h')} (s, \tau_{n(i)}| \tau_{n(h')})
= \mathbb{E}_{\tau_{n(i+1)} \sim \hat\pi_{n(i)}} \left[ V^*_{e|n(h')} (s, \tau_{n(i+1)}| \tau_{n(h')}) \right], \quad \forall n(i) \neq n(h)\]
\textbf{interaction stage:}
\[V^*_{e|n(h')} (s, \tau_{leaf}| \tau_{n(h')})
= R_{n(h')}(s,a) + \mathbb{E}_{s' \sim T(s,a)} \left[ \bar V^*_{e|n(h')} (s', \tau_{leaf}| \tau_{n(h')}) \right]\]
\textbf{termination stage:}
\[\bar V^*_{e|n(h')} (s', \tau_{leaf}| \tau_{n(h')})
= \hat V^*_{e|n(h')} (s', \tau_{leaf}| \tau_{n(h')}) \gamma_{n(h')}(s')
+ \sum^{h-1}_{k = -1} \gamma_{n(k)}(s') \hat R_{n(h')} (s', \tau_{n(k)})\]
\textbf{preparation stage:}
\[\hat V^*_{e|n(h')} (s', \tau_{leaf}| \tau_{n(h')})
= \sum^l_{k = h'} \hat\gamma_{n(k)} (s') V^*_{e|n(h')} (s', \tau_{n(k)}| \tau_{n(h')})\]
\end{definition}
Next, we introduce the special cases of GHBE and GHBOE under the two particular execution modes, SME and ME. The difference in execution modes only affects the preparation stage of the equations.
For SME, we have:
\[
\hat V^{SME}_{\Pi|n(h')} (s', \tau_{leaf}| \tau_{n(h')}) =
\sum^l_{k = h'} \gamma_{n(k)} (s') V^{SME}_{\Pi|n(h')} (s', \tau_{n(k)}| \tau_{n(h')}),
\]
and the corresponding equation sets are denoted as HBE-SMDP and HBOE-SMDP.
For ME, we have:
\[
\hat V^{ME}_{\Pi|n(h')} (s', \tau_{leaf}| \tau_{n(h')}) =
V^{ME}_{\Pi|n(h')} (s', \tau_{n(h')}| \tau_{n(h')}),
\]
and the corresponding equation sets are denoted as HBE-MDP and HBOE-MDP.
As an extension of~\cref{sec:4.1}, given a task tree $\tau$ and a policy subtree $\Pi|n(h)$ with all node policies except the root node $\hat\pi_{n(h)}$ fixed, we define $\Pi^-$ as the policy tree derived from the solution of $\hat\pi_{n(h)}$'s HBOE-SMDP. The expected return under SME in $\tau_{n(h)}$ is denoted as $J_{\mathcal C}$, and under ME as $J_{\mathcal B}$. Similarly, given a policy subtree $\Pi|n(h)$ with all node policies except the root node $\hat\pi_{n(h)}$ fixed, we define $\Pi^*$ as the policy tree derived from the solution of $\tau_{n(h)}$'s HBOE-MDP. The expected return under SME in $\tau_{n(h)}$ is denoted as $J_{\mathcal D}$, and under ME as $J_{\mathcal A}$. We will prove the convergence of these solutions and the optimality of the corresponding policies.
\subsection{Generalized SMDP Suboptimality Theory}
\begin{theorem}[\textbf{Generalized SMDP Suboptimality Theory}]
\label{def:B.15}
In HRL problems where the environment satisfies the Markov property, given a task tree $\tau$ and a policy tree $\pi$ with all node policies except $\hat\pi_{n(h)}$ fixed, the policy $\hat\pi^-_{n(h)}$ derived from the HBOE-SMDP of $\hat\pi_{n(h)}$ is the SMDP optimal node policy for $\tau_{n(h)}$, while the policy $\hat\pi^*_{n(h)}$ derived from the HBOE-MDP of $\hat\pi_{n(h)}$ is the MDP optimal node policy for $\tau_{n(h)}$. As shown in Table 5.1, the four expected returns under SME and ME satisfy
$J_{\mathcal A} \geq J_{\mathcal B}$ and $J_{\mathcal C} \geq J_{\mathcal D}$. Under~\cref{def:B.16}, we also have $J_{\mathcal B} \geq J_{\mathcal C}$.
\end{theorem}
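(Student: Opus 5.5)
The plan has three parts, taken in order: first identify the GHBOE solutions with the optimal node policies, then derive $J_{\mathcal A}\ge J_{\mathcal B}$ and $J_{\mathcal C}\ge J_{\mathcal D}$ from optimality, and finally derive $J_{\mathcal B}\ge J_{\mathcal C}$ from the Optimal Execution Theorem.

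\textbf{Step 1: the GHBOE solutions are the optimal node policies.} Under the evaluation conditions (uniform termination $1-\gamma_{n(h)}\ge\epsilon$ and bounded rewards), the GHBE operator for $\tau_{n(h)}$ executed by $e|n(h)$ is a $(1-\epsilon)$-contraction with a unique bounded fixed point (Lemma~\ref{def:B.17}, Corollary~\ref{def:B.18}). I would show that the GHBOE operator for $\hat\pi_{n(h)}$ is also a $(1-\epsilon)$-contraction. It differs from the GHBE operator only by a maximum at the single decision node $n(h)$, and $\max$ is nonexpansive in the sup norm. Hence the GHBOE has a unique fixed point $V^*$.

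Let $\hat\pi^\dagger_{n(h)}$ be greedy with respect to $V^*$ at $n(h)$. Its GHBE fixed point coincides with $V^*$.

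For an arbitrary competitor $\hat\pi_{n(h)}$, I would apply the Hierarchical Policy Improvement Theorem with $h'=h$. Starting from the competitor's value $V_e$, the greedy policy with respect to $V_e$ weakly improves it. Iterating this argument (policy iteration, as in Theorem~\ref{def:B.19}), or comparing the two operators monotonically, gives $V_e\le V^*$ pointwise. Thus $V^*$ dominates every node policy.

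I would carry this out with two choices of interruption subtree:
\begin{itemize}
\item $\Beta|n(h)$ with $\hat\beta=\beta$ (HBOE-SMDP), which yields the SMDP-optimal $\hat\pi^-_{n(h)}$;
\item $\hat\beta\equiv1$ (HBOE-MDP), which yields the MDP-optimal $\hat\pi^*_{n(h)}$.
\end{itemize}
These match Corollaries~\ref{def:B.20}--\ref{def:B.21} and Definitions~\ref{def:MDP_opt}--\ref{def:SMDP_opt}.

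\textbf{Step 2: $J_{\mathcal A}\ge J_{\mathcal B}$ and $J_{\mathcal C}\ge J_{\mathcal D}$.} Both follow from optimality by integrating over $\alpha_{n(h)}$. Under ME, $\Pi^*$ maximizes $V^{ME}(\cdot,\tau_{n(h)}\mid\tau_{n(h)})$ at every $s$, and $\Pi^-$ is just one admissible alternative, so $J_{\mathcal A}=\mathbb E_{\alpha}V^{ME}_{\Pi^*}\ge\mathbb E_\alpha V^{ME}_{\Pi^-}=J_{\mathcal B}$. The same argument under SME gives $J_{\mathcal C}\ge J_{\mathcal D}$.

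The one point to check is that the fixed node policies are identical in $\Pi^-$ and $\Pi^*$. This holds by construction, since only $\hat\pi_{n(h)}$ differs between them.

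\textbf{Step 3: $J_{\mathcal B}\ge J_{\mathcal C}$ under Assumption~\ref{new:ass:compatibility}.} I would invoke the Optimal Execution Theorem with the interruption subtree $\Beta|n(h)$ taken to be SME. Its hypothesis is that the node policy is optimal for that mode, and $\hat\pi^-_{n(h)}$ is SME-optimal by Step 1. The theorem then gives $V^{SME}_{\Pi^-|n(h)}\le V^{ME}_{\Pi^-|n(h)}$ pointwise, and taking the expectation under $\alpha_{n(h)}$ yields $J_{\mathcal C}\le J_{\mathcal B}$.

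\textbf{Main obstacle.} The hardest step is Step 1's contraction for GHBOE, specifically the bookkeeping across the four stages. I need the maximum at $n(h)$ to commute correctly with the preparation-stage mixture $\sum_k\hat\gamma_{n(k)}V(\cdot,\tau_{n(k)})$, and the factor $\gamma_{n(h)}\le1-\epsilon$ to appear exactly once per frame. My first approach would be to treat the composite one-frame map $V\mapsto V$ on decision states. The nonexpansive pieces are the expectations over fixed node policies, the max at $n(h)$, and the convex mixtures $\hat\gamma$ and $T$. The only contracting piece is the termination stage, which contributes the factor $1-\epsilon$.

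I also need to verify that the greedy policy extracted from $V^*$ is well defined at states where $n(h)$ never holds authority. This I would handle by defining it arbitrarily there, since such states do not affect the value.
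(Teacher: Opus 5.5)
Your proposal is correct. Steps 2 and 3 coincide with the paper's argument. The paper also integrates the pointwise optimality of $\hat\pi^*_{n(h)}$ and $\hat\pi^-_{n(h)}$ over $\alpha_{n(h)}$ to get $J_{\mathcal A}\ge J_{\mathcal B}$ and $J_{\mathcal C}\ge J_{\mathcal D}$. It likewise obtains $J_{\mathcal B}\ge J_{\mathcal C}$ by applying the Optimal Execution Theorem to $\Pi^-$ with the SME interruption subtree.

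The genuine difference is Step 1. The paper never shows that the GHBOE operator contracts. It runs hierarchical policy iteration from $e^0=e$, uses Theorem~\ref{def:B.19} to make $V_{e^k|n(h')}$ monotone nondecreasing, and uses Lemma~\ref{def:B.17} to bound it. It takes the limit as the GHBOE solution. For uniqueness, it restarts the iteration from the greedy policy of an arbitrary solution and appeals to the claim that ``the iteration from any initial $e^0$ converges to the unique limit.''

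You instead show that the optimality operator $\mathcal F^*$, with the expectation replaced by a maximum only at $n(h)$, is itself a monotone $(1-\epsilon)$-contraction. Banach then gives a unique $V^*$, and uniqueness of the GHBE fixed point shows that the greedy policy attains it. The chain $V_e=\mathcal F^eV_e\le\mathcal F^*V_e\le(\mathcal F^*)^kV_e\to V^*$ then dominates every competitor at once.

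Your route is the more rigorous one. Monotonicity plus boundedness only shows that each policy-iteration sequence converges, not that its limit is independent of $e^0$. That independence is exactly what the paper's uniqueness step, and its optimality claim against arbitrary node policies, rely on, and your contraction of $\mathcal F^*$ supplies it. The paper's route is constructive in exchange, and it ties the optimal node policies to the hierarchical policy iteration that the frameworks later approximate.

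The ``main obstacle'' you flag is not a real one. The maximum acts at the current frame's decision stage on affine functions of next-frame values. The one-frame map is therefore a maximum of $(1-\epsilon)$-Lipschitz maps and contracts with no commuting argument needed.
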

\begin{claim}[\textbf{Task Subtree Compatibility Condition}\footnote{\cref{def:4.1} and existing mainstream HRL algorithms are both two-level, and thus clearly satisfy~\cref{def:B.16}.}]
\label{def:B.16}
A task subtree $\tau|n(h)$ rooted at task $\tau_{n(h)}$ either has only two levels, or all of its descendant task nodes $\tau_{n(h')}, h' > h$ have their child task sets $\mathcal{T}'_{\tau_{n(h')}}$ satisfy $\mathcal{T}'_{\tau_{n(h')}} \subseteq \mathcal{T}'_{\tau_{n(h)}}$.
\end{claim}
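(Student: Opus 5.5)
Claim~\ref{def:B.16} is a structural condition on the task subtree rather than a value statement. My plan is therefore to discharge the two obligations the paper attaches to it: that it is well posed, and that it holds in the cases the paper relies on, namely Theorem~\ref{def:4.1} and the option and goal-based frameworks (as the footnote asserts). I will also record the one structural consequence that later arguments draw from the second alternative, so that the claim is stated in the form in which it is used.

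\textbf{Well-posedness.} I first fix what ``two levels'' means using Definition~\ref{def:B.1} and the special nodes of Definition~\ref{def:B.3}. The subtree $\tau|n(h)$ has two levels when every child of $\tau_{n(h)}$ is either an action leaf $\tau_{leaf}$ or a task node whose children are all action leaves. Action leaves carry no child-task set $\mathcal T'$ and $\beta_{leaf}\equiv1$, so they are excluded from the quantifier ``descendant task nodes''. Under this reading the second alternative quantifies only over genuine task nodes $\tau_{n(h')}$, $h'>h$. The inclusion $\mathcal T'_{\tau_{n(h')}}\subseteq\mathcal T'_{\tau_{n(h)}}$ is then a comparison of child sets that both live in $\mathcal T'_{\tau_{n(h)}}\cup\{\tau_{leaf}\}$, so it is meaningful. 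The two alternatives are not exclusive, and neither depends on $\Pi$, $\Beta$, $\beta$ or $R$. The condition is therefore a property of the task tree alone, as the main text's Assumption~\ref{new:ass:compatibility} states.

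\textbf{Two-level instances.} Next I verify the footnote directly from the examples following Definition~\ref{def:B.4}.
\begin{itemize}
\item \emph{Option framework.} The root's children are options, and each option's children are actions.
\item \emph{Goal-based framework.} The root's children are goals, and each goal's children are actions.
\end{itemize}
In both cases the subtree rooted at $\tau_{n(0)}$ has exactly two levels, so the first alternative holds. The same is true for any subtree rooted at a depth-one node, which is a single level over actions and hence trivially compatible. Theorem~\ref{def:4.1} is stated for two-level HRL, so its hypothesis implies Claim~\ref{def:B.16}. This also covers the GridWorld of Appendix~\ref{sec:A}, where the root's children are subgoals and each subgoal's children are actions.

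\textbf{Consequence of the second alternative (the main obstacle).} The step I expect to need care is making precise why the second alternative is the right generalization. I would state and prove the following consequence: under that alternative, every decision available to any execution node inside $e|n(h)$ at state $s$ is also available to $e_{n(h)}$ at $s$. The proof is by induction on depth, starting from the children of $\tau_{n(h)}$. A node at depth $h'+1$ selects from $\mathcal T'_{\tau_{n(h'+1)}}\subseteq\mathcal T'_{\tau_{n(h)}}$, and this inclusion passes down the tree by transitivity. Consequently, if decision authority returns to $e_{n(h)}$, that node can reproduce any choice a descendant would have made. This is exactly the property a root reselection under ME needs in order not to lose value.

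Finally, I would note that the property fails in the multilevel counterexample of Figure~\ref{fig:Counterexample}. There a descendant chooses among children outside the root's set while optimizing a different reward, which shows the condition is not vacuous.
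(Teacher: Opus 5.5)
Your proposal is correct and matches the paper, which treats this as an assumption and supports it only by the footnote remark that two-level hierarchies (Theorem~\ref{def:4.1}, options, goal-based HRL) satisfy the first alternative. That is exactly your verification, and your recorded consequence is the inclusion used in the first inequality of the proof of Theorem~\ref{def:B.23}. Minor tidy-ups: the inclusion is stated directly against $\mathcal T'_{\tau_{n(h)}}$, so no induction or transitivity is needed, and for well-posedness the ambient set should be the whole task space rather than $\mathcal T'_{\tau_{n(h)}}\cup\{\tau_{leaf}\}$, which presupposes the conclusion; also, the ``root can reproduce any descendant's choice'' property holds only under the second alternative, whereas in the two-level case the downstream argument works because the only descendants that can retain authority are children of $\tau_{n(h)}$, whose values are bounded by $\max_{\mathcal T'_{\tau_{n(h)}}}V=V(s,\tau_{n(h)})$.
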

Based on the formal tools provided in~\cref{sec:B.1} and~\cref{sec:B.2}, we now formally begin the proof of~\cref{def:B.15}.
\begin{lemma}
\label{def:B.17}
When $\forall s, n(h) \in N(h)$ satisfy $1-\gamma_{n(h)}(s) \geq \epsilon >0$, the GHBE for executing $\tau_{n(h)}$ using the execution subtree $e|n(h)$ has a unique fixed point. If $\max_{\tau,s}|\hat R_{n(h)}(s,\tau)|<M$ and $\max_{a,s}|R_{n(h)}(s,a)|<M$, then this fixed point $V^*$ satisfies $\max_{s,\tau}|V^*(s,\tau)| \leq \frac{2M}{\epsilon}$.
\end{lemma}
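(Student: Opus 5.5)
We plan to view the four GHBE stages of Definition~\ref{def:B.13}, for the fixed task $\tau_{n(h)}$ and execution subtree $e|n(h)$, as a single operator $\mathcal F^{e,\tau}_{n(h)}$ acting on the vector of decision-stage values $V(s,\tau_{n(k)})$. Here $s$ ranges over states and $n(k)$ over the execution nodes of the subtree on the current decision path. We then apply the Banach fixed-point theorem in the sup norm.

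\emph{Step 1: defining the operator.} Given $V$, we compose the four stages. The preparation stage forms $\hat V(s')=\sum_{k=h}^{l}\hat\gamma_{n(k)}(s')V(s',\tau_{n(k)})$, with weights nonnegative and summing to one. The termination stage forms $\bar V(s')=\gamma_{n(h)}(s')\hat V(s')+\sum_{k=-1}^{h-1}\gamma_{n(k)}(s')\hat R_{n(h)}(s',\tau_{n(k)})$. The interaction stage adds $R_{n(h)}(s,a)$ and takes $\mathbb E_{s'\sim T(s,a)}$. Finally, the decision stage averages down the policy tree: $\hat\pi$ is applied from $\tau_{n(k)}$ until a leaf is reached. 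This cascade is well-defined because every path reaches an action in finitely many levels. The result $(\mathcal F V)(s,\tau_{n(k)})$ is therefore a convex combination, over the selected action $a$ and next state $s'$, of the quantity $R+\gamma_{n(h)}\hat V+\sum_k\gamma_{n(k)}\hat R$. One bookkeeping point must be checked: the depth $l$ in the preparation stage is determined by the node that selected $a$, so $l$ is a measurable function of the decision path. It does not depend on $V$, so the weights are fixed.

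\emph{Step 2: contraction.} For two value vectors $V,W$, the reward and exit-reward terms cancel. Since $\hat\gamma$ is a probability vector, $|\hat V-\hat W|(s')\le\|V-W\|_\infty$. Multiplying by $\gamma_{n(h)}(s')\le 1-\epsilon$ and then averaging, which is a nonexpansive operation, gives $\|\mathcal FV-\mathcal FW\|_\infty\le(1-\epsilon)\|V-W\|_\infty$. Banach's theorem then yields a unique fixed point. It also shows that iteration converges from any bounded initialization, which is the content the paper later uses as Corollary~\ref{def:B.18}.

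\emph{Step 3: the bound.} We evaluate $\mathcal F$ at $V=0$. Since $\sum_{k<h}\gamma_{n(k)}\le1$, we get $\|\mathcal F0\|_\infty\le |R|+\max|\hat R|<2M$. The standard estimate $\|V^*\|\le\|\mathcal F0\|/(1-(1-\epsilon))$ then gives $\|V^*\|_\infty\le 2M/\epsilon$, as claimed. A slightly sharper bound is available, but the stated one suffices.

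The main obstacle is Step 1: showing that the four-stage system genuinely closes as a map on a single space of decision-stage values. This requires handling intermediate-node values that are defined only through the decision cascade and the path-dependent depth $l$. I would first restrict the vector to leaf-selection values $V(s,\tau_{leaf})$ indexed by the decision path. I would prove contraction for that map, and then recover all intermediate-node values as fixed convex averages, which preserves both uniqueness and the bound.
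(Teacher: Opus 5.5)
Your proposal is correct and follows essentially the same route as the paper: compose the four GHBE stages into one operator $\mathcal{F}^{e,\tau}_{n(h)}$, show it is a $(1-\epsilon)$-contraction in the sup norm, and apply Banach's theorem. The contraction uses that the $\hat\gamma$ weights form a probability vector and that $\gamma_{n(h)}\le 1-\epsilon$; the bound $2M/\epsilon$ then follows as in the paper. The differences are minor: you bound the fixed point through $\|\mathcal{F}0\|_\infty/\epsilon$ rather than substituting $V^*$ into the fixed-point equation, and you are more explicit than the paper about the path-dependent depth $l$ and the intermediate decision-stage values.
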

\textbf{Proof:}
The GHBE operator for executing $\tau_{n(h)}$ using the execution subtree $e|n(h)$ can be written as:
\begin{equation}
\begin{split}
\mathcal{F}^{e,\tau}_{n(h)} V(s,\tau_{n(i)}) =&
\mathbb{E}_{a\sim \pi_{n(i)}(s)}
\Big[
R_{n(h)}(s,a) \\ &+
\mathbb{E}_{s' \sim T(s,a)}\Big[
\gamma_{n(h)}
\sum^l_{k=h} \hat\gamma_{n(k)}(s')V(s',\tau_{n(k)})
\\ &+
\sum^{h-1}_{k=-1} \gamma_{n(k)}(s')\hat R_{n(h)}(s',\tau_{n(k)})
\Big]
\Big],\quad \forall n(i) \in N|n(h)
\end{split}\nonumber
\end{equation}
\noindent First, we prove that $\mathcal{F}^{e,\tau}_{n(h)}$ is a contraction mapping. Let $V:\mathcal{S} \times \mathcal{T} \to \mathbb{R}$ be a bounded function defined on the state space $\mathcal{S}$ and task space $\mathcal{T}$ (the union of all tasks in the task tree, which is complete). Denote the set of such functions by $B(\mathcal{S},\mathcal{T})$. Define the metric $d$ on $B$ as:
\[
\forall V_1, V_2 \in B(\mathcal{S},\mathcal{T}), \quad d(V_1,V_2) = \max_{s \in \mathcal{S}, \tau \in \mathcal{T}} |V_1(s,\tau) -V_2(s,\tau)|.
\]
\noindent Then we have $V_1(s,\tau) - V_2(s,\tau) \leq d(V_1,V_2)$, i.e., $V_1(s,\tau) \leq V_2(s,\tau) + d(V_1,V_2)$.
\noindent When the conditions of Theorem 5.2 hold, for all $e$ and for all $s \in \mathcal{S}, \tau_{n(i)} \in \mathcal{T}$,
\[
\begin{aligned}
\mathcal{F}^{e,\tau}_{n(h)} V_1(s,\tau_{n(i)}) &=
\mathbb{E}_{a\sim \pi_{n(i)}(s)}
\Big[
R_{n(h)}(s,a)\\&~~~~+
\mathbb{E}_{s' \sim T(s,a)}
\Big[
\gamma_{n(h)} \sum^l_{k=h} \hat\gamma_{n(k)}(s') V_1(s',\tau_{n(k)}) \\&~~~~+
\sum^{h-1}_{k=-1} \gamma_{n(k)}(s')\hat R_{n(h)}(s',\tau_{n(k)})
\Big]
\Big] \\
&\leq
\mathcal{F}^{e,\tau}_{n(h)} V_2(s,\tau_{n(i)}) + (1-\epsilon)d(V_1,V_2)
\end{aligned}
\]
\noindent From this, we have for all $s \in \mathcal{S}, \tau_{n(i)} \in \mathcal{T}$,
\[
\mathcal{F}^{e,\tau}_{n(h)} V_1(s,\tau_{n(i)}) - \mathcal{F}^{e,\tau}_{n(h)} V_2(s,\tau_{n(i)}) \leq (1-\epsilon)d(V_1,V_2).
\]
\noindent Similarly, for all $s \in \mathcal{S}, \tau_{n(i)} \in \mathcal{T}$,
\[
\mathcal{F}^{e,\tau}_{n(h)} V_2(s,\tau_{n(i)}) - \mathcal{F}^{e,\tau}_{n(h)} V_1(s,\tau_{n(i)}) \leq (1-\epsilon)d(V_1,V_2).
\]
\noindent Hence, for all $s \in \mathcal{S}, \tau_{n(i)} \in \mathcal{T}$,
\[
\big|\mathcal{F}^{e,\tau}_{n(h)} V_1(s,\tau_{n(i)}) - \mathcal{F}^{e,\tau}_{n(h)} V_2(s,\tau_{n(i)})\big| \leq (1-\epsilon)d(V_1,V_2).
\]
\noindent Therefore, for all $V_1, V_2 \in B$,
\[
d(\mathcal{F}^{e,\tau}_{n(h)} V_1, \mathcal{F}^{e,\tau}_{n(h)} V_2) \leq (1-\epsilon) d(V_1, V_2).
\]
\noindent Since $\epsilon > 0$, the operator $\mathcal{F}^{e,\tau}_{n(h)}$ is a contraction mapping for any given $e$. By the Banach fixed-point theorem, $\mathcal{F}^{e,\tau}_{n(h)}$ has a unique fixed point, denoted by $V^*$.
\noindent Next, we show the bound of $V^*$. For all $s, \tau_{n(i)}$,
\[
\begin{aligned}
\mathcal{F}^{e,\tau}_{n(h)} V^*(s,\tau_{n(i)}) &=
\mathbb{E}_{a \sim \pi_{n(i)}(s)}
\Big[
R_{n(h)}(s,a) \\&~~~~+
\mathbb{E}_{s' \sim T(s,a)}
\big[
\gamma_{n(h)} \sum^l_{k=h} \hat\gamma_{n(k)}(s') V^*(s',\tau_{n(k)}) \\&~~~~+
\sum^{h-1}_{k=-1} \gamma_{n(k)}(s') \hat R_{n(h)}(s',\tau_{n(k)})
\big]
\Big] \\
&\leq M + \gamma_{n(h)} \max_{s,\tau} |V^*(s,\tau)| + M \\
&\leq 2M + (1-\epsilon) \gamma_{n(h)} \max_{s,\tau} |V^*(s,\tau)|.
\end{aligned}
\]
\noindent Thus, we have
\[
\max_{s,\tau} |V^*(s,\tau)| \leq 2M + (1-\epsilon) \gamma_{n(h)} \max_{s,\tau} |V^*(s,\tau)|,
\]
which implies
\[
\max_{s,\tau} |V^*(s,\tau)| \leq \frac{2M}{\epsilon}.
\]
\noindent~\cref{def:B.17} is thus proved.\qed
\begin{corollary}
\label{def:B.18}
Under the conditions of~\cref{def:B.17}, for any $e$ and any initial value $V_0$, performing successive iterations using the operator $\mathcal{F}^{e,\tau}_{n(h)}$ will result in convergence:
$
\lim_{n \to \infty} V_n = V,
$
where $V$ is the unique fixed point of $\mathcal{F}^{e,\tau}_{n(h)}$, i.e., the unified HRL value for the given execution tree $e$,
$
V_{e|n(h)}(\cdot,\cdot|\tau_{n(h)}) = V.
$
\end{corollary}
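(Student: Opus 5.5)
This follows from Lemma~\ref{def:B.17} and the Banach fixed-point theorem. The plan has three steps.

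First, the proof of Lemma~\ref{def:B.17} already shows that $\mathcal{F}^{e,\tau}_{n(h)}$ maps the complete metric space $(B(\mathcal S,\mathcal T),d)$ into itself. It also shows that the operator is a $(1-\epsilon)$-contraction in $d$. For self-mapping, I would note that a bounded $V$ is sent to a function bounded by $2M+(1-\epsilon)\sup|V|$.

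Second, I would take any bounded $V_0$ and define $V_{n+1}=\mathcal{F}^{e,\tau}_{n(h)}V_n$. Writing $V$ for the fixed point, induction gives
\[
d(V_n,V)\le(1-\epsilon)^n d(V_0,V)\to0 .
\]
This yields uniform convergence to the unique fixed point. It also gives the a~priori bound $d(V_n,V)\le\frac{(1-\epsilon)^n}{\epsilon}d(V_1,V_0)$, if a rate is wanted.

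The remaining and main step is to identify the fixed point $V$ with the UVFH value $V_{e|n(h)}(\cdot,\cdot\mid\tau_{n(h)})$ of Definition~\ref{def:B.12}. My plan is to show directly that the true UVFH satisfies the GHBE of Definition~\ref{def:B.13}, with $h'=h$. I would condition on the first frame, using the Markov property of the extended process $(s,\tau_{n(i)})$ under fixed $e$:

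\begin{itemize}
\item the decision stage is the tower rule over $\hat\pi$;
\item the interaction stage splits off $R_{n(h)}(s_0,a_0)$;
\item the termination stage splits the event $T=1$ by the exit destination, whose probabilities are $\gamma_{n(k)}$ and which contributes the exit reward;
\item the preparation stage averages over the authority transfer given by $\hat\gamma_{n(k)}$, which gives exactly the continuation value from $(s',\tau_{n(k)})$.
\end{itemize}

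Composing the four stages shows the UVFH is a fixed point of $\mathcal{F}^{e,\tau}_{n(h)}$. The technical care lies in justifying the interchange of expectation and the infinite sum. Uniform termination gives $P(T>t)\le(1-\epsilon)^t$, so
\[
\mathbb{E}\Big[\sum_{t<T}|R|+|\hat R|\Big]\le M/\epsilon+M,
\]
and dominated convergence applies. This also shows the UVFH lies in $B$. By uniqueness, it then equals $V$, completing the corollary.
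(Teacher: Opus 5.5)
Your proposal is correct and follows the same route as the paper: the contraction from Lemma~\ref{def:B.17}, Banach iteration to the unique fixed point, and identification of that fixed point with $V_{e|n(h)}(\cdot,\cdot\mid\tau_{n(h)})$. The paper handles the identification only by citing Definition~\ref{def:B.13}. You verify it: first-step analysis shows that the UVFH solves the GHBE, and $\mathbb{E}[T]\le 1/\epsilon$ places it in $B$, so uniqueness applies. This tightens the paper's argument rather than changing it.
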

\textbf{Proof:}
From the proof of~\cref{def:B.17} we have shown that \(\mathcal{F}^{e,\tau}_{n(h)}\) is a contraction mapping; hence the successive (Gauss) iterations converge to the unique fixed point, and this fixed point satisfies the GHBE for executing \(\tau_{n(h)}\) with execution subtree \(e|n(h)\). By~\cref{def:B.13}, this fixed point is exactly \(V_{e|n(h)}(\cdot,\cdot \mid \tau_{n(h)})\).\qed
\subsubsection{Step 1: Hierarchical Policy Improvement Theorem}
In this section, we first establish the \emph{Hierarchical Policy Improvement Theorem} for multi-level HRL. We then prove that when a node policy is iteratively improved until convergence, the resulting solution satisfies the GHBOE of that node policy under a given interruption tree $\mathbf{B}$. Throughout this section, the discussion is based on a fixed task $\tau_{n(h')}$ with its associated rewards and boundary conditions. Hence, for notational simplicity, we omit the conditioning part in the HRL unified value function.
\begin{theorem}[\textbf{Hierarchical Policy Improvement Theorem}]
\label{def:B.19}
Given two execution trees $e$ and $e'$, suppose that the only difference between them lies in the node policy functions $\hat{\pi}_{n(h)}$ and $\hat{\pi}'_{n(h)}$ at state $s$.
If
$
\mathbb{E}_{ \tau_{n(h+1)} \sim \hat{\pi}_{n(h)}(s) }
V_{e|n(h')} (s, \tau_{n(h+1)})
\leq
\mathbb{E}_{ \tau_{n(h+1)} \sim \hat{\pi}'_{n(h)}(s) }
V_{e|n(h')} (s, \tau_{n(h+1)}),
$
then for all $s$, it holds that
$
V_{e|n(h')} (s, \tau_{n(h)})
\leq
V_{e'|n(h')} (s, \tau_{n(h)}),
$
where $h \geq h'$.
\end{theorem}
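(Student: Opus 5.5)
We would prove Theorem~\ref{def:B.19} by the standard policy-improvement route, adapted to the GHBE operator of Lemma~\ref{def:B.17}. Fix the evaluated task $\tau_{n(h')}$ and the execution subtree $e|n(h')$. Write $\mathcal{F}$ and $\mathcal{F}'$ for the GHBE operators of $e$ and $e'$; these act on functions $V(s,\tau_{n(i)})$ over extended states. By Corollary~\ref{def:B.18}, $V_{e|n(h')}$ is the unique fixed point of $\mathcal{F}$ and $V_{e'|n(h')}$ is the unique fixed point of $\mathcal{F}'$. The two operators share the interaction, termination and preparation stages, and all decision stages except the one at node $n(h)$. At that node, $\mathcal{F}$ averages over $\hat\pi_{n(h)}(s)$ and $\mathcal{F}'$ averages over $\hat\pi'_{n(h)}(s)$; at every other state the two policies coincide.

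\textbf{Step 1 (one-step comparison).} We apply $\mathcal{F}'$ to the old value $V\doteq V_{e|n(h')}$. At every extended state $(s,\tau_{n(i)})$, the value is obtained by descending the decision stages to a leaf, followed by the shared one-frame recursion. The descent may pass through node $n(h)$ at state $s$, and this can happen from $(s,\tau_{n(h)})$ itself or from an ancestor $n(i)$ with $h'\le i<h$. When it does, the only changed term is $\mathbb{E}_{\hat\pi'_{n(h)}(s)}V(s,\tau_{n(h+1)})$ in place of $\mathbb{E}_{\hat\pi_{n(h)}(s)}V(s,\tau_{n(h+1)})$. The hypothesis gives that this term does not decrease. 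The decision stage is an expectation with nonnegative weights, so the inequality propagates up to every ancestor. Hence $\mathcal{F}'V\ge V=\mathcal{F}V$ pointwise on all extended states.

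\textbf{Step 2 (monotonicity and iteration).} $\mathcal{F}'$ is monotone: if $V_1\le V_2$ pointwise then $\mathcal{F}'V_1\le\mathcal{F}'V_2$. This holds because every coefficient multiplying $V$ in the operator is a nonnegative probability, namely the policy weights, $T$, $\gamma_{n(h')}$ and $\hat\gamma_{n(k)}$, while the rewards enter additively. By induction $V\le\mathcal{F}'V\le(\mathcal{F}')^2V\le\cdots$. Lemma~\ref{def:B.17} and Corollary~\ref{def:B.18} make $\mathcal{F}'$ a $(1-\epsilon)$-contraction, so the sequence converges to $V_{e'|n(h')}$. This yields $V_{e|n(h')}(s,\tau_{n(i)})\le V_{e'|n(h')}(s,\tau_{n(i)})$ for all extended states, and in particular at $(s,\tau_{n(h)})$ for all $s$.

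\textbf{Main obstacle.} The delicate point is Step~1: making precise that $\mathcal{F}$ and $\mathcal{F}'$ differ only inside the decision stage at $n(h)$. The operator in Lemma~\ref{def:B.17} is written with the decision descent collapsed into $\mathbb{E}_{a\sim\pi_{n(i)}(s)}$. We would first rewrite it in staged form, with the decision equations~\eqref{new:eq:ghbe_decision} kept explicit as a finite recursion of tree-depth length within one frame. The fixed point must be viewed on the full extended-state space, including the intermediate nodes, so that the hypothesis, which is stated on $V(s,\tau_{n(h+1)})$, applies directly to the values plugged in. A secondary care point is that the contraction in Lemma~\ref{def:B.17} is stated for $e|n(h)$ with condition $\tau_{n(h)}$. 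We would note that the same argument, with uniform termination of the evaluated task $\tau_{n(h')}$, gives the contraction for $e|n(h')$ evaluating $\tau_{n(h')}$. It relies only on the factor $\gamma_{n(h')}\le1-\epsilon$ and on $\sum_k\hat\gamma_{n(k)}=1$.
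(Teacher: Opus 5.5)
Your proposal is correct and follows essentially the same route as the paper. The paper first shows $\mathcal{F}^{e,\tau}_{n(h')}V_{e|n(h')}\le\mathcal{F}^{e',\tau}_{n(h')}V_{e|n(h')}$: the two sides are equal away from $n(h)$ and its ancestors, and otherwise differ by $\mathbb{P}[n(\hat h)\to n(h)]$ times the nonnegative gap given by the hypothesis. It then iterates $\mathcal{F}^{e',\tau}_{n(h')}$ and passes to the limit using the contraction. Your explicit monotonicity argument justifies the step $\mathcal{F}^{e',\tau}_{n(h')}V\le(\mathcal{F}^{e',\tau}_{n(h')})^kV$, which the paper uses without comment.
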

\textbf{Proof.}
We first show that
\[
\mathcal{F}^{e,\tau}_{n(h')} V_{e|n(h')} \;\;\leq\;\;
\mathcal{F}^{e',\tau}_{n(h')} V_{e|n(h')}.
\]
If $e_{n(\hat h)}, \hat h \geq h'$ does not belong to $e_{n(h)}$ itself nor its ancestors, then the execution subtrees $e|n(\hat h)$ and $e'|n(\hat h)$ are identical. Hence,
\[
\mathcal{F}^{e,\tau}_{n(h')} V_{e|n(h')}(s,\tau_{n(\hat h)}) \;=\;
\mathcal{F}^{e',\tau}_{n(h')} V_{e|n(h')}(s,\tau_{n(\hat h)}).
\]
Otherwise, we have
\[
\begin{aligned}
& \mathcal{F}^{e',\tau}_{n(h')} V_{e|n(h')}(s,\tau_{n(\hat h)}) -
\mathcal{F}^{e,\tau}_{n(h')} V_{e|n(h')}(s,\tau_{n(\hat h)}) \\
&= \mathbb{P}[n(\hat h) \rightarrow n(h)] \Big(
\mathbb{E}_{ \tau_{n(h+1)} \sim \hat{\pi}'_{n(h)}(s) }
V_{e|n(h')} (s, \tau_{n(h+1)})
\\&~~~~-
\mathbb{E}_{ \tau_{n(h+1)} \sim \hat{\pi}_{n(h)}(s) }
V_{e|n(h')} (s, \tau_{n(h+1)}) \Big) \\ &\geq 0,
\end{aligned}
\]
where $\mathbb{P}[n(\hat h) \rightarrow n(h)]$ denotes the probability of eventually selecting $\tau_{n(h)}$ when starting from $\hat{\pi}_{n(\hat h)}$ and making decisions down the subtree.
The first equality follows directly from the decision stage equation of the GHBOE.
The second inequality is guaranteed by the condition of~\cref{def:B.19}.
Therefore, we conclude that
\[
\mathcal{F}^{e,\tau}_{n(h')} V_{e|n(h')} \;\;\leq\;\;
\mathcal{F}^{e',\tau}_{n(h')} V_{e|n(h')}.
\]
Since
\[
V_{e|n(h')} = \mathcal{F}^{e,\tau}_{n(h')} V_{e|n(h')}
\;\;\leq\;\;
\mathcal{F}^{e',\tau}_{n(h')} V_{e|n(h')}
\;\;\leq\;\;
(\mathcal{F}^{e',\tau}_{n(h')})^k V_{e|n(h')}
\]
for all $k$, by taking the limit we obtain
\[
V_{e|n(h')} \;\;\leq\;\; \lim_{k \to \infty} (\mathcal{F}^{e',\tau}_{n(h')})^k V_{e|n(h')}
= V_{e'|n(h')}.
\]
This completes the proof of~\cref{def:B.19}.
\qed
\begin{corollary}[\textbf{SME Hierarchical Policy Improvement Theorem}]
\label{def:B.20}
Given two policy trees $\Pi$ and $\Pi'$, where the node policy functions $\hat{\pi}_{n(h)}$ and $\hat{\pi}'_{n(h)}$ differ only at state $s$, if
$
\mathbb{E}_{ \tau_{n(h+1)} \sim \hat{\pi}_{n(h)}(s) }
V^{SME}_{\Pi|n(h')} (s, \tau_{n(h+1)})
\leq
\mathbb{E}_{ \tau_{n(h+1)} \sim \hat{\pi}'_{n(h)}(s) }
V^{SME}_{\Pi|n(h')} (s, \tau_{n(h+1)}),
$
then for all states $s$, it holds that
$
V^{SME}_{\Pi|n(h')} (s, \tau_{n(h)})
\leq
V^{SME}_{\Pi'|n(h')} (s, \tau_{n(h)}).
$
\end{corollary}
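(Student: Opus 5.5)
This corollary follows by specializing Theorem~\ref{def:B.19} to the SME execution tree. The plan is to fix the interruption tree $\mathbf{B}$ with $\hat\beta\equiv\beta$ at every node of $\Pi|n(h')$. The execution trees are then $e=(\Pi,\mathbf{B})$ and $e'=(\Pi',\mathbf{B})$.

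First, verify the structural hypothesis of Theorem~\ref{def:B.19}. The interruption functions are defined from the task tree's termination functions, which do not depend on the policy tree. Hence $e$ and $e'$ share the same $\mathbf{B}$, and therefore the same $\hat\gamma$. They differ only in $\hat\pi_{n(h)}$ versus $\hat\pi'_{n(h)}$, and only at state $s$.

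Second, check that the values match by definition. Here $V^{SME}_{\Pi|n(h')}=V_{e|n(h')}$ and $V^{SME}_{\Pi'|n(h')}=V_{e'|n(h')}$. Both are the unique fixed points, given by Lemma~\ref{def:B.17} and Corollary~\ref{def:B.18}, of the HBE-SMDP. This is the GHBE of Definition~\ref{def:B.13} with its preparation stage written using $\gamma_{n(k)}$ in place of $\hat\gamma_{n(k)}$. The assumed inequality is therefore literally the hypothesis of Theorem~\ref{def:B.19}.

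Third, invoke Theorem~\ref{def:B.19}. It gives $V_{e|n(h')}(s,\tau_{n(h)})\le V_{e'|n(h')}(s,\tau_{n(h)})$ for all $s$, which is the claim.

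The only point needing care is that the SME decision-transition probabilities are computed from $\beta$ restricted to the path inside $e|n(h')$. They are thus identical for $\Pi$ and $\Pi'$, and the operator comparison $\mathcal F^{e}V_e\le\mathcal F^{e'}V_e$ in the proof of Theorem~\ref{def:B.19} goes through unchanged. I expect no genuine obstacle beyond this bookkeeping, which is purely notational.
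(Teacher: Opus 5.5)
Your proposal is correct and matches the paper's approach: the paper gives no separate proof for this corollary and treats it as the direct specialization of Theorem~\ref{def:B.19} to the SME interruption tree $\hat\beta=\beta$. Your observation that this interruption tree is fixed by the task tree, and is therefore shared by $e=(\Pi,\mathbf{B})$ and $e'=(\Pi',\mathbf{B})$, is what makes the theorem's hypothesis apply. The uniform-termination and bounded-reward conditions of Lemma~\ref{def:B.17} carry over implicitly.
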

\begin{corollary}[\textbf{ME Hierarchical Policy Improvement Theorem}]
\label{def:B.21}
Given two policy trees $\Pi$ and $\Pi'$, where the node policy functions $\pi_{n(h)}$ and $\pi'_{n(h)}$ differ only at state $s$, if
$
\mathbb{E}_{ \tau_{n(h+1)} \sim \pi_{n(h)}(s) }
V^{ME}_{\Pi|n(h')} (s, \tau_{n(h+1)})
\leq
\mathbb{E}_{ \tau_{n(h+1)} \sim \pi'_{n(h)}(s) }
V^{ME}_{\Pi|n(h')} (s, \tau_{n(h+1)}),
$
then for all states $s$, it holds that
$
V^{ME}_{\Pi|n(h')} (s, \tau_{n(h)})
\leq
V^{ME}_{\Pi'|n(h')} (s, \tau_{n(h)}).
$
\end{corollary}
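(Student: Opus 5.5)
Corollary~\ref{def:B.21} is the special case of the Hierarchical Policy Improvement Theorem (\cref{def:B.19}) obtained by fixing the interruption tree to the ME tree. I would build two execution trees $e=(\Pi,\mathbf B^{ME})$ and $e'=(\Pi',\mathbf B^{ME})$. Here $\mathbf B^{ME}$ puts $\hat\beta\equiv 1$ on every non-root execution node of the subtree $e|n(h')$, while the root $e_{n(h')}$ is never interrupted, following the preparation-phase convention of \cref{def:B.7}. These two trees share the same interruption tree, and their policy trees agree except at $\hat\pi_{n(h)}$ at the single state $s$. So $e$ and $e'$ differ only in that node policy at $s$, which is exactly the structural hypothesis of \cref{def:B.19}.

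The next step is to identify the UVFH of these trees with the ME values used in the corollary, that is, $V_{e|n(h')}=V^{ME}_{\Pi|n(h')}$ and $V_{e'|n(h')}=V^{ME}_{\Pi'|n(h')}$. I would compute $\hat\gamma$ from \cref{def:B.11} with $\hat\beta\equiv1$. For $k=h'$ this gives $\hat\gamma_{n(h')}=\hat\beta_{n(h'+1)}=1$, and for every $h'<k\le l$ the weight $\hat\gamma_{n(k)}$ vanishes. With these weights, the GHBE preparation stage of \cref{def:B.13} becomes $\hat V(s',\tau_{leaf})=V(s',\tau_{n(h')})$, which is precisely the HBE-MDP preparation equation. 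The decision, interaction and termination stages are the same for both. Hence the GHBE operator for $e$ coincides with the HBE-MDP operator for $\Pi$, and the same holds for $e'$ and $\Pi'$. By \cref{def:B.17} and \cref{def:B.18}, under the standing uniform-termination and bounded-reward assumptions each of these operators has a unique fixed point, so the two notions of value coincide.

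With this identification, the hypothesis of the corollary, written with $V^{ME}_{\Pi|n(h')}$, is literally the hypothesis of \cref{def:B.19} for $e$ and $e'$. The conclusion of \cref{def:B.19} then translates back into $V^{ME}_{\Pi|n(h')}(s,\tau_{n(h)})\le V^{ME}_{\Pi'|n(h')}(s,\tau_{n(h)})$ for all $s$.

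I expect the main obstacle to be the identification step, specifically the convention for the subtree root. If $\hat\beta_{n(h')}$ were also set to $1$, authority would pass to the parent of $n(h')$, outside the executing subtree $e|n(h')$. To avoid this, I would invoke the preparation-phase rule that the root $e_{n(h')}$ is excluded from interruption events, so that the first sum index $k=h'$ absorbs all the mass. The same argument, with $\hat\beta=\beta$ and $\hat\gamma=\gamma$, gives \cref{def:B.20}.
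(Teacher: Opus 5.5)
Your proposal is correct and follows the paper's approach: the paper gives no separate proof and treats this corollary as the specialization of \cref{def:B.19} to the ME interruption tree, where $\hat\beta\equiv1$ below the subtree root, so the preparation stage collapses to $V(s',\tau_{n(h')})$. Your explicit $\hat\gamma$ computation makes that specialization precise, with two small remarks. First, the root exclusion you rely on is stated in the preparation-phase description after \cref{def:B.9} and encoded in the product index of \cref{def:B.11}, not in \cref{def:B.7}. Second, $V^{ME}_{\Pi}$ is by definition the UVFH of the ME execution tree, so your fixed-point identification is a consistency check rather than a required step.
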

Next, we prove the existence and uniqueness of the GHBOE solution for \(\hat\pi_{n(h)}\) (\textbf{hierarchical policy iteration}).
\paragraph{Existence of the GHBOE solution for \(\hat\pi_{n(h)}\).}
Given an arbitrary initial execution tree \(e^0 = e\), perform iterative updates that change only the node policy \(\hat\pi_{n(h)}\) each round. For the \(k\)-th iteration let the execution tree be \(e^k = \{\Pi^k,\mathbf{B}\}\), and define
\[
\hat\pi^k_{n(h)}(\tau_{n(h+1)}\mid s)
= \arg\max_{\tau_{n(h+1)}} V_{e^{k-1}|n(h')}(s,\tau_{n(h+1)}),
\]
where \(V_{e^{k-1}|n(h')}\) is the solution of the GHBE given \(\tau_{n(h')}\) and \(e^{k-1}|n(h')\).
By the Hierarchical Policy Improvement Theorem (~\cref{def:B.19}), the sequence \(V_{e^{k}|n(h')}\) is monotonically nondecreasing in \(k\). By~\cref{def:B.17}, \(\{V_{e^{k}|n(h')}\}_k\) is uniformly bounded for all \(k\). Hence the iterative process converges, and it converges to a unique limit. Denote
\[
V_{e^*|n(h')} \dot= \lim_{k\to\infty} V_{e^k|n(h')}.
\]
Let the execution tree derived from this limit be \(e^*\), where
\[
\hat\pi^*_{n(h)}(\tau_{n(h+1)}\mid s)
= \arg\max_{\tau_{n(h+1)}} V_{e^*|n(h')}(s,\tau_{n(h+1)}),
\]
and all other components of \(e^*\) coincide with those of \(e\). Since the iteration converges, the value evaluation of \(e^*\) is \(V_{e^*|n(h')}\), and further GHBOE updates on \(e^*\) leave it unchanged. It is straightforward to verify that \(V_{e^*|n(h')}\) satisfies the GHBOE for \(\hat\pi_{n(h)}\). Therefore a solution to the GHBOE for \(\hat\pi_{n(h)}\) exists. \qed
\paragraph{Uniqueness of the GHBOE solution for \(\hat\pi_{n(h)}\).}
Suppose \(V^*_{e|n(h')}\) is any solution in the solution set of the GHBOE for \(\hat\pi_{n(h)}\). Construct an execution tree \(\bar e^*\) from \(e\) and \(V^*_{e|n(h')}\) by setting
\[
\hat{\bar\pi}^*_{n(h)}(\tau_{n(h+1)}\mid s)
= \arg\max_{\tau_{n(h+1)}} V^*_{e|n(h')}(s,\tau_{n(h+1)}),
\]
while keeping the rest of \(\bar e^*\) identical to \(e\). If we take \(e^0=\bar e^*\) and run the same GHBOE iteration, then by construction the iterates satisfy \(V_{e^k}\equiv V^*_{e|n(h')}\) for all \(k\). But from the previous existence argument the iteration from any initial \(e^0\) converges to the unique limit \(V_{e^*|n(h')}\). Therefore \(V^*_{e|n(h')} = V_{e^*|n(h')}\). As this holds for any solution \(V^*_{e|n(h')}\), the GHBOE solution is unique. Consequently, the GHBOE for \(\hat\pi_{n(h)}\) has a unique solution \(V_{e^*|n(h')}\). \qed
As a special case of the GHBOE solution \(V_{e^*|n(h')}\), the solutions of HBOE-SMDP and HBOE-MDP also exist and are unique; we denote them by \(V^{SME}_{\Pi^-|n(h')}\) and \(V^{ME}_{\Pi^*|n(h')}\), respectively.
Therefore, given a policy tree \(\Pi\) with all node policies except \(\hat\pi_{n(h)}\) fixed, let \(\pi^0=\pi\). The solutions of the SMDP hierarchical policy iteration and the MDP hierarchical policy iteration satisfy
\[
V^{SME}_{\Pi^-|n(h')} \;\geq\; V^{SME}_{\Pi|n(h')},
\qquad
V^{ME}_{\Pi^*|n(h')} \;\geq\; V^{ME}_{\Pi|n(h')}.
\]
The corresponding improved node policies are defined by
\[
\begin{cases}
\displaystyle \hat\pi^-_{n(h)}(\tau_{n(h+1)}\mid s)
= \arg\max_{\tau_{n(h+1)}} V^{SME}_{\Pi^-|n(h')}(s,\tau_{n(h+1)}),\\[6pt]
\hat\pi^-_{n(\hat h)}(\tau_{n(h+1)}\mid s) = \hat\pi_{n(\hat h)}(\tau_{n(h+1)}\mid s),
\quad \forall\, n(\hat h)\neq n(h),
\end{cases}
\]
and
\[
\begin{cases}
\displaystyle \hat\pi^*_{n(h)}(\tau_{n(h+1)}\mid s)
= \arg\max_{\tau_{n(h+1)}} V^{ME}_{\Pi^*|n(h')}(s,\tau_{n(h+1)}),\\[6pt]
\hat\pi^*_{n(\hat h)}(\tau_{n(h+1)}\mid s) = \hat\pi_{n(\hat h)}(\tau_{n(h+1)}\mid s),
\quad \forall\, n(\hat h)\neq n(h).
\end{cases}
\]
Hence \(\hat\pi^-_{n(h)}\) and \(\hat\pi^*_{n(h)}\) are the SMDP and MDP optimal node policies for \(\tau_{n(h)}\), respectively.
Taking \(h' = h\), we further obtain the chain of inequalities on expected returns:
\[
\begin{aligned}
J_{\mathcal C}(\tau_{n(h)})
&= J^{SME}(\Pi^- \mid \tau_{n(h)})
= \mathbb{E}_{s\sim\alpha_{n(h)}}\big[ V^{SME}_{\Pi^-|n(h)}(s,\tau_{n(h)}\mid\tau_{n(h)})\big] \\[4pt]
&\ge
\mathbb{E}_{s\sim\alpha_{n(h)}}\big[ V^{SME}_{\Pi^*|n(h)}(s,\tau_{n(h)}\mid\tau_{n(h)})\big]
= J^{SME}(\Pi^* \mid \tau_{n(h)}) = J_{\mathcal D}(\tau_{n(h)}),
\\[6pt]
J_{\mathcal A}(\tau_{n(h)})
&= J^{ME}(\Pi^* \mid \tau_{n(h)})
= \mathbb{E}_{s\sim\alpha_{n(h)}}\big[ V^{ME}_{\Pi^*|n(h)}(s,\tau_{n(h)}\mid\tau_{n(h)})\big] \\[4pt]
&\ge
\mathbb{E}_{s\sim\alpha_{n(h)}}\big[ V^{ME}_{\Pi^-|n(h)}(s,\tau_{n(h)}\mid\tau_{n(h)})\big]
= J^{ME}(\Pi^- \mid \tau_{n(h)}) = J_{\mathcal B}(\tau_{n(h)}).
\end{aligned}
\]
Thus we have established the inequalities in~\cref{def:B.15}:
\[
J_{\mathcal A} \ge J_{\mathcal B}, \qquad J_{\mathcal C} \ge J_{\mathcal D}.
\]
\qed
\subsubsection{Step 2: Hierarchical Execution Improvement Theorem and Optimal Execution Theorem}
This section proves the Hierarchical Execution Improvement Theorem in multi-level HRL. Based on this theorem, we further establish the Optimal Execution Theorem under the Task Subtree Compatibility Condition (\cref{def:B.16}). All discussions in this section are based on a fixed task $\tau_{n(h')}$ with its corresponding reward and boundary conditions; therefore, for notational simplicity, we omit the conditioning on $\tau_{n(h')}$ in the HRL unified value function.
\begin{theorem}[\textbf{Hierarchical Execution Improvement Theorem}]
\label{def:B.22}
Given execution trees $e$ and $e'$, suppose that only the decision interruption functions $\hat\beta_{n(h)}$ and $\hat\beta'_{n(h)}$ differ at $s'$. Let $\hat\gamma$ and $\hat\gamma'$ be computed from $\hat\beta$ and $\hat\beta'$, respectively. If for all $n(l)$ it holds that
$
\sum^l_{k = h'} \hat\gamma_{n(k)} (s') V_{e|n(h')} (s', \tau_{n(k)}| \tau_{n(h')})
\leq
\sum^l_{k = h'} \hat\gamma'_{n(k)} (s') V_{e|n(h')} (s', \tau_{n(k)}| \tau_{n(h')}),
$
then for all $s$ we have
$
V_{e|n(h)} (s, \tau_{n(h)}) \leq V_{e'|n(h)} (s, \tau_{n(h)}),
$
where $l \geq h \geq h'$, and $l$ denotes the level of the task directly calling an action in the previous frame.
\end{theorem}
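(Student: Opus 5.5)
We follow the template of the proof of Theorem~\ref{def:B.19}: compare the two GHBE operators on the \emph{old} value, then propagate the ordering to the fixed points by monotonicity and contraction. Fix the evaluated task and write $V\doteq V_{e|n(h')}(\cdot,\cdot\mid\tau_{n(h')})$ for the fixed point of $\mathcal F^{e,\tau}_{n(h')}$, which is unique by Lemma~\ref{def:B.17} and Corollary~\ref{def:B.18}. We first establish the pointwise inequality $\mathcal F^{e,\tau}_{n(h')}V\le\mathcal F^{e',\tau}_{n(h')}V$ on every extended state $(s,\tau_{n(i)})$ of the subtree. We then read off the claim at the root of the relevant subtree.

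\textbf{Step 1: the two operators differ only in the preparation stage.} The execution trees $e$ and $e'$ share the policy tree $\Pi$ and the task tree. Hence the decision, interaction and termination stages of Definition~\ref{def:B.13} coincide under both trees. This includes the selection probabilities, the task-transition weights $\gamma$ and the exit rewards. The only difference is the preparation weights $\hat\gamma_{n(k)}(s')$ versus $\hat\gamma'_{n(k)}(s')$. These can differ only at the single state $s'$, and only along paths passing through $e_{n(h)}$. Unrolling the decision stage from $(s,\tau_{n(i)})$, we write
\[
\mathcal F^{e',\tau}V(s,\tau_{n(i)})-\mathcal F^{e,\tau}V(s,\tau_{n(i)})
=\mathbb E\Big[\gamma_{n(h')}(s')\Big(\sum_{k=h'}^{l}\hat\gamma'_{n(k)}(s')V(s',\tau_{n(k)})-\sum_{k=h'}^{l}\hat\gamma_{n(k)}(s')V(s',\tau_{n(k)})\Big)\Big].
\]
The expectation is over the action $a$ chosen below $e_{n(i)}$, over the depth $l$ of the node that selected it, and over $s'\sim T(s,a)$. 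For every realized $l$, the bracket is nonnegative by hypothesis. When $l$ does not pass through $n(h)$, or when $s'$ is not the differing state, the bracket is identically zero. The weight $\gamma_{n(h')}(s')$ is nonnegative. The difference is therefore $\ge0$, which gives the operator inequality.

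\textbf{Step 2: iterate and take the limit.} $\mathcal F^{e',\tau}$ is monotone, since it is affine in $V$ with nonnegative coefficients $\gamma\hat\gamma'$. So $V=\mathcal F^{e,\tau}V\le\mathcal F^{e',\tau}V$ implies $V\le(\mathcal F^{e',\tau})^kV$ for every $k$. Under uniform termination, Corollary~\ref{def:B.18} gives $(\mathcal F^{e',\tau})^kV\to V_{e'|n(h')}$. Hence $V_{e|n(h')}\le V_{e'|n(h')}$ pointwise.

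\textbf{Step 3: main obstacle, passing from $h'$ to the stated conclusion at $n(h)$.} The hypothesis weighs values conditioned on $\tau_{n(h')}$, while the conclusion concerns $V_{e|n(h)}(s,\tau_{n(h)})$. If $h'=h$, Step~2 is already the claim. If $h'<h$, two facts about the subtree $e|n(h)$ matter. First, by the convention of the preparation phase, the root's own interruption function is never consulted inside $e|n(h)$. Second, a change of $\hat\beta_{n(h)}$ alters the transfer of authority only out of $n(h)$'s descendants toward ancestors at depth $<h$. So I would first try to argue that the comparison needed inside $e|n(h)$ is the truncation of the hypothesis to $k\ge h$. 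The terms with $k<h$ redistribute authority outside the subtree. Within the subtree, that mass is absorbed into the root term $k=h$, using the formula $\hat\gamma^{n(h)}_{n(h)}=1-\sum\cdots$ from Definition~\ref{def:B.11}. The step I expect to be most delicate is checking that the hypothesized inequality, stated with conditioning task $\tau_{n(h')}$, still yields a nonnegative bracket when the Step~1 argument is rerun for $\mathcal F^{e,\tau}_{n(h)}$. If this reduction fails in general, I would state the result for the conditioning used in the hypothesis, i.e.\ conclude $V_{e|n(h')}\le V_{e'|n(h')}$. That suffices for the Optimal Execution Theorem, where the theorem is applied with matching indices.
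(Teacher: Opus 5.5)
Your Steps~1--2 are the paper's proof. It compares $\mathcal F^{e',\tau}_{n(h')}$ and $\mathcal F^{e,\tau}_{n(h')}$ on the old fixed point at the interaction leaf, propagates the sign through the decision stage as in Theorem~\ref{def:B.19}, and iterates to the limit. Its final display is exactly your fallback $V_{e|n(h')}\le V_{e'|n(h')}$; the intermediate line written with $\mathcal F^{e,\tau}_{n(h)}$ is an index slip.

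In Step~3 the truncation/absorption plan is unnecessary, and it is not accurate as phrased. Definition~\ref{def:B.11} puts the residual weight at $n(l)$, not at $n(h)$, and the mass of an interruption at $e_{n(h)}$ is not moved to $k=h$. Your own first observation already settles the issue. The root's interruption function never enters $\hat\gamma^{n(h)}$, so $e|n(h)$ and $e'|n(h)$ have identical GHBEs. The literally stated conclusion at $n(h)$ therefore holds with equality, and the $n(h')$ comparison you proved is the only substantive content.
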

\textbf{Proof.}
First, we show that
\[
\mathcal{F}^{e,\tau}_{n(h')} V_{e|n(h')} \leq \mathcal{F}^{e',\tau}_{n(h')} V_{e|n(h')}.
\]
Since for all $\tau_{leaf}, n(l)$,
\[
\begin{aligned}
&\mathcal{F}^{e',\tau}_{n(h')} V_{e|n(h')}(s,\tau_{leaf}) - \mathcal{F}^{e,\tau}_{n(h')} V_{e|n(h')}(s,\tau_{leaf}) \\
&= T(s'|s,a) \gamma_{n(h')}(s') \left[ \sum^l_{k = h'} \hat\gamma'_{n(k)} (s') V_{e|n(h')} (s', \tau_{n(k)}) - \sum^l_{k = h'} \hat\gamma_{n(k)} (s') V_{e|n(h')} (s', \tau_{n(k)}) \right] \\
&\geq 0,
\end{aligned}
\]
by the same reasoning as in the proof of~\cref{def:B.19}, it follows that for all $\tau_{n(\hat h)}, \hat h \geq h'$,
\[
\mathcal{F}^{e',\tau}_{n(h')} V_{e|n(h')}(s,\tau_{n(\hat h)}) - \mathcal{F}^{e,\tau}_{n(h')} V_{e|n(h')}(s,\tau_{n(\hat h)}) \geq 0.
\]
Hence,
\[
\mathcal{F}^{e,\tau}_{n(h)} V_{e|n(h)} \leq \mathcal{F}^{e',\tau}_{n(h)} V_{e|n(h)}.
\]
Similarly, applying the contraction property and taking the limit gives
\[
V_{e|n(h')} \leq \lim_{k \to \infty} (\mathcal{F}^{e',\tau}_{n(h')})^k V_{e|n(h')} = V_{e'|n(h')}.
\]
This completes the proof of~\cref{def:B.22}. \qed
\begin{theorem}[\textbf{Optimal Execution Theorem}]
\label{def:B.23}
Under~\cref{def:B.16}, if the node policy $\hat\pi_{n(h)}$ is the optimal policy for the execution induced by any interruption subtree $\mathbf{B}|n(h)$, then
$
V^{\mathbf{B}|n(h)}_{\Pi|n(h)} \leq V^{ME}_{\Pi|n(h)},
$
i.e., the ME is the optimal execution method.
\end{theorem}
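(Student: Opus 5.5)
The plan is to derive the theorem from the Hierarchical Execution Improvement Theorem (\cref{def:B.22}), or more precisely from the operator-comparison argument in its proof, applied once to the whole interruption subtree rather than node by node. Fix $\tau_{n(h)}$ as the evaluated task and let $e=(\Pi,\mathbf{B})$, with $\hat\pi_{n(h)}$ optimal for $\mathbf{B}|n(h)$. By the existence and uniqueness argument for the GHBOE for $\hat\pi_{n(h)}$, $V\doteq V^{\mathbf{B}|n(h)}_{\Pi|n(h)}$ then satisfies the decision-stage identity
\[
V(s,\tau_{n(h)})=\max_{\tau'\in\mathcal T'_{n(h)}}V(s,\tau')\quad\text{for all } s .
\]
Let $e'$ be the execution tree that agrees with $e$ except that every non-root node of $e|n(h)$ has $\hat\beta\equiv1$, which is ME. Its decision-transition weights are $\hat\gamma'_{n(h)}\equiv1$ and $\hat\gamma'_{n(k)}\equiv0$ for $k>h$.

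The key pointwise inequality I would prove is that, for every $s'$ and every depth $k$ with $h\le k\le l$,
\[
V(s',\tau_{n(k)})\le V(s',\tau_{n(h)}).
\]
Granting this, the weights $\hat\gamma_{n(k)}(s')$ are nonnegative and sum to one over $k=h,\dots,l$, so
\[
\sum_{k=h}^{l}\hat\gamma_{n(k)}(s')V(s',\tau_{n(k)})\le V(s',\tau_{n(h)})=\sum_{k=h}^{l}\hat\gamma'_{n(k)}(s')V(s',\tau_{n(k)}).
\]
This is exactly the hypothesis of \cref{def:B.22}, with $h'=h$ and with every interruption function changed simultaneously. Following that proof, this gives $\mathcal F^{e,\tau}_{n(h)}V\le\mathcal F^{e',\tau}_{n(h)}V$. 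Combining this with $V=\mathcal F^{e,\tau}_{n(h)}V$ and the monotonicity of $\mathcal F^{e',\tau}_{n(h)}$, I iterate and pass to the limit using the contraction in \cref{def:B.17} and \cref{def:B.18}. The limit is $V\le V_{e'|n(h)}=V^{ME}_{\Pi|n(h)}$.

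The pointwise inequality is where \cref{def:B.16} enters, and I would split it into two cases. In the two-level case, any $\tau_{n(k)}$ with $k>h$ that holds authority is a child of $\tau_{n(h)}$, since leaves are actions. The bound then follows immediately from the max identity above. In the compatible multilevel case, the decision stage of the GHBE gives
\[
V(s',\tau_{n(k)})=\mathbb E_{\tau''\sim\hat\pi_{n(k)}}V(s',\tau'').
\]
Each such $\tau''$ lies in $\mathcal T'_{n(k)}\subseteq\mathcal T'_{n(h)}$, so each term is bounded by $\max_{\tau'\in\mathcal T'_{n(h)}}V(s',\tau')=V(s',\tau_{n(h)})$.

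I expect this identification to be the main obstacle. It requires the decision-state value of a task $\tau''$ to be the same whether $\tau''$ is reached from $\tau_{n(k)}$ or directly from $\tau_{n(h)}$. My first attempt would be to argue that, once authority reaches a node, the future of the process for fixed $\tau_{n(h)}$ and fixed $\mathbf{B}$ depends only on that task node and its subtree, together with the interruption path above it. This path dependence is exactly the subtle point. Under a general $\mathbf{B}$, the node's ancestors still interrupt differently, so the two occurrences of $\tau''$ need not have equal values. If that fails, the fallback is an induction on depth: prove the bound first for the deepest decision nodes, then propagate it upward, using the fact that every value is a convex combination of values at descendant decision states. This avoids needing the two occurrences of the same task to share a value exactly.
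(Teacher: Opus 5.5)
Your route is the paper's: the root's max identity plus compatibility gives $V(s',\tau_{n(k)})\le V(s',\tau_{n(h)})$, the convex combination turns this into the hypothesis of \cref{def:B.22} with every interruption function of $e|n(h)$ switched to ME at once (the paper invokes \cref{def:B.22} this way too, implicitly), and monotonicity plus contraction finish the proof. The step you flag is where the paper's proof is thin. The middle equality in its chain identifies the value of a task reached from $\tau_{n(\hat h)}$ with its value when reached directly from $\tau_{n(h)}$. You are right that this identification fails for a general $\mathbf{B}$: the preparation weights below each copy of $\tau''$ carry the interruption factors of every ancestor on its path.

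However, your fallback does not close the gap. Values are convex combinations of descendant values only within one frame. An induction on depth therefore cannot start: the value at a deepest decision node depends, through the next frame's preparation stage, on the values at all its ancestors, including intermediate nodes and $\tau_{n(h)}$ itself. As written, the compatible multilevel case is not proved.

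The missing idea is a simultaneous fixed-point bound that never compares copies of a task. Read $\mathcal T'$ as in \cref{def:B.3}, so that child-task sets include the action leaves. Under compatibility, every action reachable in the subtree is then also a child of $\tau_{n(h)}$. Put $W(s)=V(s,\tau_{n(h)})$ and $D=\sup_{s,x}\bigl[V(s,x)-W(s)\bigr]\ge 0$ over the decision nodes $x$ of $e|n(h)$; $D$ is finite by \cref{def:B.17}. Expand one frame from $x$, with $\pi_x$ the action distribution induced by descending from $x$. The preparation value is a convex combination of values on the decision path, so it is at most $W(s')+D$, and uniform termination gives
\[
V(s,x)\le\mathbb{E}_{a\sim\pi_x(s)}\Bigl[R_{n(h)}(s,a)+\mathbb{E}_{s'}\Bigl(\gamma_{n(h)}(s')W(s')+\sum_{k=-1}^{h-1}\gamma_{n(k)}(s')\hat R_{n(h)}(s',\tau_{n(k)})\Bigr)\Bigr]+(1-\epsilon)D .
\]
The bracket is exactly the value of $a$ selected directly by $\tau_{n(h)}$: then $l=h$ and authority returns to the root. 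By optimality of $\hat\pi_{n(h)}$, the bracket is therefore at most $W(s)$. Hence $D\le(1-\epsilon)D$, so $D=0$, which is your pointwise bound with no identification across positions.

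The action clause is essential. If compatibility restricted only non-primitive children, an intermediate ancestor that interrupts at a favourable state could lift a deeper node's value above $W$. The bound, and with it the conclusion $V^{\mathbf{B}|n(h)}_{\Pi|n(h)}\le V^{ME}_{\Pi|n(h)}$, can then fail.
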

\textbf{Proof.}
for all $s, \hat h > h$ We have
\[
\begin{aligned}
\quad
V^{\mathbf{B}|n(h)}_{\Pi|n(h)}(s,\tau_{n(h)})
&= \max_{\tau_{{n}(h+1)} \in \mathcal{T}'_{\tau_{n(h)}}} V^{\mathbf{B}|n(h)}_{\Pi|n(h)}(s,\tau_{{n}(h+1)}) \\
&\geq \max_{\tau_{{n}(h+1)} \in \mathcal{T}'_{\tau_{n(\hat h)}}} V^{\mathbf{B}|n(h)}_{\Pi|n(h)}(s,\tau_{{n}(h+1)}) \\
&= \max_{\tau_{{n}(\hat h+1)} \in \mathcal{T}'_{\tau_{n(\hat h)}}} V^{\mathbf{B}|n(h)}_{\Pi|n(h)}(s,\tau_{{n}(\hat h+1)}) \\
&\geq \mathbb{E}_{\tau_{{n}(\hat h+1)} \sim \hat\pi_{n(\hat h)}}V^{\mathbf{B}|n(h)}_{\Pi|n(h)}(s,\tau_{{n}(\hat h+1)}) \\
&= V^{\mathbf{B}|n(h)}_{\Pi|n(h)}(s,\tau_{n(\hat h)}),
\end{aligned}
\]
where the first equality follows from the optimality of $\hat\pi_{n(h)}$, and the first inequality is guaranteed by~\cref{def:B.16}.
Thus, for all $n(l)$ we have
\[
\sum^l_{k = h} \hat\gamma_{n(k)}(s') V^{\mathbf{B}|n(h)}_{\Pi|n(h)}(s', \tau_{n(k)}|\tau_{n(h)}) \leq V^{\mathbf{B}|n(h)}_{\Pi|n(h)}(s', \tau_{n(h)}|\tau_{n(h)}).
\]
By~\cref{def:B.22}, it follows that for all $s$
\[
\quad V^{\mathbf{B}|n(h)}_{\Pi|n(h)}(s, \tau_{{n}(h)}) \leq V^{ME}_{\Pi|n(h)}(s, \tau_{{n}(h)}).
\]
Hence,~\cref{def:B.23} is proved. \qed
We also have
\[
\begin{aligned}
J_{\mathcal{B}}(\tau_{n(h)})
&= J^{ME}(\Pi^-|\tau_{n(h)}) \\
&= \mathbb{E}_{s \sim \alpha_{n(h)}} \left[ V^{ME}_{\Pi^-|n(h)} (s, \tau_{n(h)}| \tau_{n(h)}) \right] \\
&\geq \mathbb{E}_{s \sim \alpha_{n(h)}} \left[ V^{SME}_{\Pi^-|n(h)} (s, \tau_{n(h)}| \tau_{n(h)}) \right] \\
&= J^{SME}(\Pi^-|\tau_{n(h)}) \\
&= J_{\mathcal{C}}(\tau_{n(h)}),
\end{aligned}
\]
which proves that $J_{\mathcal{B}} \geq J_{\mathcal{C}}$ in~\cref{def:B.15}.  \qed
Hence,~\cref{def:B.15} is fully proved. \qed
~\cref{def:B.16} is crucial. In multi-level HRL, if~\cref{def:B.16} is not satisfied, there exist \textbf{counterexamples} where
$
J_{\mathcal{B}} < J_{\mathcal{C}}
$.
\begin{figure}[t]
\centering
\includegraphics[width=0.95\textwidth]{
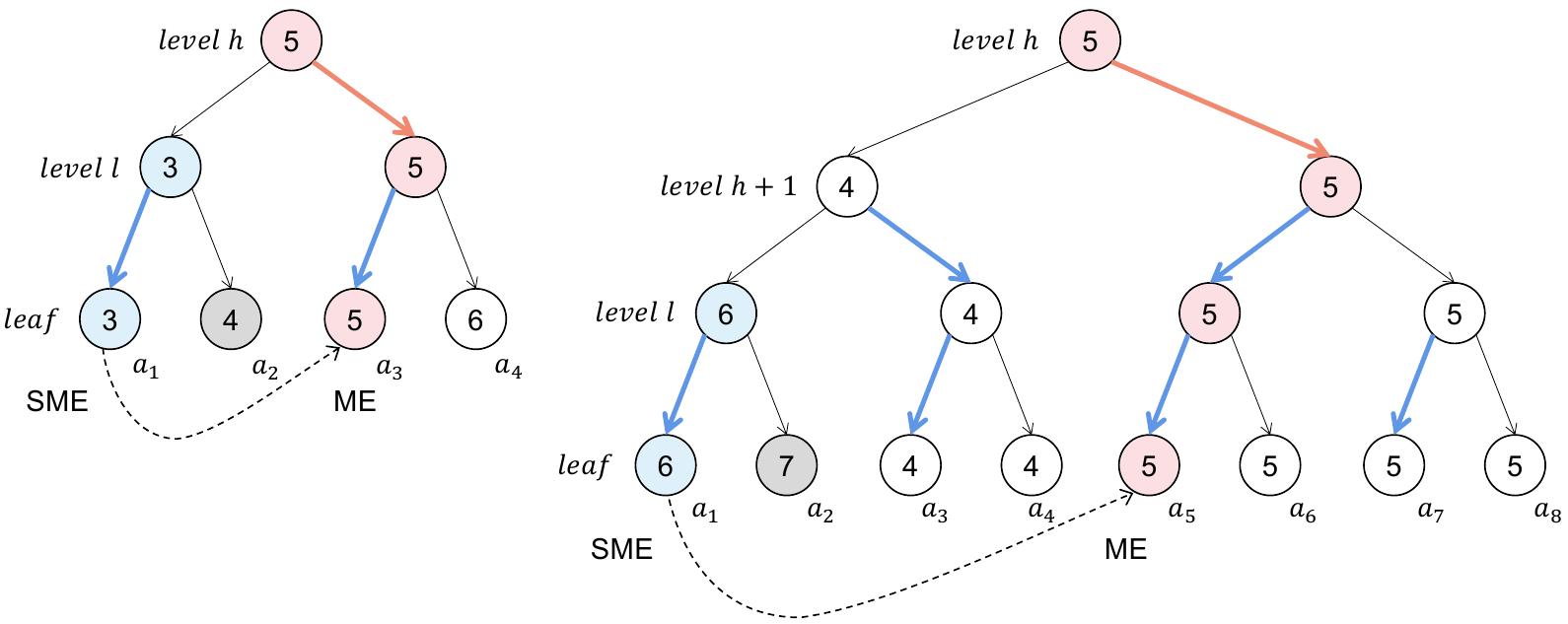
}
\caption{Counterexample: The red and blue colored arrows represent the SMDP-optimal policies for each task node, while the numbers inside the circles indicate the state-task values at the current frame when SME executes $\tau_{n(h)}$. In this example, no task termination events occur. Gray nodes denote the actions taken by the agent in the previous frame; the blue nodes form the decision path of the current frame if SME is used, and the red nodes form the decision path of the current frame if ME is used.
}
\label{fig:Counterexample}
\end{figure}
We can observe that in the left figure, the values of the red nodes are higher than those of the blue nodes, indicating that using ME to improve the execution method can yield a value increase. However, in the right figure, the values of the red nodes are lower than those of the blue nodes, so applying ME for execution improvement fails. This phenomenon arises because subtasks have their own independent reward accumulation mechanisms, and the decisions made by the subtask's optimal node policy can be suboptimal for the current task $\tau_{n(h)}$.
We note that the node with value 6 at level-$l$ is not selected by the optimal policy of its parent node; it is precisely such special parent tasks that conflict with $\tau_{n(h)}$ that produce this counterexample. By adjusting the decision path, this situation can be avoided, and~\cref{def:B.16} serves as a sufficient condition.
\section{Framework improvement and algorithmic design principles}
\label{sec:C}
\begin{table}[h]
\centering
\renewcommand{\arraystretch}{1.4}
\setlength{\tabcolsep}{4pt}
\begin{tabular}{p{1.2cm}|p{1.6cm}|p{2cm}|p{2.5cm}|p{1.5cm}|p{0.9cm}|p{2cm}}
\toprule
\textbf{Frame-\newline work} &
\textbf{Execution Mode (Deployment)} &
\textbf{Execution Mode of Behavior Policy (Training)} &
\textbf{Execution Mode of Target Policy (Training)} &
\textbf{Training Requirement} &
\textbf{$J$} &
\textbf{Limitation} \\
\midrule
HRL Baseline & SME & All node policies called by SME & All node policies called by SME & On-policy & $J_{\mathcal{C}}$ & Execution and policy suboptimality exist \\
\hline
ESIF & ME & All node policies called by SME & All node policies called by SME & On-policy & $J_{\mathcal{B}}$ & Policy suboptimality exists \\
\hline
PSIF-2S & ME & Each node calls children by SME in its Stage 1, by ME in its Stage 2 & Same node-local execution modes as behavior; completed nodes retain ME & On-policy & $J_{\mathcal{A}}$ & Subtask support problem exists \\
\hline
PSIF-1S & ME & All node policies called by SME & For each optimi-\newline zed node policy, its descendants\newline called by ME,\newline others by SME & Off-policy & $J_{\mathcal{A}}$ & Off-policy issue exists \\
\bottomrule
\end{tabular}
\caption{Comparison of HRL Improvement Frameworks}
\label{tab:Comparison}
\end{table}
The comparison of the three improved frameworks is presented in~\cref{tab:Comparison}.
\subsection{Improved framework pseudocode}
\subsubsection{ESIF pseudocode}
\begin{algorithm}[H]
\caption{ESIF Deployment Pseudocode}
\begin{algorithmic}[1]
\STATE \textbf{Deployment:} Deploy subtrees of $\Pi^-$ rooted at $\hat\pi^-_{n(h)}$ using ME
\STATE Initialize $s_0$, current task $\bar\tau = \tau_{n(h)}$, decision authority task $\tau_0 = \tau_{n(h)}$
\FOR{frame $t = 0,1,...$ until $\bar\tau = \tau_{out}$}
    \STATE $\tau = \tau_t$
    \WHILE{$\tau \notin \{\tau_{leaf}\}$}
        \STATE $\tau = \pi_\tau(s_t)$
        \IF{$\tau \in \{\tau_{leaf}\}$}
            \STATE $a_t = \tau$
        \ENDIF
    \ENDWHILE
    \STATE Execute action $a_t$ and observe next state $s_{t+1}$
    \STATE Sample task exit $\bar\tau = \tau_{n(j)}$ according to $\gamma$
    \STATE \textbf{Return decision authority to root:} $\tau_t = \tau_{n(h)}$
\ENDFOR
\end{algorithmic}
\end{algorithm}
\clearpage
\begin{algorithm}[!h]
\caption{ESIF Training Pseudocode}
\begin{algorithmic}[1]
\REQUIRE HRL algorithm $L$, MDP environment $E$
\STATE \textbf{Training:} Interact with $E$ using the original SME execution mode of $L$ and learn $\Pi^-$
\STATE Initialize policy tree $\Pi$
\COMMENT{Initialize each node policy $\hat\pi_{n(h)}, n(h) \in N$ and all state-task values $V(s,\tau_{n(i)}|\tau_{n(h')})$ for all $s \in \mathcal{S}, n(i) \in N$. $\tau_{n(h')}$ is the task to be optimized by $\hat\pi_{n(h)}$}
\FOR{episode $m = 0,1,...$}
    \STATE Initialize state $s_0$, current task $\bar\tau = \tau_{n(0)}$, decision authority task $\tau_0 = \tau_{n(0)}$, decision path $L = \{\tau_{n(0)}\}$
    \FOR{frame $t = 0,1,...$ until $\tau_t = \tau_{out}$}
        \STATE \COMMENT{Decision phase}
        \STATE $\tau = \tau_t$
        \STATE Retain in $L$ only the root-to-$\tau$ prefix of the current decision path
        \WHILE{$\tau \notin \{\tau_{leaf}\}$}
            \STATE $\tau = \hat\pi_\tau(s_t)$
            \IF{$\tau \in \{\tau_{leaf}\}$}
                \STATE $a_t = \tau$
            \ELSE
                \STATE Add the selected task $\tau$ to $L$
            \ENDIF
        \ENDWHILE
        \STATE Execute action $a_t$ and observe next state $s_{t+1}$
        \STATE \COMMENT{Interaction phase}
        \FOR{task $\tau_{n(i)} \in L$}
            \STATE $r_{t+1}(\tau_{n(i)}) = R_{n(i)}(s_t, a_t)$
        \ENDFOR
        \STATE Sample task exit $\bar\tau = \tau_{n(j)}$ according to $\gamma$ \COMMENT{Termination phase}
        \STATE Return decision authority via SME: $\tau_{t+1} = \bar\tau$ \COMMENT{Preparation phase}
        \FOR{task $\tau_{n(i)} \in L$}
            \STATE \COMMENT{Data collection}
            \STATE buffer($n(i)$) $\gets \{s_t, L, a_t, s_{t+1}, r_{t+1} | \tau_{n(i)}\}$
        \ENDFOR
    \ENDFOR
\ENDFOR
\STATE \textbf{Learning:}
\FOR{node $n(h) \in N$}
    \FOR{transition $\{s_t, L, a_t, s_{t+1}, r_{t+1} | \tau_{n(h')}\} \in \text{buffer}(n(h'))$}
        \STATE Resample task exit along $L$ according to current $\gamma$ to $\tau_{n(j)}$
        \STATE Update state-task value via HBE-SMDP:
        \[
        V(s_t, a_t | \tau_{n(h')}) \gets
        r_{t+1} + \mathbb{I}_{-1 \le j < h'} \hat R_{n(h')}(s_t, \tau_{n(j)})
        + \mathbb{I}_{h' \le j} V(s_t, \tau_{n(j)} | \tau_{n(h')})
        \]
        \FOR{task $\tau_{n(i)} \in L$}
            \STATE $V(s_t, \tau_{n(i)} | \tau_{n(h')}) \gets \mathbb{E}_{\tau_{n(i+1)} \sim \hat\pi_{n(i)}(s_t)} [V(s_t, \tau_{n(i+1)} | \tau_{n(h')})]$
        \ENDFOR
    \ENDFOR
    \STATE \textbf{Policy update:}
    \STATE $\hat\pi_{n(h)} \gets \arg\max_{\tau_{n(h+1)} \in \mathcal{T}'_{n(h)}} V(\cdot, \tau_{n(h+1)} | \tau_{n(h')})$
\ENDFOR
\STATE \textbf{Output:} SMDP-optimal policy tree $\Pi^-$ (each node policy induced by state-task values)
\end{algorithmic}
\end{algorithm}
\subsubsection{PSIF-2S pseudocode}
\clearpage
\begin{longalgorithm}
\caption{PSIF-2S Pseudocode}
\begin{algorithmic}[1] \small
\REQUIRE HRL algorithm $L$, MDP environment $E$
\STATE \textbf{Training:} Learn $\Pi^*$ through node-local Stage 1, Stage 2, and completion
\STATE \textbf{Interaction:} Stage-1 nodes call children by SME; Stage-2 and completed nodes call children by ME
\STATE \textbf{Initialize policy tree} $\Pi$
\COMMENT{Initialize each node policy $\hat\pi_{n(h)}, n(h) \in N$ and all state-task values $V(s,\tau_{n(i)}|\tau_{n(h')})$ for all $s \in \mathcal{S}, n(i) \in N$. $\tau_{n(h')}$ is the task to be optimized by $\hat\pi_{n(h)}$}
\STATE Initialize lowest task nodes in Stage 2, other task nodes in Stage 1; primitive actions are fixed
\FOR{training rounds until every task node is completed}
\FOR{each collection episode in this round}
    \STATE Initialize state $s_0$, current task $\bar\tau = \tau_{n(0)}$, decision authority task $\tau_0 = \tau_{n(0)}$, decision path $L = \{\tau_{n(0)}\}$
    \FOR{frame $t = 0,1,...$ until $\tau_t = \tau_{out}$}
        \STATE \COMMENT{Decision phase}
        \STATE $\tau = \tau_t$
        \STATE Retain in $L$ only the root-to-$\tau$ prefix of the current decision path
        \WHILE{$\tau \notin \{\tau_{leaf}\}$}
            \STATE $\tau = \hat\pi_\tau(s_t)$
            \IF{$\tau \in \{\tau_{leaf}\}$}
                \STATE $a_t = \tau$
            \ELSE
                \STATE Add the selected task $\tau$ to $L$
            \ENDIF
        \ENDWHILE
        \STATE Execute action $a_t$ and observe next state $s_{t+1}$ \COMMENT{Interaction phase}
        \FOR{task $\tau_{n(i)} \in L$}
            \STATE $r_{t+1}(\tau_{n(i)}) = R_{n(i)}(s_t, a_t)$
        \ENDFOR
        \STATE Sample task exit $\bar\tau = \tau_{n(j)}$ according to $\gamma$\COMMENT{Termination phase}
        \STATE Return decision authority: $\tau_{t+1} = \bar\tau$
        \STATE \COMMENT{Preparation phase: a caller in Stage 2 or completed reselects its children each frame}
        \WHILE{$\tau_{t+1}\notin\{\tau_{n(0)},\tau_{out}\}$ and its parent is Stage 2 or completed}
            \STATE $\tau_{t+1} = \tau_{t+1}.\text{father}$
        \ENDWHILE
        \FOR{task $\tau_{n(i)} \in L$}
            \STATE \COMMENT{Data collection}
            \STATE buffer($n(i)$) $\gets \{s_t, L, a_t, s_{t+1}, r_{t+1} | \tau_{n(i)}\}$
        \ENDFOR
    \ENDFOR
\ENDFOR
\STATE \textbf{Learning:}
\FOR{unfinished task node $n(h) \in N$}
    \FOR{current-mode transition $\{s_t, L, a_t, s_{t+1}, r_{t+1} | \tau_{n(h')}\} \in \text{buffer}(n(h'))$}
        \STATE Resample task exit along $L$ according to $\gamma$ to $\tau_{n(j)}$
        \STATE Set decision authority $\tau_{n(k)}=\tau_{n(j)}$, keeping task-exit index $j$ unchanged
        \WHILE{$\tau_{n(k)}\notin\{\tau_{n(0)},\tau_{out}\}$ and its parent is Stage 2 or completed}
            \STATE $\tau_{n(k)} = \tau_{n(k)}.\text{father}$
        \ENDWHILE \COMMENT{The target policy and the behavior policy use \textbf{the same execution mode.}}
        \STATE Update state-task value via GHBE:
        \[
        V(s_t, a_t | \tau_{n(h')}) \gets
        r_{t+1} + \mathbb{I}_{-1 \le j < h'} \hat R_{n(h')}(s_{t+1}, \tau_{n(j)})
        + \mathbb{I}_{h' \le j} V(s_{t+1}, \tau_{n(k)} | \tau_{n(h')})
        \]
        \FOR{task $\tau_{n(i)} \in L$}
            \STATE $V(s_t, \tau_{n(i)} | \tau_{n(h')}) \gets \mathbb{E}_{\tau_{n(i+1)} \sim \hat\pi_{n(i)}(s_t)} [V(s_t, \tau_{n(i+1)} | \tau_{n(h')})]$
        \ENDFOR
    \ENDFOR
    \STATE Improve $\hat\pi_{n(h)}$ from its current-mode values, as in ESIF
    \IF{$\tau_{n(h)}$ is Stage 2 and $\hat\pi_{n(h)}\approx\hat\pi^*_{n(h)}$}
        \STATE Mark $\tau_{n(h)}$ completed and fix its node policy
    \ENDIF
\ENDFOR
\FOR{Stage-1 task whose child task policies are all completed and fixed}
    \STATE Move this task to Stage 2 for the next collection round
\ENDFOR
\ENDFOR
\STATE \textbf{Output:} MDP-optimal policy tree $\Pi^*$ (each node policy induced by state-task values)
\STATE \textbf{Deployment:} same as ESIF
\end{algorithmic}
\end{longalgorithm}
Stage variables persist across episodes. A node begins Stage 2 after its children have completed it; its own policy is fixed only after its own Stage-2 optimization finishes. Thus behavior and target execution change from SME to ME from the bottom upwards. The task-exit index $j$ determines reward termination, whereas $k$ determines the next decision authority; changing interruption must not change the sampled task-exit boundary.
\subsubsection{PSIF-1S pseudocode}
\begin{algorithm}[H]
\caption{PSIF-1S pseudocode}
\begin{algorithmic}[1]
\REQUIRE HRL algorithm $L$, MDP environment $E$
\STATE \textbf{Training:} Interact with $E$ using the original SME execution mode of $L$ and learn $\Pi^*$
\STATE \textbf{Interaction:} Same as ESIF
\COMMENT{Behavior policy execution: SME}
\STATE \textbf{Learning:}
\COMMENT{Value update via GHBE, and target policy execution: where $\hat\pi_{n(h)}$'s descendant tasks are executed via ME, others via SME}
\FOR{node $n(h) \in N$}
    \FOR{transition $\{s_t, L, a_t, s_{t+1}, r_{t+1} | \tau_{n(h')}\} \in \text{buffer}(n(h'))$}
        \STATE Resample task exit along $L$ according to $\gamma$ to $\tau_{n(j)}$
        \STATE Set decision authority $\tau_{n(k)}=\tau_{n(j)}$, keeping task-exit index $j$ unchanged
        \IF{task $\tau_{n(k)}$ is a descendant of $\tau_{n(h)}$}
            \STATE $\tau_{n(k)} = \tau_{n(h)}$
        \ENDIF \COMMENT{Inside the optimized subtree, return authority to its root; outside, retain SME}
        \STATE Update state-task value via GHBE:
        \[
        V(s_t, a_t | \tau_{n(h')}) \gets
        r_{t+1} + \mathbb{I}_{-1 \le j < h'} \hat R_{n(h')}(s_{t+1}, \tau_{n(j)})
        + \mathbb{I}_{h' \le j} V(s_{t+1}, \tau_{n(k)} | \tau_{n(h')})
        \]
        \FOR{task $\tau_{n(i)} \in L$}
            \STATE $V(s_t, \tau_{n(i)} | \tau_{n(h')}) \gets \mathbb{E}_{\tau_{n(i+1)} \sim \hat\pi_{n(i)}(s_t)} [V(s_t, \tau_{n(i+1)} | \tau_{n(h')})]$
        \ENDFOR
    \ENDFOR
\ENDFOR
\STATE \textbf{Policy update:} Same as ESIF
\STATE \textbf{Output:} MDP-optimal policy tree $\Pi^*$ (each node policy induced by state-task values)
\STATE \textbf{Deployment:} Same as ESIF
\end{algorithmic}
\end{algorithm}
The behavior policy uses SME across the entire hierarchy. For each optimized node, the target uses ME within that node's subtree and SME outside it. The sampled task-exit boundary remains indexed by $j$; only the next decision authority changes to $k$.
This design helps mitigate the off-policy issue,
as node policies outside the subtree only influence the initial state distribution of the task,
which does not affect the optimality of the target policy optimization.
\subsection{Example of algorithmic improvement}
\subsubsection{Option framework algorithm}
~\cref{sec:B.1} presents the task tree setup under the options framework, where $\tau_{n(0)}$ is the root task, $\tau_{n(1)}$ is an option, $\tau_{leaf}$ is an action, and $R(s,a)$ is the per-step reward provided by the environment. Accordingly, the original algorithm corresponds to the HBOE-SMDP as follows:
For the root task $\hat\pi_{n(0)}$:
\[
V(s,\tau_{n(0)} |\tau_{n(0)}) = \mathbb{E}_{ \tau_{n(1)} \sim \hat\pi_{n(0)}(s) }[V(s,\tau_{n(1)} | \tau_{n(0)})],
\]
\[
\hat\pi_{n(0)}(s) = \argmax_{\tau_{n(1)}} V(s,\tau_{n(1)} |\tau_{n(0)}),
\]
\[
V(s,\tau_{n(1)} | \tau_{n(0)}) = \mathbb{E}_{ \tau_{leaf} \sim \hat\pi_{n(1)}(s) }[V(s,\tau_{leaf} | \tau_{n(0)})],
\]
\[
V(s,\tau_{leaf} | \tau_{{n}(0)}) = R(s,a)
+ \mathbb{E}_{ s' \sim T(s,a)} \Big[ \hat\gamma_{n(1)}(s') V(s',\tau_{n(1)} |  \tau_{n(0)})
+ \hat\gamma_{n(0)}(s') V(s',\tau_{n(0)} | \tau_{n(0)})  \Big].
\]
For the option $\hat\pi_{n(1)}$:
\[
V(s,\tau_{n(0)} |\tau_{n(1)}) = \mathbb{E}_{ \tau'_{n(1)} \sim \hat\pi_{n(0)}(s) }[V(s,\tau'_{n(1)} | \tau_{n(1)})],
\]
\[
V(s,\tau_{n(1)} | \tau_{n(1)}) = \mathbb{E}_{ \tau_{leaf} \sim \hat\pi_{n(1)}(s) }[V(s,\tau_{leaf} | \tau_{n(1)})],
\]
\[
\hat\pi_{n(1)}(s) = \argmax_{\tau_{leaf}} V(s,\tau_{leaf} |\tau_{n(1)}),
\]
\[
V(s,\tau_{leaf} | \tau_{{n}(1)}) = R(s,a)
+ \mathbb{E}_{ s' \sim T(s,a)} \Big[ \hat\gamma_{n(1)}(s') V(s',\tau_{n(1)} |  \tau_{n(1)})
+ \gamma_{n(0)}(s') V(s',\tau_{n(0)} | \tau_{n(1)})  \Big].
\]
It can be observed that during value estimation, because $\hat\gamma = \gamma$, the two sets of value operators between the root task and the option are identical. Therefore, a single value function can be shared between them.
In the PSIF-2S framework, once all options have completed their second stage, their node policies and termination functions are fixed. The root then enters its second stage and calls these options using ME. The GHBOE at this stage is as follows:
For the root task $\hat\pi_{n(0)}$:
\[
V(s,\tau_{n(0)} |\tau_{n(0)}) = \mathbb{E}_{ \tau_{n(1)} \sim \hat\pi_{n(0)}(s) }[V(s,\tau_{n(1)} | \tau_{n(0)})],
\]
\[
\hat\pi_{n(0)}(s) = \argmax_{\tau_{n(1)}} V(s,\tau_{n(1)} |\tau_{n(0)}),
\]
\[
V(s,\tau_{n(1)} | \tau_{n(0)}) = \mathbb{E}_{ \tau_{leaf} \sim \hat\pi_{n(1)}(s) }[V(s,\tau_{leaf} | \tau_{n(0)})],
\]
\[
V(s,\tau_{leaf} | \tau_{{n}(0)}) = R(s,a)
+ \mathbb{E}_{ s' \sim T(s,a)} \left[ \hat\gamma_{n(0)}(s') V(s',\tau_{n(0)} | \tau_{n(0)}) \right].
\]
Through iterative updates, $\hat\pi_{n(0)}$ will gradually converge from $\hat\pi^-_{n(0)}$ to $\hat\pi^*_{n(0)}$.
In the PSIF-1S framework, unlike the original options framework, the target execution processes evaluated by the root and the options differ. Therefore, separate upper and lower values are maintained. The option-exit reward uses the upper ME value, as in Equation~\eqref{new:eq:oc_offtrace}. The corresponding GHBOE is as follows:
For the root task $\hat\pi_{n(0)}$:
\[
V(s,\tau_{n(0)} |\tau_{n(0)}) = \mathbb{E}_{ \tau_{n(1)} \sim \hat\pi_{n(0)}(s) }[V(s,\tau_{n(1)} | \tau_{n(0)})],
\]
\[
\hat\pi_{n(0)}(s) = \argmax_{\tau_{n(1)}} V(s,\tau_{n(1)} |\tau_{n(0)}),
\]
\[
V(s,\tau_{n(1)} | \tau_{n(0)}) = \mathbb{E}_{ \tau_{leaf} \sim \hat\pi_{n(1)}(s) }[V(s,\tau_{leaf} | \tau_{n(0)})],
\]
\[
V(s,\tau_{leaf} | \tau_{{n}(0)}) = R(s,a)
+ \mathbb{E}_{ s' \sim T(s,a)} \left[ \hat\gamma_{n(0)}(s') V(s',\tau_{n(0)} | \tau_{n(0)}) \right].
\]
For the option $\hat\pi_{n(1)}$, the exit reward at the root boundary is
\[
\hat R_{n(1)}(s,\tau_{n(0)})=U(s)=\max_{\tau'_{n(1)}} V(s,\tau'_{n(1)} | \tau_{n(0)}).
\]
The option's values then satisfy
\[
V(s,\tau_{n(1)} | \tau_{n(1)}) = \mathbb{E}_{ \tau_{leaf} \sim \hat\pi_{n(1)}(s) }[V(s,\tau_{leaf} | \tau_{n(1)})],
\]
\[
\hat\pi_{n(1)}(s) = \argmax_{\tau_{leaf}} V(s,\tau_{leaf} |\tau_{n(1)}),
\]
\[
\begin{aligned}
V(s,\tau_{leaf} | \tau_{{n}(1)}) ={}& R(s,a)\\
&+ \mathbb{E}_{ s' \sim T(s,a)} \Big[ \hat\gamma_{n(1)}(s') V(s',\tau_{n(1)} |  \tau_{n(1)})
+ \gamma_{n(0)}(s')\hat R_{n(1)}(s',\tau_{n(0)}) \Big].
\end{aligned}
\]
This corresponds to the OC-PSIF-1S algorithm used here. $V(\cdot,\cdot|\tau_{n(0)})$ corresponds to $V^{hi}$, while $V(\cdot,\cdot|\tau_{n(1)})$ corresponds to $V^{lo}$. While the option continues, its own lower value supplies the bootstrap; when it terminates, the upper ME value supplies the exit reward. Data collection still follows SME. The single-frame target uses the observed next state and can be evaluated without sampling the upper target policy's full trajectory. It does not remove state--option distribution shift during learning.
$V(s,\tau_{n(1)} |\tau_{n(0)})$ induces the greedy root node policy, while $V(s,\tau_{n(1)} |\tau_{n(1)})$ supplies the option-policy baseline and the option termination update. The termination baseline is the maximum lower value, distinct from the upper-ME option-exit reward. Thus the framework updates are:
\[
\Delta \vartheta \sim -\alpha_{\vartheta} \frac{\partial \beta_{\omega}(s)}{\partial \vartheta}
\left( V^{lo}(s,\omega) - \max_{\omega'}V^{lo}(s,\omega') \right),
\]
\[
\Delta \theta \sim \alpha_{\theta} \frac{\partial \pi_{\omega}(a|s)}{\partial \theta}
\left( V(s,a |\tau_{n(1)}) - V(s,\tau_{n(1)} |\tau_{n(1)}) \right).
\]
The Fourrooms implementation uses an upper-value termination advantage instead; Appendix~\ref{new:app:oc_updates} describes this implementation variant.
\subsubsection{Goal-Based framework algorithm}
\cref{sec:B.1} presents the task tree setup for the target framework, where $\tau_{n(0)}$ belongs to the target set $G$, $\tau_{n(1)}$ belongs to the subgoal set $g$, and $\tau_{leaf}$ represents actions. The original algorithm corresponds to the HBOE-SMDP as follows:
For the root task $\hat\pi_{n(0)}$:
\[
V(s,\tau_{n(0)} |\tau_{n(0)}) = \mathbb{E}_{ \tau_{leaf} \sim \hat\pi_{n(0)}(s) } [V(s,\tau_{leaf} | \tau_{n(0)})],
\]
\[
\hat\pi_{n(0)}(s) = \argmax_{\tau_{n(1)}} V(s,\tau_{n(1)} |\tau_{n(0)}),
\]
\[
V(s,\tau_{n(1)} | \tau_{n(0)}) = \mathbb{E}_{ \tau_{leaf} \sim \hat\pi_{n(1)}(s) } [V(s,\tau_{leaf} | \tau_{n(0)})],
\]
\[
\begin{aligned}
V(s,\tau_{leaf} | \tau_{{n}(0)}) ={}& R_{n(0)}(s,a)\\
&+ \mathbb{E}_{ s' \sim T(s,a)} \Big[ \hat\gamma_{n(1)}(s') V(s',\tau_{n(1)} |  \tau_{n(0)})
+ \hat\gamma_{n(0)}(s') V(s',\tau_{n(0)} | \tau_{n(0)}) \Big].
\end{aligned}
\]
For the option $\hat\pi_{n(1)}$:
\[
V(s,\tau_{n(0)} |\tau_{n(1)}) = \mathbb{E}_{ \tau_{leaf} \sim \hat\pi_{n(0)}(s) } [V(s,\tau_{leaf} | \tau_{n(1)})]
,\footnote{this computation is irrelevant for policy learning and can be omitted in practice.},
\]
\[
V(s,\tau_{n(1)} | \tau_{n(1)}) = \mathbb{E}_{ \tau_{leaf} \sim \hat\pi_{n(1)}(s) } [V(s,\tau_{leaf} | \tau_{n(1)})],
\]
\[
\hat\pi_{n(1)}(s) = \argmax_{\tau_{n(1)}} V(s,\tau_{n(1)} |\tau_{n(1)}),
\]
\[
V(s,\tau_{leaf} | \tau_{{n}(1)}) = R_{n(1)}(s,a)
+ \mathbb{E}_{ s' \sim T(s,a)} \left[ \hat\gamma_{n(1)}(s') V(s',\tau_{n(1)} |  \tau_{n(1)}) \right].
\]
In some target-framework algorithms, the execution of a subgoal lasts for a fixed number of frames $c$, while the upper-level value estimation only accounts for the value of the subgoal when it has been fully executed. It does not estimate the value of the remaining subtask transitions during execution. Therefore, the ESIF framework cannot be directly applied. To address this, the subgoal $g$ can be extended to a subgoal--elapsed-time pair $(g, \hat t)$, with $0 \leq \hat t \leq c$.
\section{Experiment details and analysis}
\label{sec:D}
\subsection{Environment settings}
\paragraph{Randomized version of AntReacher.} A continuous task simulated in MuJoCo aims at controlling an ant robot to achieve a random goal on the map. When deploying a policy, an additional random transition event is engaged: For every $10$ timestep, the ant will have a random state transition on the whole map with the probability of $10\%$, only involving coordinates $x$ and $y$.
\par
\paragraph{Fourrooms.} A GridWorld environment of four rooms. The cells of the grid correspond to the states of the environment. The agent must start from a random starting cell and reach a fixed-end cell. In any state, the agent is feasible to perform one of four actions, $up, down, left, right$. And there is a random transition after the action is selected. With the probability of $\frac{2}{3}$, the agent moves one cell in the chosen direction. With the probability of $\frac{1}{3}$, the agent moves one cell in one of the other three directions, each with the chance of $\frac{1}{9}$. In either case, if the movement takes the agent onto a wall, the agent remains in the same cell.
\subsection{Algorithms used}
\paragraph{HAC.} This is a state-of-the-art goal-based HRL algorithm. For the goal-based HRL algorithms, the higher level of the hierarchy interacts with an SMDP induced by the environment dynamics and the behavior of the lower level. During training, the lower-level policy constantly changes with updates, making the higher-level interaction non-stationary. HAC proposed the hindsight action relabeling method to alleviate this problem for static environments by `pretending' that the lower-level policy has already been optimal. In addition, HAC also proposed hindsight goal transitions and subgoal testing to accelerate the learning of sparse-reward tasks.
\paragraph{OC.} This is a classic option-based HRL algorithm. This algorithm learns the entire hierarchical policy tree in an end-to-end manner. The higher-level policy and all the options (i.e., intra-option policies and termination functions) are trained using policy gradient derived using the root task rewards (i.e., the original environmental reward). A limitation of OC is that it requires predefining the number of options. The policy gradients are heavily dependent on the root task reward. Thus, OC is likely to perform poorly on the sparse-reward tasks.
\begin{figure}[htbp]
\centering
\includegraphics[width=\textwidth]{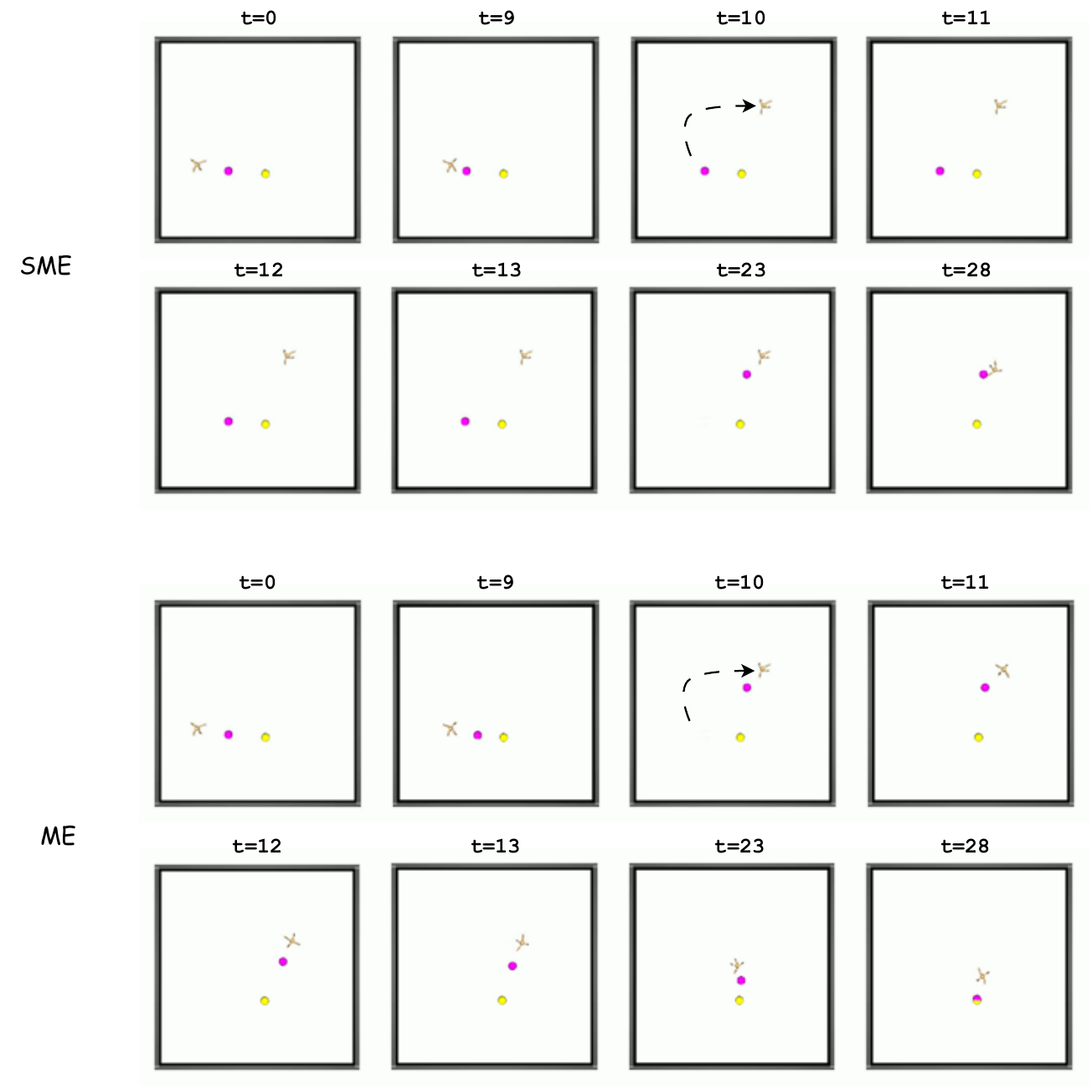}
\caption{The ant robot navigates from a random starting position to a random ending position in the AntReacher environment. The yellow sphere on the map refers to the ending position, the red sphere refers to the subgoal proposed by the agent's higher-level policy, and the lower-level policy directly controls the ant's joints for moving towards this subgoal.}
\label{fig:visual}
\end{figure}
\begin{figure}[ht]
\centering
\begin{subfigure}{1\linewidth}
\centering
\includegraphics[scale=0.4]{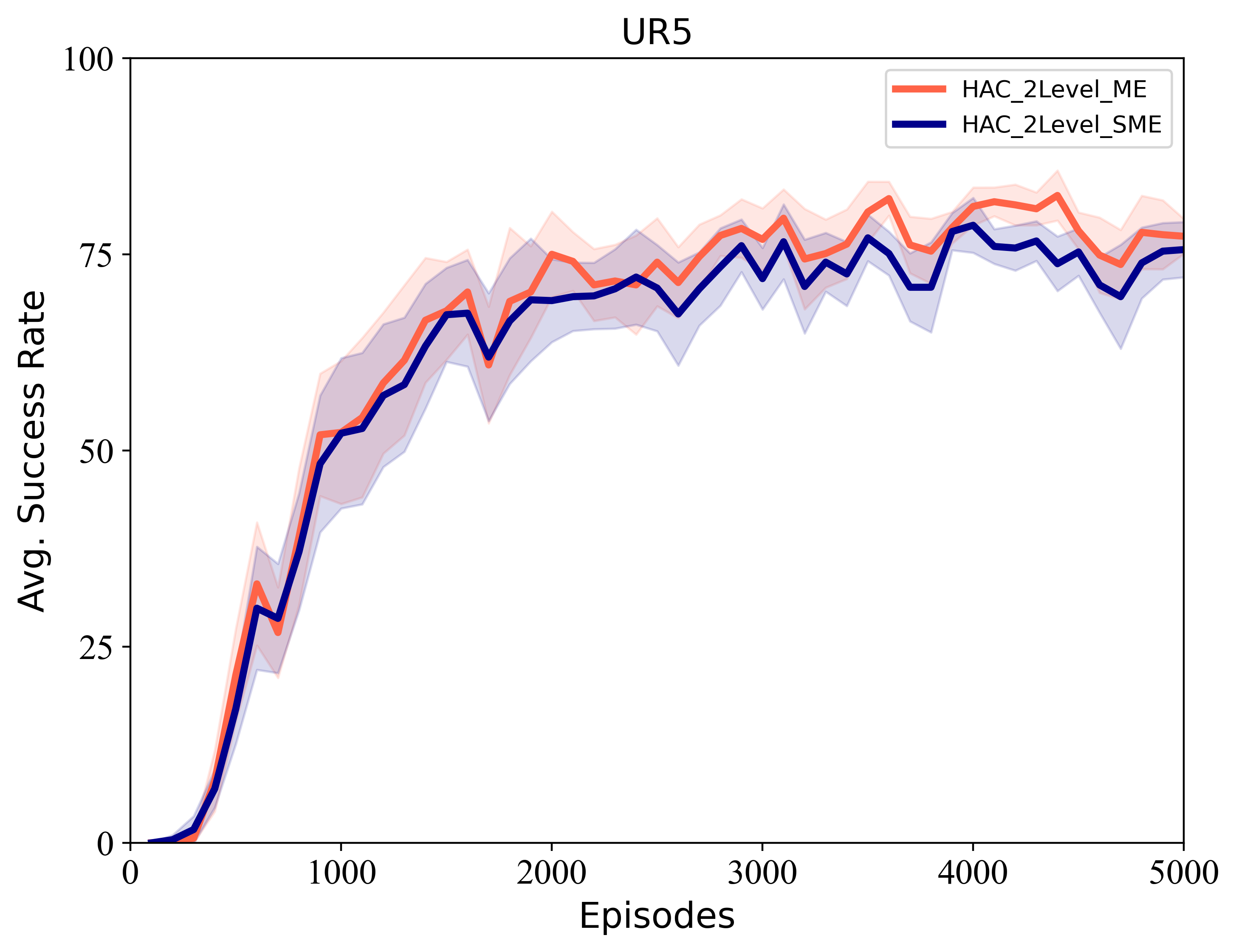}
\phantomsubcaption
\label{fig:UR5}
\end{subfigure}
\begin{subfigure}{1\linewidth}
\centering
\includegraphics[scale=0.4]{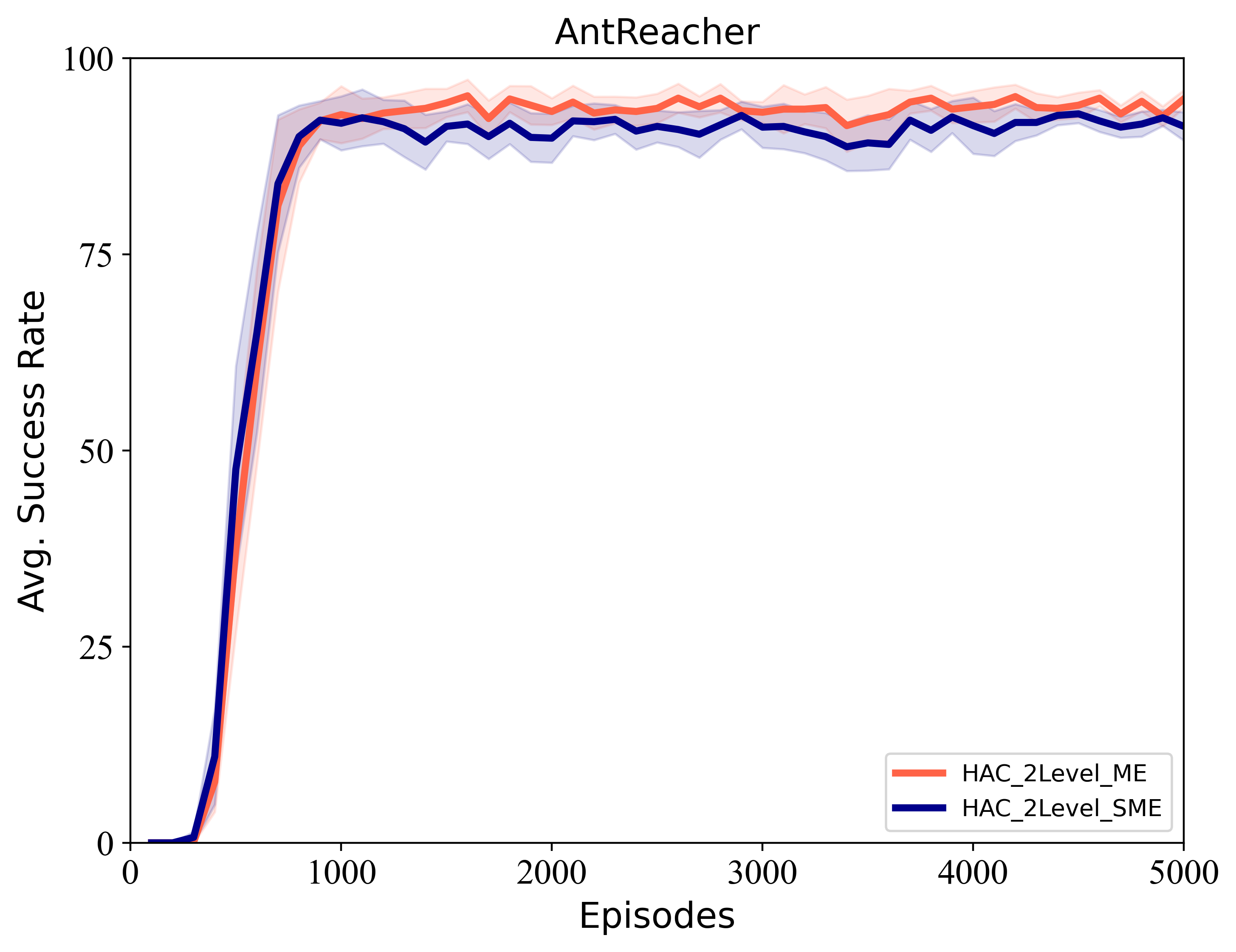}
\phantomsubcaption
\label{fig:Antreacher}
\end{subfigure}
\begin{subfigure}{1\linewidth}
\centering
\includegraphics[scale=0.4]{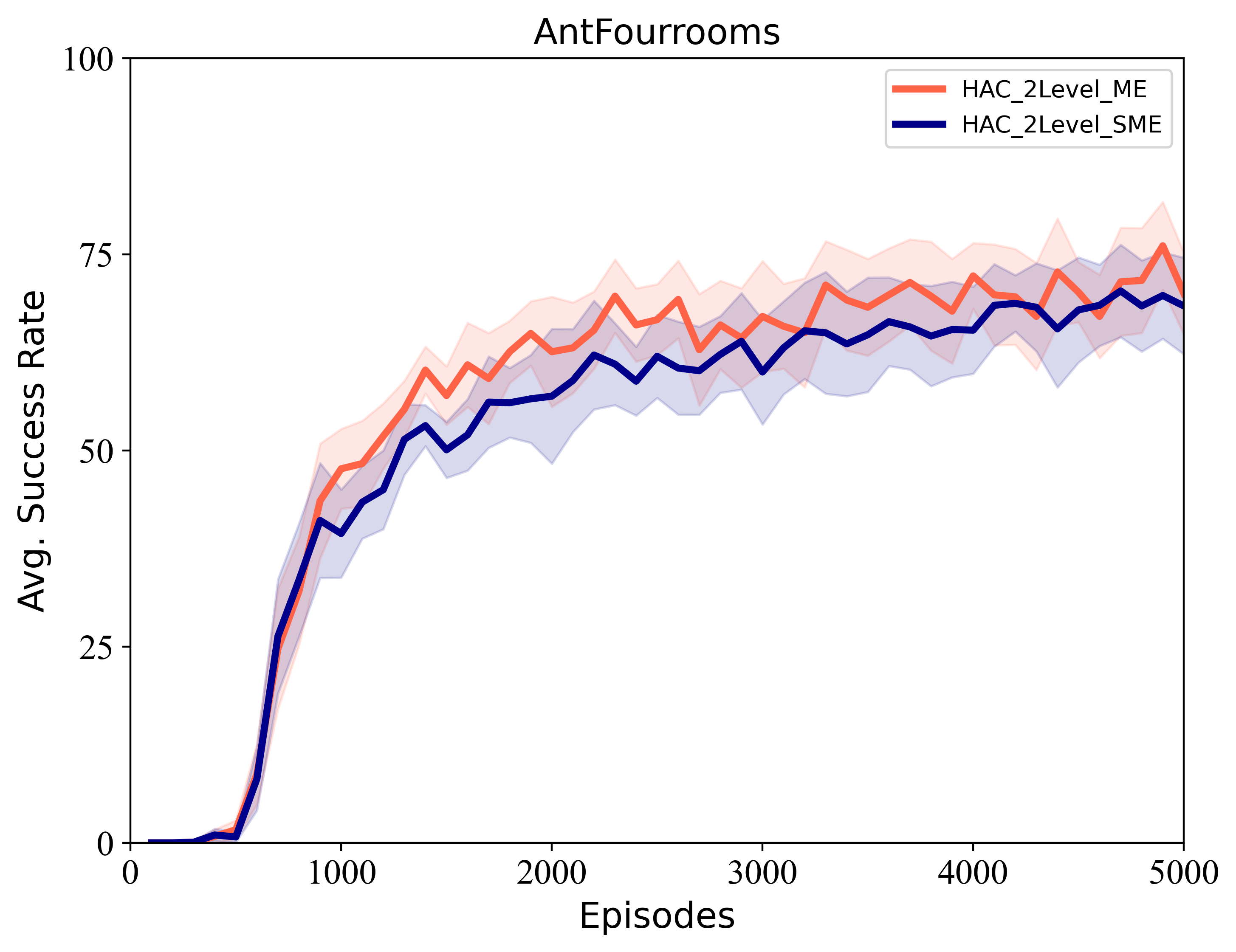}
\phantomsubcaption
\label{fig:Antfourrooms}
\end{subfigure}
\caption{Comparisons for HAC under SME and ME: (a) UR5 (top), (b) AntReacher (middle), and (c) AntFourrooms (bottom).}
\label{fig:HAC}
\end{figure}
Figures~\ref{fig:UR5}, \ref{fig:Antreacher}, and \ref{fig:Antfourrooms} show the same-policy execution comparison in UR5, AntReacher, and AntFourrooms, respectively.
To explore the performance difference of the same policy under SME and ME in the environment without random transition events, we deploy the HAC algorithm in the original environments of its paper: UR5, AntReacher, and AntFourrooms. And the results are shown in~\cref{fig:HAC}. We observe that the performance of ME is as least as good as SME's. The opposite conclusion may be obtained in other algorithms. Not all algorithms can improve performance after switching the execution mode from SME to ME at no cost due to their possible defects, especially when the higher-level policy is suboptimal and has not yet converged to the optimum. At this point, the conditions required by the Optimal Execution Theorem are not satisfied. Instead of directly adopting ME based on the theorem, one should design a new algorithm according to the Execution Improvement Theorem to learn the optimal interruption function for the deployment policy. Our framework provides a principled foundation for further potential improvements: the execution modes of the target policy, behavior policy, and deployment policy are mutually independent.
\par
\subsection{Hyperparameters}
For HAC, we follow the open-source implementation of the author, which is available at \url{https://github.com/andrew-j-levy/Hierarchical-Actor-Critc-HAC-}. All the experimental results are averaged using $10$ random seeds.
For OC, all the experimental results are averaged using $350$ random seeds. The key training hyperparameters are as follows:
\begin{itemize}[leftmargin=0.3cm]
    \item the discount factor $\gamma = 0.99$.
    \item the learning rate of termination function $\alpha_{term} = 0.25$.
    \item the learning rate of intra-option policy  $\alpha_{intra} = 0.25$.
    \item the learning rate of critic $\alpha_{critic} = 0.5$.
    \item the learning rate of higher-level policy $\alpha_{q} = 0.2$.
    \item the number of options $\omega = \{4,8\}$.
\end{itemize}
\subsection{Extra results and limitations}
We visualize the results of deploying the HAC algorithm in the AntReacher environment under SME and ME in~\cref{fig:visual} and found evident disadvantages of SME. When a random transition event occurs at $t=10$, the ant shows \emph{stagnant} until $t=23$ under SME. This result may be because the large-scale coordinate offset prevents the lower-level policy from making effective actions according to the current state and subgoal. However, under ME, the higher-level policy adjusts the subgoal in time so that the task of the lower-level policy is still moving towards the attainable subgoal.

\FloatBarrier
\end{document}